\newtheorem{thm}{Theorem}
\newtheorem{lemma}{Lemma}
\newtheorem{cor}{Corollary}
\newtheorem{definition}{Definition}
\newtheorem{remark}{Remark}
\DeclareMathOperator*{\argmax}{arg\,max}
\DeclareMathOperator*{\argmin}{arg\,min}
\newtheorem{assumption}{Assumption}
\newtheorem{problem}{Problem}
\newenvironment{mcases}[1][l]
 {\let\@ifnextchar\new@ifnextchar
  \left\lbrace
  \array{@{}l@{\quad}#1@{}}}
 {\endarray\right.}
\DeclarePairedDelimiter{\ceil}{\lceil}{\rceil}
\title{Learning to Abstain From Uninformative Data}
\author{\name Yikai Zhang \email Yikai.Zhang@morganstanley.com \\
      \addr Morgan Stanley
      \AND
      \name Songzhu Zheng \email Songzhu.Zheng@morganstanley.com \\
      \addr Morgan Stanley
      \AND
      \name Mina Dalirrooyfard \email Mina.Dalirrooyfard@morganstanley.com\\
      \addr Morgan Stanley
      \AND
      \name Pengxiang Wu \email pxiangwu@gmail.com\\
      \addr Snap Inc.
      \AND
      \name Anderson Schneider \email Anderson.Schneider@morganstanley.com\\
      \addr Morgan Stanley
      \AND
      \name Anant Raj \email Anant.Raj@morganstanley.com\\
      \addr Morgan Stanley
      \AND
      \name Yuriy Nevmyvaka \email Yuriy.Nevmyvaka@morganstanley.com\\
      \addr Morgan Stanley
      \AND
      \name Chao Chen \email chao.chen.1@stonybrook.edu\\
      \addr Stony Brook University
    }
\begin{document}

\maketitle
\begin{abstract}

Learning and decision-making in domains with naturally high noise-to-signal ratios – such as Finance or Healthcare – is often challenging, while the stakes are very high. 
In this paper, we study the problem of learning and acting under a general noisy generative process. In this problem, the data distribution has a significant proportion of uninformative samples with high noise in the label, while part of the data contains useful information represented by low label noise. This dichotomy is present during both training and inference, which requires the proper handling of uninformative data during both training and testing. We propose a novel approach to learning under these conditions via a loss inspired by the selective learning theory. By minimizing this loss, the model is guaranteed to make a near-optimal decision by distinguishing informative data from uninformative data and making predictions.  We build upon the strength of our theoretical guarantees by describing an iterative algorithm, which jointly optimizes both a predictor and a selector, and evaluates its empirical performance in a variety of settings.
\end{abstract}

\section{Introduction}

Despite the success of machine learning (ML) in computer vision \citep{krizhevsky2009learning, he2016deep,huang2017densely} and natural language processing \citep{vaswani2017attention, devlin2018bert}, the power of ML is yet to make similarly-weighty impact in other areas such as Finance or Public Health. One major challenge is the inherently high noise-to-signal ratio in certain domains. 
In financial statistical arbitrage, for instance, the spread between two assets is usually modeled using  Orstein-Uhlembeck processes~\citep{oksendal2003stochastic,avellaneda2010statistical}. Spreads behave almost randomly near zero and are naturally unpredictable. They become predictable in certain rare pockets/scenarios: for example, when the spread exceeds a certain threshold, it will move towards zero with a high probability, making arbitrage profits possible. In cancer research, due to limited resources, only a small number of the most popular gene mutations are routinely tested for differential diagnosis and prognosis. However, because of the long tail distribution of mutation frequencies across genes, these popular gene mutations can only capture a small proportion of the relevant list of driver mutations of a patient \citep{reddy2017genetic}. For a significant number of patients, the tested gene mutations may not be in the relevant list of driver mutations and their relationship w.r.t.~the outcome may appear completely random.  Identifying these patients automatically will justify additional gene mutation testing.

These high noise-to-signal ratio datasets pose new challenges for learning. New methods are required to deal with the large fraction of uninformative/high-noise data in both the training and testing stages. The source of uninformative data can be either due to the random nature of the data-generating process or due to the fact that the real causing factors are missing from the data. Direct application of standard supervised learning methods to such datasets is both challenging and unwarranted. Deep neural networks are even more affected by the presence of noise, due to their strong memorization power \citep{ICLR2017_Zhang_noise}: they are likely to overfit the noise and make overly confident predictions where weak/no real structure exists.




In this paper, we propose a novel method for learning on datasets where a significant portion of content has high noise. Instead of forcing the classifier to make predictions for every sample, we learn to first decide whether a datapoint is informative or not. Our idea is inspired by the classic selective prediction problem  \citep{First_1957_Chow}, in which one learns to select a subset of the data and only predict on that subset. However, the goal of selective prediction is very different from ours. A selective prediction method pursues a balance between coverage (i.e. proportion of the data selected) and conditional accuracy on the selected data, and does not explicitly model the underlying generative process. In particular, the aforementioned balance needs to be specified by a human expert, as opposed to being derived directly from the data. In our problem, we assume that uninformative data is an integral part of the underlying generative process and needs to be accounted for. By definition, no learning method, no matter how powerful, can successfully make predictions on \textit{uninformative} data. Our goal is therefore to identify these uninformative/high noise samples and, consequently, to train a classifier that is less influenced by the noisy data. 



Our method learns a \emph{selector}, $g$, to approximate the optimal indicator function of informative data, $g^\ast$. We assume that $g^\ast$ exists as a part of the data generation process, but it is never revealed to us, even during training. Instead of direct supervision, we therefore must rely on mistakes made by the predictor in order to train the selector. To achieve this goal, we propose a novel \emph{selector loss} enforcing that (1) the selected data best fits the predictor, and (2) in the portion of the data where we abstain from forecasting, the predictor's performance is similar to random chance. The resulting loss function is quite different from the loss in classic selective prediction, which penalizes all unselected data equally.



We theoretically analyze our method  under a general noisy data generation process, imposing an additional 
structure on the standard data-dependent label noise model~\citep{massart2006risk,hanneke2009theoretical}. We distinguish informative versus uninformative data via a gap in the label noise ratio. A major contribution of this paper is the derivation of theoretical guarantees for the selector loss using empirical risk minimization (ERM). A minimax-optimal sample complexity bound for approximating the optimal selector is provided. We show that optimizing the selector loss can recover nearly all the informative data in a PAC fashion \citep{valiant1984theory}. This guarantee holds even in a challenging setting where the uninformative data has purely random labels and dominates the training set. 
By leveraging the estimated selector, one can further pick out the informative subset of the training samples. We prove that the classifier produced through subset-risk minimization exhibits a superior upper bound on risk compared to a conventional classifier trained using empirical risk minimization (ERM) on the complete training dataset.



The theoretical results generalize the method to a more realistic setting where the sample size is limited. Furthermore, the initial predictor is not sufficiently close to the ground truth. Our method yields 
an iterative algorithm, in which both the predictor and the selector are progressively optimized. The selector is improved by optimizing our novel selector loss. Meanwhile, the predictor is strengthened by optimizing the empirical risk: re-weighted based on the output from the selector, where uninformative samples identified by the selector are down-weighed. Experiments on both synthetic and real-world datasets demonstrate the merit of our method compared to existing baselines.  

\section{Related work}\label{sec_related_work}

\textbf{Learning with untrusted data} aims to recover the ground truth model from a partially corrupted dataset.  Different noise models have been studied, including random label noise  \citep{bylander1994learning, natarajan2013learning}, Massart Noise~\citep{massart2006risk,awasthi2015efficient,hanneke2009theoretical,hanneke2015minimax,yan2017revisiting,diakonikolas2019distribution,diakonikolas2020learning,zhang2021improved} and adversarial noise \citep{kearns1993learning, kearns1994toward, kalai2008agnostically, klivans2009learning,awasthi2014power}. Our noise model is similar to General Massart Noise ~\citep{massart2006risk,hanneke2009theoretical,diakonikolas2019distribution}, where the label noise is \textit{data-dependent}, and the label can be generated via a purely random coin flipping. The major distinction in our noisy generative model is the existence of some uninformative data with high  noise in label compared to informative data with low  noise in label. We characterize such uninformative/informative data structure via a non-vanishing label noise ratio gap. While there exists a long history of literature studying training classifiers with noisy label~\citep{thulasidasan2019combating}, we are the first to investigate learning a model robust against label noise at inference stage. We study the case where uninformative samples are an integral part of the generative process and thus will appear during inference stage as well, where they must be discarded if detected. We view this as a realistic setup in industries like Finance and Healthcare.


\textbf{Selective learning} is an active research area~\citep{First_1957_Chow,chow1970optimum,noisefree_ro_2010_JMLR,kalai2012reliable,nan2017adaptive,ni2019calibration,acar2020budget,gangrade2021online} that expands on the classic selective prediction problem.  It focuses on how to select a subset of data for different learning tasks, and has also been further generalized to other problems, e.g., learning to defer human expert~\citep{madras2018predict,mozannar2020consistent}. We can approximately classify existing methods into 4 categories: Monte Carlo sampling based methods \citep{BD_dropout_2016_ICML, BS_sampling_2017_NIPS, MAP_2020_AISTATS}, margin based methods \citep{SVM_ro_2002_IWSVM, hinge_2008_JMLR, SVM_ro_2008_NeurIPS, SVM_ro_2011_Bernoulli, bent_2018_ASA}, confidence based methods  \citep{Ag_ro_2011_NeurIPS, SR_2017_NeurIPS, trustscore_2018_NeurIPS} and customized selective loss  \citep{Mohri_2016_ICALT, selectNet_2019_ICML, gambler_2019_NeurIPS, gangrade2021selective}. Notably, several works propose customized losses, and incorporate them into neural networks. In \citep{selectNet_2019_ICML}, the network maintains an extra output neuron to indicate rejection of datapoints. \cite{gambler_2019_NeurIPS} introduces the Gambler loss where a cost term is associated with each output neuron and a doubling-rate-like loss function is used to balance rejections and predictions. \cite{thulasidasan2019combating} also applies an extra output neuron for identifying  noise label to improve the robustness in learning. \cite{huang2020adaptive} adopts a progressive label smoothing method which prevents DNN from  overfitting and improves selective risk when applied to selective classification task. \cite{Mohri_2016_ICALT} performs data selection with an extra model and introduces a selective loss that helps to maximize the coverage ratio, thus trading off a small fraction of data for better precision. Sharing a similar spirit with ~\citep{kalai2012reliable}, \citep{gangrade2021selective} applies a one-sided prediction method to model a high confidence region for each individual class, and maximizes coverage while maintaining a low risk level. 

Existing works on selective prediction are all motivated by the trade off between accuracy and coverage - i.e. one wants to make confident predictions to achieve higher precision while maintaining a reasonable recall. To the best of our knowledge, our paper is the first to investigate the case where some (or even the majority) of the data is uninformative, and thus must be discarded at test time. Unlike the selective prediction, our framework considers a latent never-revealed ground truth indicator function about whether a data point should be selected or not. Our method is guaranteed to identify those uninformative samples. 
\vspace{-.3em}
\section{Problem formulation}\label{Problem Formulation}
\vspace{-.3em}


In this section, we describe the framework for the inherently noisy data generation process that we study. 

\begin{definition}[Noisy Generative Process]\label{def:Non_real_Stochasticgen}Let $\alpha \in (0,1)$ be a problem-dependent constant and $\Omega_{\mathcal{D}} \subseteq \mathbb{R}^d$ be the support of $\mathcal{D}_\alpha$ below. 
We define \emph{Noisy Generative Process} by the following notation $\boldsymbol{x}\sim \mathcal{D}_\alpha$ where 
\vspace{-.3em}
\begin{equation}\label{eq_noise_1}
\mathcal{D}_\alpha \equiv
\begin{mcases}[ll@{\ }l]
  \boldsymbol{x} \sim \mathcal{D}_U & \text{with prob. } 1-\alpha  & \textbf{ (Uninformative)} \\
    \boldsymbol{x} \sim \mathcal{D}_I & \text{with prob. } \alpha & \textbf{ (Informative)}.
\end{mcases}
\end{equation}
Given $\mathcal{X} \subseteq \mathbb{R}^d$, let the ground truth labeling function $f^*: \mathcal{X} \rightarrow \{+1,-1\}$ be in hypothesis class $ \mathcal{F}$.  Suppose $\{\Omega_{U},\Omega_{I}\}$ is a  partition of $\Omega_{\mathcal{D}}$. Let $\lambda (\boldsymbol x) \in (\bar{\lambda},\frac{1}{2}]$ with $\bar{\lambda}>0$, the latent informative/uninformative status $z \in \{+1,-1\}$ has posterior distribution:
\begin{equation}\label{eq_noise_2}
     \mathbb{P}[z=1 | \boldsymbol x] \equiv
    \begin{cases}
      \frac{1}{2}-\lambda (\boldsymbol x), & \text{if }   \boldsymbol{x}\in \Omega_{U} \\
      \frac{1}{2} + \lambda (\boldsymbol x), & \text{if }   \boldsymbol{x}\in \Omega_{I}.
    \end{cases}
    \vspace{-.5em}
\end{equation}

The observed data $(\boldsymbol x, y)$ is generated according to:
\vspace{-.3em}
\begin{equation}\label{eq_noise_3}
\begin{aligned}
      &\boldsymbol{x}\sim \mathcal{D_\alpha};\\
      & z\sim \mathbb{P}[z|\boldsymbol x];\\
      &  
    \begin{cases}
      y \sim \text{Bernoulli}(0.5), & \text{if }  z=-1 \\
      y = f^*(\boldsymbol{x}), & \text{if }   z=1.
    \end{cases}
\end{aligned}
\end{equation}


\end{definition}
\vspace{-.2em}
Since $\lambda(\boldsymbol x)>0$, $\boldsymbol x$ from $\Omega_U$ has a lower probability to be observed with true label compared to $\Omega_I$, thus it can be viewed as uninformative data in a relative sense. On the contrary, $x$  from $\Omega_I$  can be viewed as informative data.  Our Noisy Generative Process follows standard data-dependent label noise, e.g., Massart Noise~\citep{massart2006risk} and Benign Label Noise ~\citep{hanneke2009theoretical,hanneke2015minimax,diakonikolas2019distribution} with label noise ratio $\frac{1}{4} - \frac{g^*(\boldsymbol x)\lambda(\boldsymbol x)}{2}$. Indeed, one can always choose $\lambda(\boldsymbol x) \in [0,\frac{1}{2}]$ and $\alpha \in [0,1]$ to replicate General Massart noise.  Compared to classical label noise models, the assumption $\lambda(\boldsymbol x)>\bar{\lambda}$  introduces a label noise ratio gap, which distinguishes the informative and uninformative data. In Equation~\ref{eq_noise_3}, the $Bernoulli(0.5)$ label noise serves as a proxy for “white noise” in label corruption. When $\lambda(\boldsymbol x)=\frac{1}{2}$ and $ \boldsymbol x \in \Omega_U$,  $Bernoulli(0.5)$ random label noise can be viewed as the strongest known non-adversarial label noise, of both theoretical and practical interest~\citep{diakonikolas2019distribution}. Such $Bernoulli(0.5)$ random label noise could happen when hard-to-classify examples are shown to human annotators~\citep{klebanov2010some}, or when fluctuations in financial markets closely resemble a random walk~\citep{tsay2005analysis}.


A typical setting that is studied in this work is the case that both  values of $1-\alpha$ and $\lambda(x)$ are non-vanishing, i.e., there is a significant fraction of uninformative data (large $1-\alpha$) and the label noise ratio gap is distinguishable between informative and uninformative data (large $\bar{\lambda}$). 



The next definition describes a recoverable condition of the optimal function for the  latent informative/uninformative status $z$. 
\begin{definition}[$\mathcal{G}$-realizable]\label{def:sep}
Given support $\Omega$ and $\lambda (\boldsymbol x) \in ( \bar{\lambda},\frac{1}{2}]$, let the posterior distribution of $z$ be defined in Equation~\eqref{eq_noise_2}.
We say  $\Omega$  is $\mathcal{G}$-realizable if there exists $g^*\in \mathcal{G}: \mathcal{X} \rightarrow \{+1,-1\}$ satisfying $g^*(\boldsymbol{x}) = 2\mathbbm{1} \{\mathbb{P} [z=1| \boldsymbol x] > \frac{1}{2}\}-1.$
\end{definition}
\vspace{-.2em}
Ideally, one would want to select all informative samples where signal dominates noise. This can be done via recovering $g^*(\cdot)$, which we view as the \emph{ground truth  selector} we wish to recover. The $\mathcal{G}$-realizable condition is analogous to the realizability condition~\citep{massart2006risk,hanneke2015minimax} in the classical label noise problem. The major difference and challenge in recovering $g^*(\cdot)$ compared to learning a classifier, is that there is no direct observation on the informative/non-informative status $z$. The major contribution of this work is proposing a natural selector risk which  recovers $g^*(\cdot)$ without observing the latent variable $z$.






Having introduced the data generation process, we now describe the learning task:

\begin{assumption}\label{assume:1}
Data $S_n = \{\boldsymbol{x}_i,y_i\}^n_{i=1}$ is i.i.d generated according to the \emph{Noisy Generative Process} (Definition~\ref{def:Non_real_Stochasticgen}), with $f^* \in \mathcal{F}$ and support $\Omega$ satisfies $\mathcal{G}$-realizable condition. 
\end{assumption}


Given the above assumption, we are interested in the following learning task:
\begin{problem}[Abstain from Uninformative Data] 
Under Assumption~\ref{assume:1} with  i.i.d observations from $\mathcal{D}_\alpha$,  we aim to learn a selector $\widehat{g} \in \mathcal{G}$ that is close to $g^*(\boldsymbol{x})$ and predictor $\widehat f$ with low selective risk.
\label{prob:abstain}
\end{problem}

\begin{figure*}[h]
\vspace{-1.1em}
\centering
\begin{tabular}{ccc}
    \vspace{-1em}
     \includegraphics[scale=0.29]{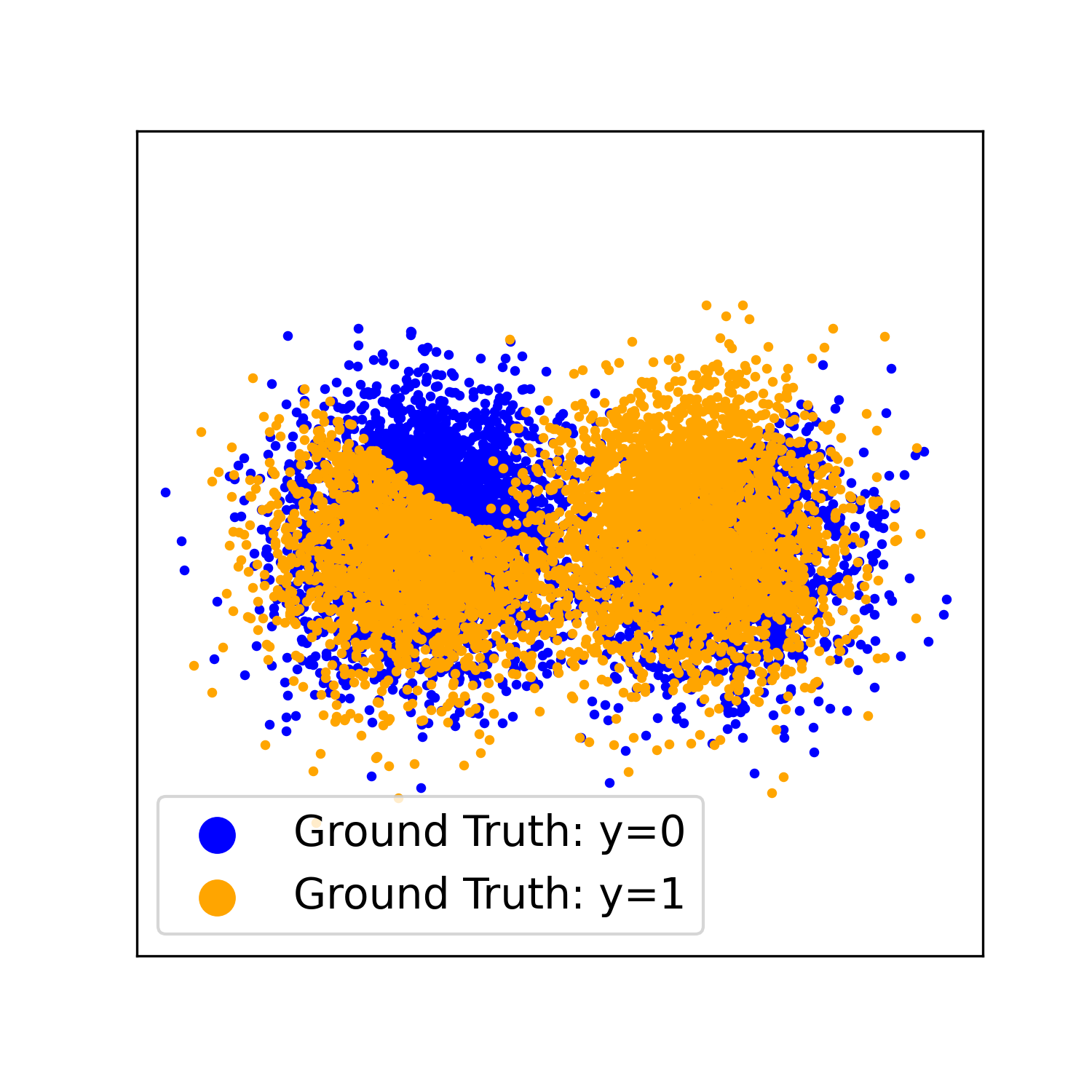}&
     \includegraphics[scale=0.29]{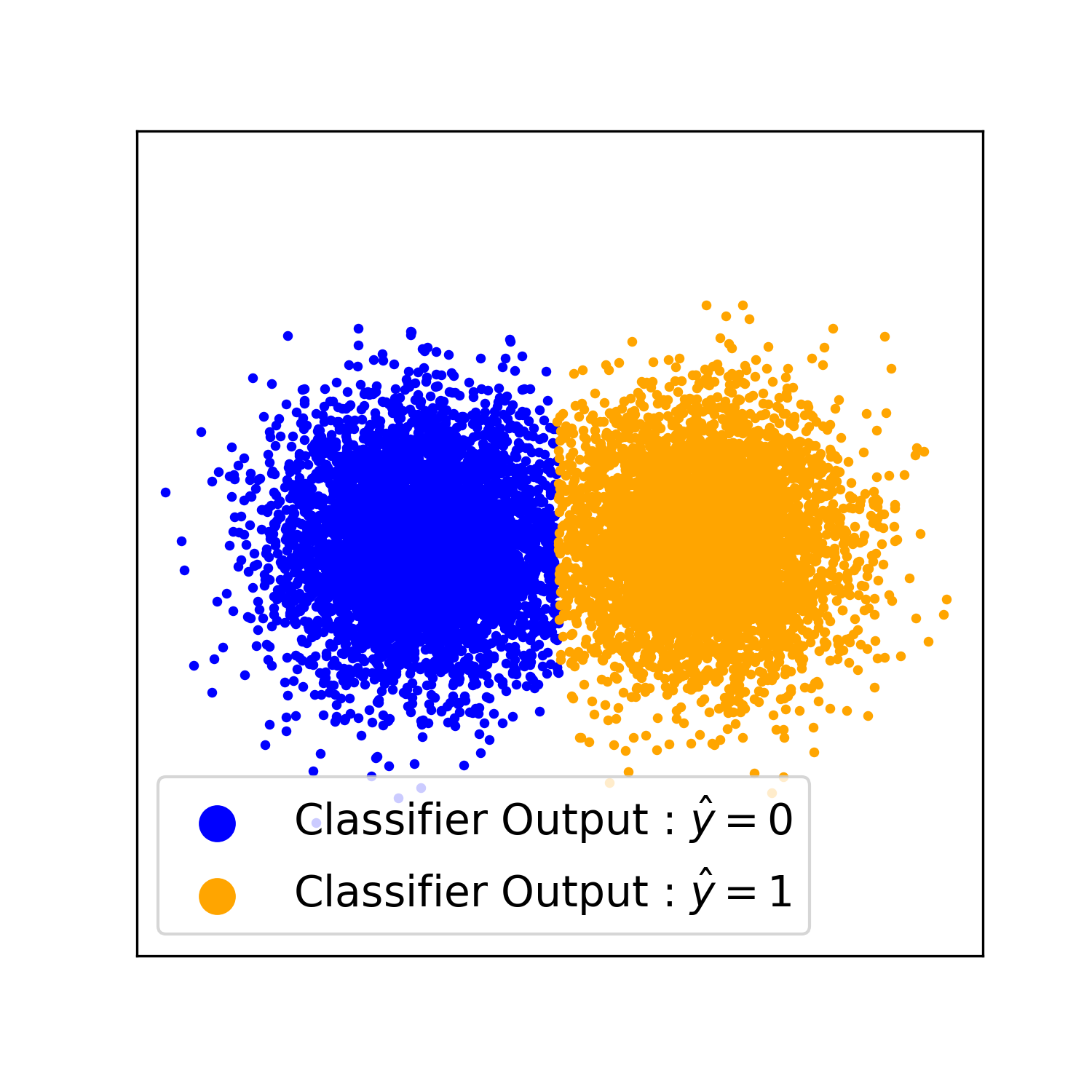}&
     \includegraphics[scale=0.29]{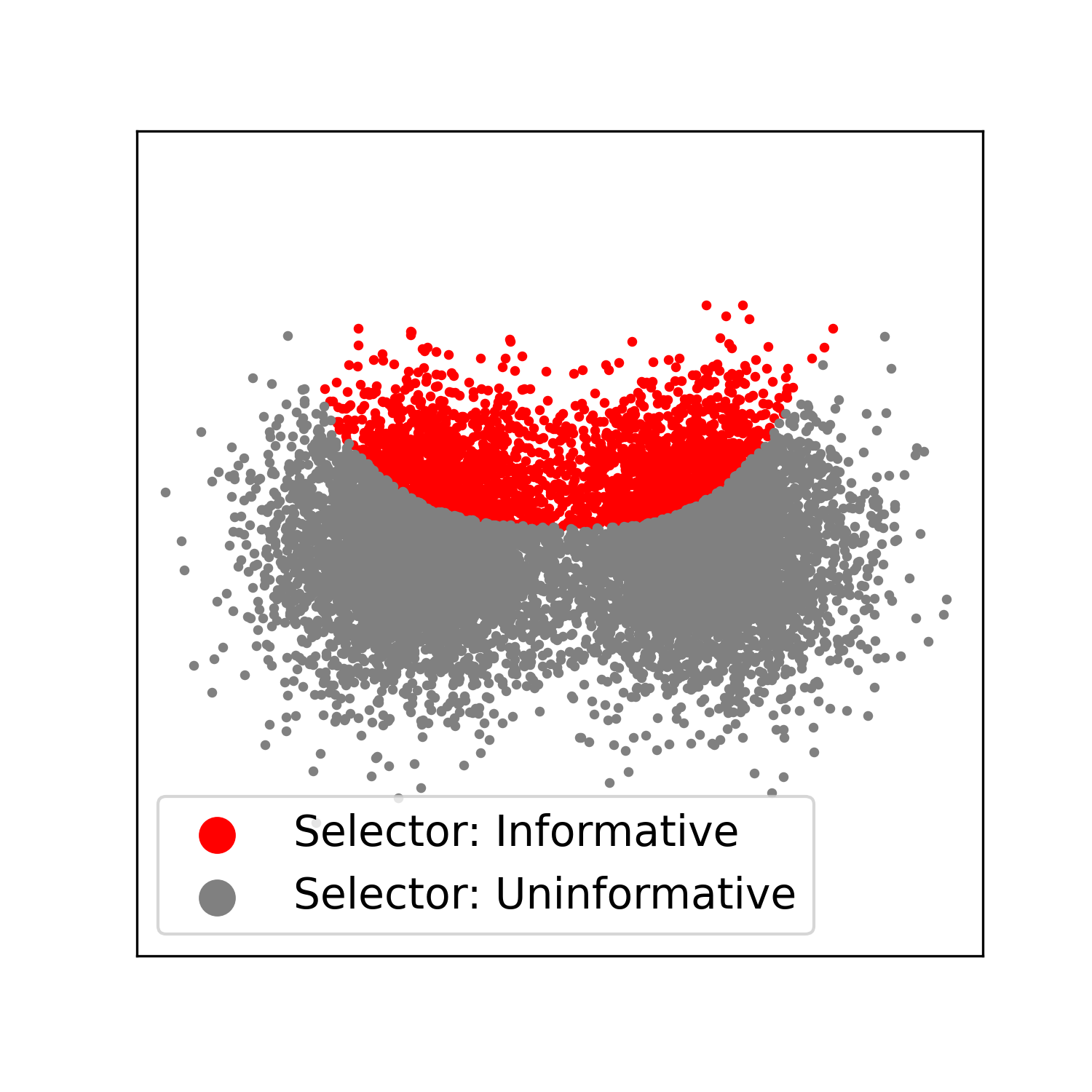}\\[-6pt]
     \vspace{-.3em}
     \includegraphics[scale=0.29]{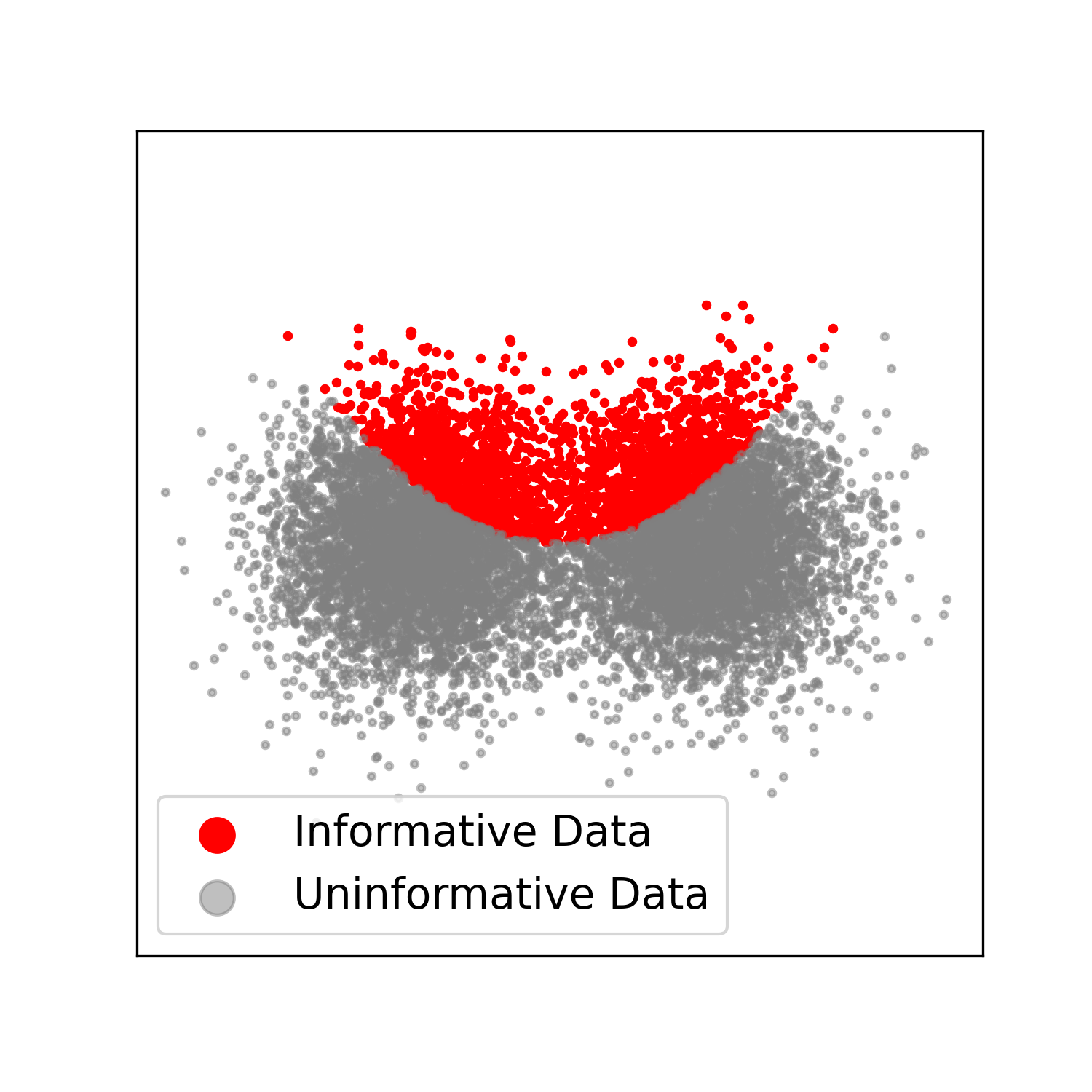}& 
     \includegraphics[scale=0.29]{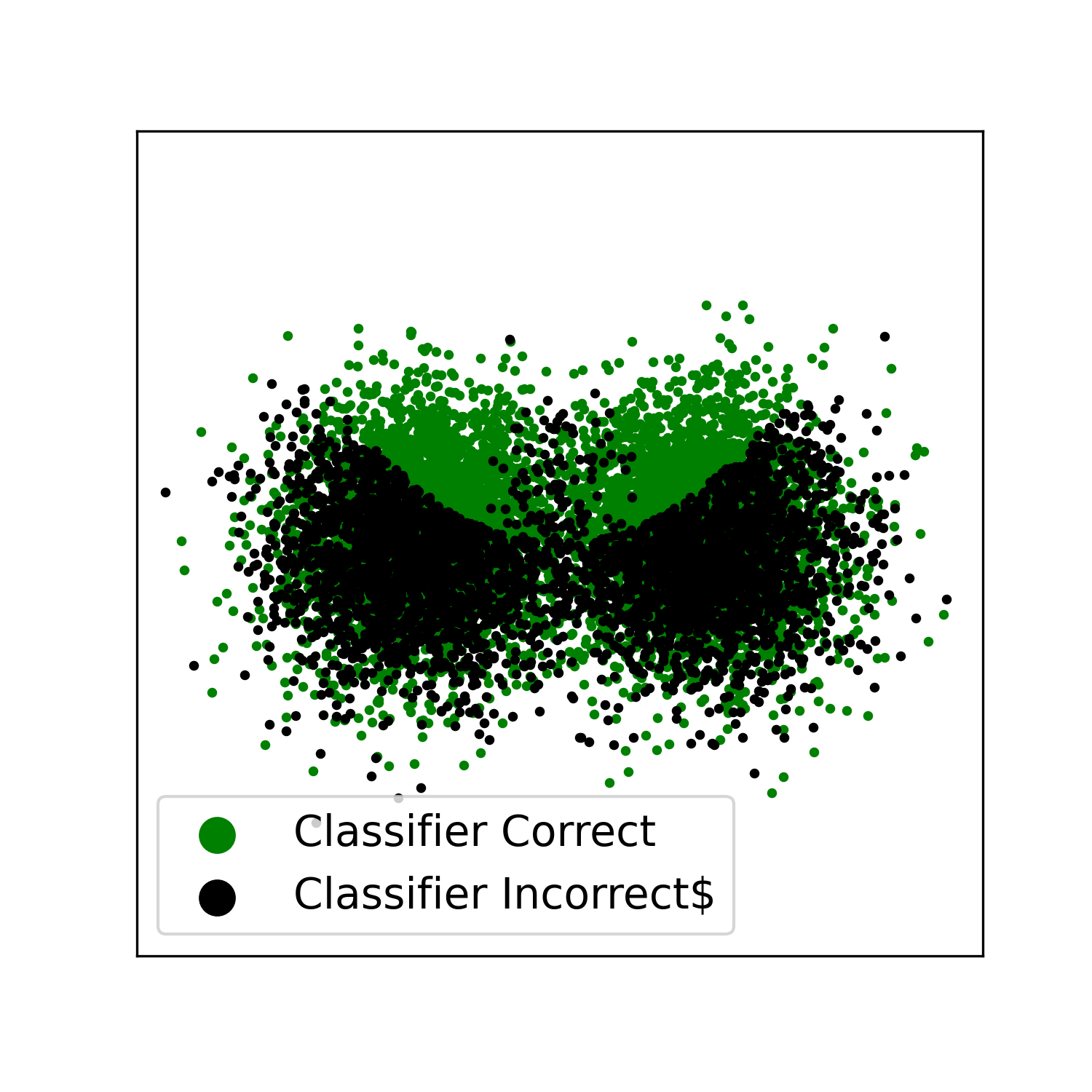}&
     \includegraphics[scale=0.29]{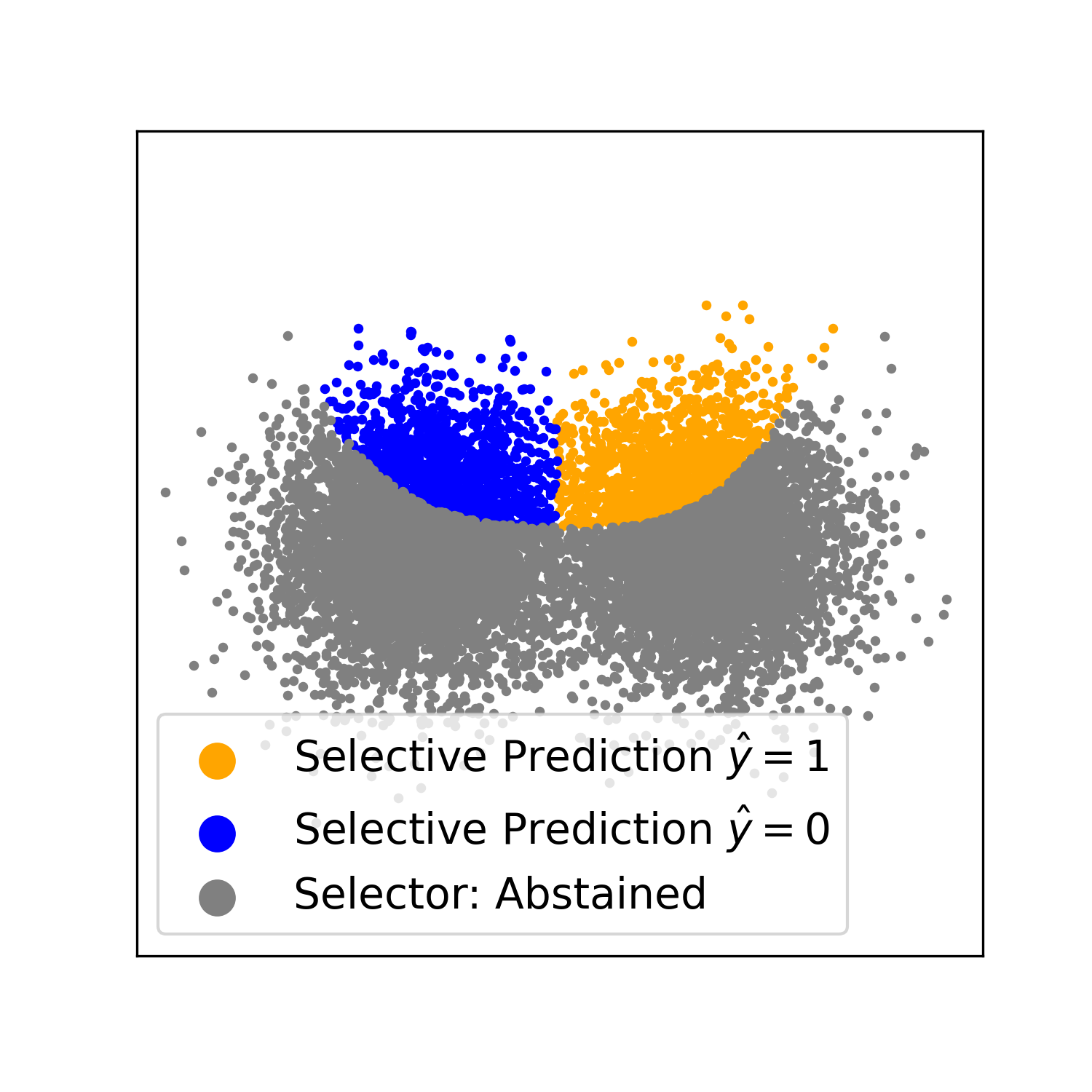}\\[-5pt]
     \vspace{-.2em}
     \shortstack{(a) Inform/Uninformative Data} & 
     \shortstack{(b) Correct/Incorrect classification} &
     \shortstack{(c) Selective classification}
\end{tabular}
\caption{Illustration of the learning strategy  when $x$ is distributed according to a Gaussian mixture. We replace the $0$-$1$ loss with hinge loss and train SVM models for $f$ and $g$. (a) upper panel shows the original dataset and bottom panel shows the region of informative (easy) and uninformative (hard) data. (b) shows that the classifier has high accuracy in the informative region, but low accuracy in the uninformative region. In (c), the selector trained with $\widehat f$ successfully recovers informative support thus resulting in low selective risk, and we abstain from making a prediction elsewhere.
}\label{fig:illus1}
\end{figure*}

\vspace{-.5em}
\section{Our method}
\vspace{-.5em}
In this section, we present our approach for learning and abstaining in the presence of uninformative data (Problem \ref{prob:abstain}). The main challenge is that the latent informative/uninformative status of a datapoint is unknown. Our main idea is to introduce a novel yet natural \emph{selector loss} function that trains a selector based on the performance of the best predictor (Section \ref{sec:selector-loss}). In Section \ref{sec:theory}, we present our main theoretical result. We show that, given any reasonably good classifier, by finding a selector minimizing the proposed selector loss, we can solve Problem \ref{prob:abstain} with minimax-optimal sample complexity. Inspired by the theoretical results, in Section \ref{sec:heuristic-alg}, we propose a heuristic algorithm that iteratively optimizes the predictor and the selector.




\subsection{Selector loss}
\vspace{-.2em}
\label{sec:selector-loss}
In an idealized setting, when access to latent informative/uninformative variables $\{(\boldsymbol x_i,z_i)\}_{i=1}^{n}$ is available, recovering $g^*$ shares a similar spirit with learning a classifier under label noise. It suffices to minimize the following classical classification risk : 
\vspace{-.3em}
\begin{equation}\label{pseudo_risk}
    \textit{Non-Realizable Risk}(g;S_n) \equiv \sum_{i=1}^{n} \mathbbm{1}\{g(\boldsymbol{x}_i) \neq z_i\}
\end{equation}
\vspace{-.1em}
However, in practice $z$ is never revealed. To learn a selector without direct supervision, we have to leverage the performance of a predictor $f$. We propose to replace $z$ in the Equation~\ref{pseudo_risk} with a \emph{pseudo-informative label} $\mathbbm{1}\{f(\boldsymbol x)\neq y\}$, which has randomness coming from $z$ and noisy label $y$.
\begin{definition} [Selector Loss]\label{selectorLoss_main}
Given $f \in \mathcal{F}$ and its selector $g \in \mathcal{G}$, we define the following empirical version of  weighted $0$-$1$ type risk w.r.t $g(\cdot)$ as selector risk:
\vspace{-.1em}
\begin{equation}
\begin{aligned}
R_{S_n}(g;f,\beta) \equiv &\sum_{i=1}^{n} \bigg\{\beta \mathbbm{1} \{f(\boldsymbol{x}_i) \neq y_i\} \mathbbm{1}\{g(\boldsymbol{x}_i) >0\} 
+ \mathbbm{1}\{f(\boldsymbol{x}_i)=y_i\} \mathbbm{1}\{g(\boldsymbol{x}_i) \leq 0\}\bigg\}
\end{aligned}
\vspace{-.5em}
\end{equation}
where $S_n =\{(\boldsymbol x_i,y_i)\}_{i=1}^{n}$ defined in Assumption \ref{assume:1}.
\end{definition}
 The selector loss is also a natural metric to evaluate the quality of the selector. This loss penalizes when (1) the predictor makes a correct prediction on a datapoint that the selector considers uninformative and abstains from, or (2) the predictor makes an incorrect prediction when the selector considers informative. 
 Intuitively, the loss will drive the selector to partition the domain into informative and uninformative regions.
 Within the informative region, the predictor is supposed to fit the data well, and should be more accurate. Meanwhile, within the uninformative region, the label is random and the predictor is prone to make mistakes. 

Note that there are two types of errors penalized in the selector loss: an incorrect prediction on a selected datapoint, $(f(\boldsymbol{x}) \neq y) \wedge (g(\boldsymbol{x})>0)$, and a correct prediction on an unselected datapoint, $(f(\boldsymbol{x}) = y) \wedge (g(\boldsymbol{x})\leq 0)$. Since the label noise is non-adversarial,  $y$ tends to have higher probability of coincidence with $f^*(\boldsymbol x)$, introducing imbalance on the pseudo-informative label. We  thus use  $\beta$ to weigh these two types of errors in the loss. An analysis can be found in section~\ref{basic} on the choice of $\beta$. Our theoretical analysis suggests that, for a wide range of $\beta$, the accuracy of the selector is guaranteed. Our experiments 
also show  stability with regard to these choices. 
  


\textbf{Learning a selector with the novel loss.}
To learn a selector, one can follow standard procedures e.g., empirical risk minimization (ERM), to get a predictor $\widehat f$ with reasonable quality. The selector can be estimated by minimizing the selector loss \mbox{$\widehat g  = \argmin_{g \in \mathcal{G}} R_{S_n}(g, \widehat f,\beta)$}, conditioned on the estimated predictor $\widehat f$.

In Figure~\ref{fig:illus1}, we show an example of using the ERM strategy using SVM with $0$-$1$ loss replaced by hinge loss. In this case, the losses are all convex and the empirical minimizers $\widehat f$ and $\widehat g$ can be computed exactly. 

In practice, however, empirical minimization is not always possible, as optimization for general complex models (e.g., DNNs), resulting in non-convex losses, remains an opened problem.
We therefore propose a heuristic algorithm in the spirit of our theoretical results: it jointly learns $f$ and $g$ by minimizing the selector loss and a reweighed classification risk iteratively (see Section \ref{sec:heuristic-alg}). 

\section{Theoretical results}\label{sec:theory}

In this section we present our main theoretical results. The main result can be summarized in the following (informal) statement. 

\textbf{Main Result (Informal)} \label{thm:informal}
For any reasonably good predictor $\widehat f$, with sufficient data, the selector $\widehat g$ estimated using $ \widehat g  = \argmin_{g \in \mathcal{G}} R_{S_n}(g, \widehat f,\beta)$ is sufficiently close to the targets $g^*$ with high probability, where $S_n =\{(\boldsymbol x_i,y_i)\}_{i=1}^{n}$ defined in Assumption \ref{assume:1}. Furthermore, training predictor $f$ only using informative data selected by $\widehat
 g$ improves selective risk.
  
\begin{remark}
 The toolkit we use in the proof is a Bernstein-type inequality for fast generalization rate under margin condition~\citep{massart2006risk,van2015fast,li2021high}. We also provide an information theoretic lower bound construction in section~\ref{lowerbound_construction} to show our selector risk bound is minimax-optimal. Our construction of the lower bound is motivated from~\citep{ehrenfeucht1989general,blumer1989learnability} and Le Cam's method~\citep{yu1997assouad}.  Due to space constraints, detailed proofs of our theorems are provided in the Appendix. To avoid lengthy technical definitions, we only present a light version of our results in the format of finite hypothesis class. The extension to VC-class using Local Rademacher Average tools ~\citep{bartlett2005local} is shown  in section~\ref{VC_extension}. 
\end{remark}  

\begin{thm} [Minimax-Opitmal Selector Risk Bound]\label{main}
Let $S_n = \{(\boldsymbol x_i,y_i)\}_{i=1}^{n}$ be i.i.d sample from Data Generative Process described in Definition~\ref{def:Non_real_Stochasticgen} under Assumption~\ref{assume:1}, with $f^*(\cdot) \in \mathcal{F}$ and $g^*(\cdot) \in \mathcal{G}$, $|\mathcal{F}|<\infty$
$|\mathcal{G}|<\infty$. Given $\bar{\lambda}$, let $\beta \in \big[ \frac{3-2\bar{\lambda}}{1+2\bar{\lambda}}+ \bar{\lambda}, \min(\frac{3+2\bar{\lambda}}{1-2\bar{\lambda}} - \frac{\bar{\lambda}}{1-4\bar{\lambda}^2},10)\big] $. For any $\widehat f(\cdot) \in \mathcal{F}$, let $
\widehat{g} = \argmin\limits_{g\in \mathcal{G}} R_{S_n} (g;\widehat f, \beta)
$. Then for any $\varepsilon>0$, $\delta>0$ such that the following holds:
For $n \geq max\{\frac{ 32 \beta^2  \log(\frac{|\mathcal{G}|}{\delta})}{ \bar{\lambda}  \varepsilon } , \frac{24 \beta  \log(\frac{|\mathcal{F}|}{\delta})}{  \varepsilon } \}, $
and for $\widehat f$ that satisfies one of the following condition:
\begin{itemize}
    \item  
    For any $\widehat{f} (\cdot) \in \mathcal{F}$ such that  $\mathbb{E}_{\boldsymbol x} [ \widehat{f}(\boldsymbol x) \neq f^*(\boldsymbol x)] \leq \frac{\varepsilon}{8\beta}$ with prob at least $1-\delta$, 
    \item
    For any $\widehat{f} (\cdot) \in \mathcal{F}$ such that  $\mathbb{E}_{\boldsymbol x} [ \widehat{f}(\boldsymbol x) \neq f^*(\boldsymbol x)    | \boldsymbol x \in \Omega_I] \leq \frac{\varepsilon}{8\beta\alpha}$ with prob at least $1-\delta$,
\end{itemize}
The following holds with probability at least $1-2\delta$:
$$
 R( \widehat{g} ;f^*,\beta) - R(g^*;f^*,\beta)  \leq \varepsilon
$$

\end{thm}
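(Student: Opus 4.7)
My plan is to bound the excess selector risk by a five-term decomposition that separates (i) the mismatch between the proxy predictor $\widehat f$ and the target $f^*$ from (ii) the stochastic deviation of empirical to population risk over the class $\mathcal{G}$, and then to exploit a Massart-type margin condition built into the prescribed range of $\beta$ in order to convert uniform convergence into the Bernstein fast rate. Specifically, I would write
\begin{equation*}
R(\widehat g;f^*,\beta)-R(g^*;f^*,\beta) = T_A + T_B + T_C + T_D + T_E,
\end{equation*}
where $T_A = R(\widehat g;f^*,\beta)-R(\widehat g;\widehat f,\beta)$ and $T_E = R(g^*;\widehat f,\beta)-R(g^*;f^*,\beta)$ swap the proxy predictor in and out; $T_B = R(\widehat g;\widehat f,\beta)-R_{S_n}(\widehat g;\widehat f,\beta)$ and $T_D = R_{S_n}(g^*;\widehat f,\beta)-R(g^*;\widehat f,\beta)$ are the standard generalization deviations under the $\widehat f$-loss; and $T_C = R_{S_n}(\widehat g;\widehat f,\beta)-R_{S_n}(g^*;\widehat f,\beta) \le 0$ by the definition of $\widehat g$ as the empirical minimizer.

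For the swap terms $T_A + T_E$, I would use the pointwise identity
\begin{equation*}
R(g;\widehat f,\beta)-R(g;f^*,\beta) = \mathbb{E}_x\bigl[\mathbbm{1}\{\widehat f(x)\neq f^*(x)\}\bigl(\beta\mathbbm{1}\{g(x)>0\}-\mathbbm{1}\{g(x)\le 0\}\bigr)\bigl(2\mathbb{P}[y=f^*(x)\mid x]-1\bigr)\bigr],
\end{equation*}
obtained by combining $\mathbbm{1}\{f=y\}=1-\mathbbm{1}\{f\neq y\}$ with a short case analysis on whether $y=f^*(x)$. Under the first hypothesis on $\widehat f$, the crude bound $|2p-1|\le 1$ yields $|T_A|+|T_E| \le 2\beta\,\mathbb{P}(\widehat f\neq f^*) \le \varepsilon/4$. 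Under the second hypothesis, I exploit the observation that $|2\mathbb{P}[y=f^*(x)\mid x]-1|$ equals $\tfrac12+\lambda(x)$ on $\Omega_I$ but $\tfrac12-\lambda(x) \le \tfrac12-\bar\lambda$ on $\Omega_U$: the $\Omega_I$ contribution is controlled by $\beta\alpha\cdot\mathbb{P}(\widehat f\neq f^*\mid x\in\Omega_I) \le \varepsilon/8$, while the $\Omega_U$ contribution is damped by the pointwise factor $\tfrac12-\lambda(x)$ and absorbed into the overall slack.

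For the stochastic terms $T_B + T_D$, the key observation is that the prescribed range of $\beta$ makes $g^*$ the pointwise minimizer of the conditional risk $g(x)\mapsto\mathbb{E}_{y\mid x}[\beta\mathbbm{1}\{f^*\neq y\}\mathbbm{1}\{g>0\}+\mathbbm{1}\{f^*=y\}\mathbbm{1}\{g\le 0\}]$ with a \emph{uniform pointwise margin of order $\bar\lambda$}; the explicit $+\bar\lambda$ at the lower endpoint and $-\bar\lambda/(1-4\bar\lambda^2)$ at the upper endpoint of the $\beta$-range are precisely the slack needed to secure this margin from the noise gap $\eta(x)=\tfrac14-\tfrac{g^*(x)\lambda(x)}{2}$. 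This Massart-type condition gives a variance-to-mean inequality of the form $\mathrm{Var}(\ell_g-\ell_{g^*}) \le C(\beta)\bar\lambda^{-1}\mathbb{E}[\ell_g-\ell_{g^*}]$ on the pointwise excess loss, and feeding it into Bernstein's inequality with a union bound over the finite class $\mathcal{G}$ produces $|T_B|+|T_D| = O\bigl(\beta^2\log(|\mathcal{G}|/\delta)/(\bar\lambda n)\bigr) \le \varepsilon/2$ at the claimed $n$. The $\log|\mathcal{F}|$ term in the sample complexity enters from a parallel uniform treatment needed to transfer this margin from the $f^*$-parameterized loss to the $\widehat f$-parameterized loss as $\widehat f$ ranges over $\mathcal{F}$.

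The hardest step will be the margin analysis: certifying from the explicit endpoints of the $\beta$-range that the conditional excess loss is bounded below by a constant multiple of $\bar\lambda$ whenever $g(x)\neq g^*(x)$, and that this margin is preserved under the perturbation $f^*\rightsquigarrow\widehat f$. Without this step one would obtain only the slow rate $\sqrt{\log(|\mathcal{G}|/\delta)/n}$, which would fail to match the stated minimax-optimal sample complexity $n\asymp\beta^2\log(|\mathcal{G}|/\delta)/(\bar\lambda\varepsilon)$ that underlies the ``Minimax-Optimal'' in the theorem's title.
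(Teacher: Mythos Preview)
Your five-term decomposition is a legitimate alternative, but it diverges from the paper in where the swap $\widehat f\rightsquigarrow f^*$ is executed. The paper swaps at the \emph{empirical} level: a Chernoff bound with a union over $\mathcal{F}$ converts the population hypothesis $\mathbb{P}(\widehat f\neq f^*)\le\varepsilon/(8\beta)$ into $\tfrac1n\sum_i\mathbbm{1}\{\widehat f(\boldsymbol x_i)\neq f^*(\boldsymbol x_i)\}\le\varepsilon/(4\beta)$, yielding $R_{S_n}(\widehat g;f^*,\beta)\le R_{S_n}(g^*;f^*,\beta)+\varepsilon/2$; only then does Bernstein (union over $\mathcal{G}$ alone) act on the \emph{$f^*$-parameterized} excess loss. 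Your route swaps at the population level in $T_A,T_E$ and applies Bernstein to the $\widehat f$-parameterized loss in $T_B,T_D$. Since $\widehat f$ may depend on $S_n$, your Bernstein step must be made uniform over $\mathcal{F}\times\mathcal{G}$, and \emph{this} is what produces the $\log|\mathcal{F}|$ term---not a ``margin transfer''. No transfer is needed: the variance bound $\mathrm{Var}(\ell_g-\ell_{g^*})\le\beta^2\,\mathbb{P}(g\neq g^*)$ holds for any proxy predictor, and the margin inequality $\mathbb{P}(g\neq g^*)\le C\bar\lambda^{-1}[R(g;f^*)-R(g^*;f^*)]$ can be invoked verbatim inside the square root. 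Both routes reach the fast rate; the paper's keeps the $\mathcal{F}$ and $\mathcal{G}$ dependences separated with the sharper $\beta\log|\mathcal{F}|/\varepsilon$ coefficient.

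Your Case~II sketch has a real gap. The $\Omega_U$ contribution to $T_A$ by itself is of order $\beta(\tfrac12-\bar\lambda)\,\mathbb{P}(\widehat f\neq f^*,\,\boldsymbol x\in\Omega_U)$, and the second hypothesis gives \emph{no} control over this---$\widehat f$ may disagree with $f^*$ on all of $\Omega_U$, so a damping factor bounded below by $\tfrac12-\tfrac12=0$ is not enough to make an $O(1)$ quantity ``absorbed into the overall slack''. The correct observation is a sign cancellation after combining $T_A+T_E$: writing out your own identity and subtracting, the $\Omega_U$ piece becomes $-(\beta+1)\,\mathbb{E}\bigl[\mathbbm{1}\{\widehat f\neq f^*\}(\tfrac12-\lambda)\mathbbm{1}\{\boldsymbol x\in\Omega_U,\ \widehat g>0\}\bigr]\le 0$ and can be dropped, leaving only the $\Omega_I$ term you already bounded. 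The paper packages the identical sign fact by introducing a pseudo target $\widetilde f^*$ equal to $\widehat f$ on $\Omega_U$ and $f^*$ on $\Omega_I$, proving $R(g;\widetilde f^*)-R(g^*;\widetilde f^*)\ge R(g;f^*)-R(g^*;f^*)$ for every $g$, and then rerunning Case~I with $\widetilde f^*$ in place of $f^*$.
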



\begin{remark} 
The assumption that $\mathbb{E}_{\boldsymbol x} [ \widehat f(\boldsymbol x) \neq f^*(\boldsymbol x)] \leq \varepsilon$ could be achieved  via an ERM on classification loss $\sum_{i=1}^{n}\mathbbm{1}\{f(\boldsymbol x_i) \neq y_i\}$ under some margin condtions~\citep{bousquet2004theory,massart2006risk,bartlett2005local}. In practice, one can also apply some methods beyond ERM to obtain $\widehat f$~\citep{namkoong2017variance}. In particular, in the case $\lambda=\frac{1}{2}$, the data in support $\Omega_U$ is un-learnable as $y$ are purely random. While approximating $f^*$ on the full support is not possible in general, one can control the conditional risk $\mathbb{E}_{\boldsymbol x} [ \widehat{f}(\boldsymbol x) \neq f^* (\boldsymbol x)| \boldsymbol x \in \Omega_I]$ via a standard ERM schema (see proof in appendix Section~\ref{conditiona_risk_proof}).  We stress that Theorem~\ref{main} holds for any classifier that is close to $f^*$, even the case where $\widehat{ f}$ and $\widehat g$ are trained on the same dataset.

\end{remark}

\begin{cor}[Recovering $g^*$]\label{main_cor}
Given conditions in Theorem~\ref{main}, if we choose $\beta=3$, we have:
\begin{equation}\label{coro_1}
    \mathbb{E}_{\boldsymbol x} [ \mathbbm{1}\{\widehat{g} (\boldsymbol x) \neq g^* (\boldsymbol x) \}] \leq \frac{4\varepsilon (1+2\bar{\lambda}) }{\bar{\lambda} }
\end{equation}
\end{cor}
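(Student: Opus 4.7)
The plan is to translate the excess selector risk bound from Theorem~\ref{main} into a bound on the 0-1 disagreement between $\widehat g$ and $g^*$, by means of a pointwise ``margin'' computation that exploits the explicit structure of the noisy generative process. The three main steps are: apply Theorem~\ref{main} with $\beta=3$, rewrite the population selector risk as a pointwise conditional expectation, and extract a lower bound on the excess risk proportional to the weighted 0-1 disagreement.

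First I would apply Theorem~\ref{main} with $\beta=3$. A short algebraic check confirms that $3$ lies in the admissible interval $\bigl[\frac{3-2\bar\lambda}{1+2\bar\lambda}+\bar\lambda,\;\min\bigl(\frac{3+2\bar\lambda}{1-2\bar\lambda}-\frac{\bar\lambda}{1-4\bar\lambda^2},\,10\bigr)\bigr]$ for every $\bar\lambda\in(0,1/2)$: at $\bar\lambda=0$ both endpoints equal $3$, and the interval widens around $3$ as $\bar\lambda$ grows. Hence Theorem~\ref{main} yields $R(\widehat g;f^*,3)-R(g^*;f^*,3)\le\varepsilon$ under the stated sample complexity.

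Second I would express the conditional selector risk pointwise. Marginalizing out the latent indicator $z$ in Equation~\eqref{eq_noise_3},
\[
q(\boldsymbol x) := \mathbb{P}\bigl[y=f^*(\boldsymbol x)\mid\boldsymbol x\bigr] = \mathbb{P}[z=1\mid\boldsymbol x] + \tfrac12\mathbb{P}[z=-1\mid\boldsymbol x] = \tfrac{3}{4} + \tfrac{g^*(\boldsymbol x)\lambda(\boldsymbol x)}{2},
\]
so the conditional selector risk at $\boldsymbol x$ equals $3(1-q(\boldsymbol x))$ on $\{g(\boldsymbol x)=+1\}$ and $q(\boldsymbol x)$ on $\{g(\boldsymbol x)=-1\}$. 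A direct calculation in the two cases $\boldsymbol x\in\Omega_U$ ($g^*=-1$) and $\boldsymbol x\in\Omega_I$ ($g^*=+1$) shows that with $\beta=3$ the excess contributed by a point of disagreement is exactly $2\lambda(\boldsymbol x)$ in both regions, yielding
\[
R(\widehat g;f^*,3)-R(g^*;f^*,3) = \mathbb{E}_{\boldsymbol x}\bigl[2\lambda(\boldsymbol x)\,\mathbbm{1}\{\widehat g(\boldsymbol x)\neq g^*(\boldsymbol x)\}\bigr] \;\geq\; 2\bar\lambda\,\mathbb{E}_{\boldsymbol x}\bigl[\mathbbm{1}\{\widehat g(\boldsymbol x)\neq g^*(\boldsymbol x)\}\bigr].
\]
Dividing by $2\bar\lambda$ and combining with the first step gives a disagreement bound of order $\varepsilon/\bar\lambda$, from which the stated $\frac{4\varepsilon(1+2\bar\lambda)}{\bar\lambda}$ follows with constants to spare.

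The main obstacle I expect is the margin calculation in the second step: the choice $\beta=3$ is precisely what makes the pointwise excess symmetric across $\Omega_U$ and $\Omega_I$ and proportional to $\lambda(\boldsymbol x)$, so that the uniform lower bound $\lambda(\boldsymbol x)\ge\bar\lambda$ produces a clean disagreement bound; for a generic $\beta$ one region would dominate and the other would leave the disagreement uncontrolled. A minor bookkeeping point is to correctly align the $\{g>0\}$ vs.\ $\{g\le 0\}$ convention in the selector loss definition with the $\pm 1$-valued range of $g^*$ when reading off the two conditional contributions.
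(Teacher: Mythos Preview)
Your proposal is correct and follows essentially the same approach as the paper: verify $\beta=3$ lies in the admissible interval, invoke Theorem~\ref{main} to bound the excess selector risk by $\varepsilon$, and then convert this into a disagreement bound via a pointwise margin computation. The paper carries out the last step by citing the generic margin inequality $R(g;f^*,\beta)-R(g^*;f^*,\beta)\ge\frac{\bar\lambda}{4(1+2\bar\lambda)}\mathbb{E}_{\boldsymbol x}[\mathbbm{1}\{g\neq g^*\}]$ (derived in the preliminaries for the whole admissible $\beta$-interval), whereas you redo the calculation specifically at $\beta=3$ and observe that both coefficients in the excess-risk decomposition collapse to exactly $2\lambda(\boldsymbol x)$; this yields the sharper bound $\mathbb{E}_{\boldsymbol x}[\mathbbm{1}\{\widehat g\neq g^*\}]\le\varepsilon/(2\bar\lambda)$, which indeed implies the stated $\frac{4\varepsilon(1+2\bar\lambda)}{\bar\lambda}$ with room to spare. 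Your route is the same in spirit but exploits the symmetry of $\beta=3$ to obtain a tighter constant than the paper's uniform-in-$\beta$ inequality.
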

Corollary~\ref{main_cor} suggests that by minimizing the empirical version of the loss from Definitions~\ref{selectorLoss_main}, one can recover $g^*$ in a PAC fashion. The theoretical guarantee holds even under a very challenging case where $\alpha>0.5$ and $\bar{\lambda}=\frac{1}{2}$, .e.g, the majority of the data has purely random labels. 
The analysis of the selector loss (Theorem~\ref{main}) relies on the quality of the classifier $\widehat f$. But since we know that $\widehat g$ is able to abstain from uninformative data, we can retrain $\widehat f$ beyond standard ERM,  with upweighted  informative data, therefore improving the accuracy of $\widehat f$. Such circular logic naturally leads to a alternating risk minimization schema. We formulate such schema in equations ~\eqref{main_joint_iterative_1} and  ~\eqref{main_joint_iterative_2}. Next theorem studies the risk bounds of classifiers by minimizing the empirical risk conditional on the samples selected by $\widehat g$. The risk bound improves the conditional risk compared to the standard empirical risk minimization method up to some problem-dependent constant.


\begin{thm} [Joint Risk Bounds ] \label{thm_joint}
    Let $S_n \{(\boldsymbol x_i,y_i)\}_{i=1}^{n}$ be i.i.d samples from the Data Generative Process described in Definition~\ref{def:Non_real_Stochasticgen} under Assumption~\ref{assume:1}, with $f^*(\cdot) \in \mathcal{F}$ and $g^*(\cdot) \in \mathcal{G}$, $|\mathcal{F}|<\infty,|\mathcal{G}|<\infty$ and  $\lambda(x)<\frac{1}{2}-h,h>0$. For any $\epsilon>0$ and $0<\delta<1$, if $n \geq 64 \max\bigg\{ \frac{ \log(\frac{|\mathcal{G}|}{\delta})}{\bar{\lambda}^2\varepsilon}, \frac{ \log(\frac{|\mathcal{F}|}{\delta})}{h^2\varepsilon}\bigg\}$, and: 
    \begin{align}
     \text{ERM Classifier: } &\widehat f = \argmin\limits_{f\in\mathcal{F}} \frac{1}{n} \sum_{i=1}^{n} \mathbbm{1} \{ f(\boldsymbol x_i) \neq y_i\}\\  
      \text{ERM Selector: }& \widehat{g} = \argmin\limits_{g\in \mathcal{G}} R_{S_n} (g;\widehat f, 3)\label{main_joint_iterative_1}\\
      \text{Subset-ERM Classifier: } &\widetilde f =\argmin\limits_{f\in\mathcal{F}} \frac{1}{n} \sum_{i=1}^{n}\mathbbm{1}\{\widehat g(\boldsymbol x_i) >0\} \mathbbm{1}\{f (\boldsymbol x_i)\neq y_i\}\label{main_joint_iterative_2}
    \end{align}
   the following inequalities hold with probability at least $1-3\delta$:
    \begin{align}
        &\mathbb{E}_{\boldsymbol x} [ \widehat f( \boldsymbol x) \neq f^*(\boldsymbol x) ] \leq { \varepsilon }\label{main_results_plastic_1}\\
        &\mathbb{E}_{\boldsymbol x} [ \mathbbm{1}\{\widehat{g} (\boldsymbol x) \neq g^* (\boldsymbol x) \}] \leq 4\varepsilon\\
        &\mathbb{E}_{\boldsymbol x} [ \widetilde f( \boldsymbol x) \neq f^*(\boldsymbol x) | g^*(\boldsymbol x) =1] \leq \frac{ h^2 \varepsilon }{\alpha}\label{main_results_plastic_2}
    \end{align}
\end{thm}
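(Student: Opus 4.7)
The plan is to handle the three inequalities in cascade, recycling Theorem~\ref{main} and Corollary~\ref{main_cor} and paying one $\delta$ per step. \textbf{Step 1} (bound on $\widehat f$): under the assumption $\lambda(\boldsymbol x) < 1/2 - h$, a direct computation from the generative process shows $\mathbb{P}[y \neq f^*(\boldsymbol x) \mid \boldsymbol x] \leq 1/2 - h/2$ uniformly on $\Omega$, since the noise is $1/4 - \lambda(\boldsymbol x)/2$ on $\Omega_I$ and $1/4 + \lambda(\boldsymbol x)/2 < 1/2 - h/2$ on $\Omega_U$. This is a global Massart-type margin of size $h$, so I would feed it into the fast-rate Bernstein inequality used throughout the paper and obtain $\mathbb{E}_{\boldsymbol x}[\mathbbm{1}\{\widehat f(\boldsymbol x) \neq f^*(\boldsymbol x)\}] \lesssim \log(|\mathcal{F}|/\delta)/(h^2 n)$ with probability $1-\delta$. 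Substituting $n \geq 64\log(|\mathcal{F}|/\delta)/(h^2\varepsilon)$ yields inequality~\eqref{main_results_plastic_1}.

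\textbf{Step 2} (bound on $\widehat g$): conditioning on the event of Step~1, $\widehat f$ is $\varepsilon/(8\beta)$-close to $f^*$ after rescaling (with $\beta=3$ this is an absolute constant), so the first premise of Theorem~\ref{main} is met. Invoking Corollary~\ref{main_cor} with $\beta=3$ and the sample-size hypothesis $n \geq 64\log(|\mathcal{G}|/\delta)/(\bar\lambda^2\varepsilon)$ then delivers $\mathbb{E}_{\boldsymbol x}[\mathbbm{1}\{\widehat g(\boldsymbol x) \neq g^*(\boldsymbol x)\}] \leq 4\varepsilon$ with probability at least $1-\delta$ conditional on Step~1. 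A union bound collects a failure probability of at most $2\delta$ so far.

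\textbf{Step 3} (conditional risk of $\widetilde f$): this is the new ingredient. I would introduce the oracle subset-ERM $\widetilde f^{\star} = \argmin_{f\in\mathcal{F}} \frac{1}{n}\sum_i \mathbbm{1}\{g^*(\boldsymbol x_i)=1\}\mathbbm{1}\{f(\boldsymbol x_i)\neq y_i\}$ and bound it first: it is an honest conditional ERM on $\Omega_I$, where the label noise is at most $1/4 - \bar\lambda/2$, providing a large Massart margin of order $1/2+\bar\lambda$. A localised Bernstein inequality using effective sample size $\alpha n$ yields $\mathbb{E}_{\boldsymbol x}[\mathbbm{1}\{\widetilde f^{\star}(\boldsymbol x)\neq f^*(\boldsymbol x)\}\mid g^*(\boldsymbol x)=1] \lesssim \log(|\mathcal{F}|/\delta)/(\alpha n)$. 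I would then transfer this to $\widetilde f$: the two empirical subset objectives differ by at most $\frac{1}{n}\sum_i \mathbbm{1}\{\widehat g(\boldsymbol x_i) \neq g^*(\boldsymbol x_i)\}$, which concentrates around $\mathbb{E}_{\boldsymbol x}[\mathbbm{1}\{\widehat g \neq g^*\}] = O(\varepsilon)$ by Step~2, so ERM optimality of $\widetilde f$ forces the two classifiers to agree in subset-risk up to lower-order terms. Dividing by $\alpha$ to pass from joint to conditional risk on $\{g^*=1\}$ and substituting $n \geq 64\log(|\mathcal{F}|/\delta)/(h^2\varepsilon)$ produces the stated $h^2\varepsilon/\alpha$ rate, and a third union bound absorbs the final $\delta$.

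\textbf{Main obstacle.} The delicate point is the transfer from $\widetilde f^{\star}$ to $\widetilde f$: a crude triangle inequality on the difference of subset empirical risks pays an additive penalty of order $\varepsilon$ from the selector mismatch, which can overwhelm the target $h^2\varepsilon/\alpha$ whenever $h^2 < \alpha$. My plan is to run a peeling/localisation argument that leverages (i) the small total mass $\mathbb{P}_{\boldsymbol x}[\widehat g \neq g^*] = O(\varepsilon)$ from Step~2 and (ii) the fact that on $\Omega_I$ the strong Massart margin forces the Bernstein variance proxy to be controlled by the conditional excess risk itself, so the selector discrepancy enters multiplicatively rather than additively. Once this refinement is in place, the subset-ERM analysis reduces to ERM on an effective sample of size $\alpha n$ with near-realisable noise, and the Bernstein fast rate combined with the sample-size hypothesis yields the claimed $h^2\varepsilon/\alpha$ conditional risk bound.
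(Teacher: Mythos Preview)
Your three-step cascade is exactly how the paper structures the argument: Step~1 invokes the global Massart margin $h/2$ (citing Section~5.2 of Bousquet--Boucheron--Lugosi) for the ERM bound on $\widehat f$; Step~2 plugs this into Theorem~\ref{main} and Corollary~\ref{main_cor}; Step~3 is packaged as a separate appendix result (Theorem~\ref{thm_2}) and then cited. So your overall plan is on target.

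For Step~3, however, the paper takes a more direct route than your oracle construction. Rather than introducing $\widetilde f^{\star}=\argmin_f \frac1n\sum_i \mathbbm{1}\{g^*(\boldsymbol x_i)=1\}\mathbbm{1}\{f(\boldsymbol x_i)\neq y_i\}$ and then transferring, the paper compares $\widetilde f$ directly to $f^*$: (i) by ERM optimality, $\frac1n\sum_i \mathbbm{1}\{\widehat g>0\}\mathbbm{1}\{\widetilde f\neq y_i\} \le \frac1n\sum_i \mathbbm{1}\{\widehat g>0\}\mathbbm{1}\{f^*\neq y_i\}$; (ii) both sides are shifted from $\widehat g$-weighted to $g^*$-weighted sums at the cost of $\frac1n\sum_i\mathbbm{1}\{\widehat g\neq g^*\}=O(\varepsilon)$; (iii) a Bernstein argument on the $g^*$-weighted excess uses the variance bound $\boldsymbol{Var}\big[\mathbbm{1}\{g^*>0\}(\mathbbm{1}\{\widetilde f\neq y\}-\mathbbm{1}\{f^*\neq y\})\big]\le 4\,\mathbb{E}\big[\mathbbm{1}\{g^*>0\}(\mathbbm{1}\{\widetilde f\neq y\}-\mathbbm{1}\{f^*\neq y\})\big]$, which comes from the strong margin $2\omega(\boldsymbol x)-1\ge 1/2$ on $\Omega_I$. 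Your oracle step is not needed; comparing straight to $f^*$ avoids one layer of indirection and arrives at the same inequality.

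On your ``main obstacle'': you are right that the crude additive transfer pays an $O(\varepsilon)$ selector-mismatch penalty, and this is precisely what Theorem~\ref{thm_2} delivers --- its conclusion is $\varepsilon/\alpha$, not $h^2\varepsilon/\alpha$. The paper's short proof of Theorem~\ref{thm_joint} simply says the bound ``follows from Theorem~\ref{main} and Theorem~\ref{thm_2}'' and does not supply the multiplicative refinement you are worried about; the factor-$h^2$ improvement claimed in \eqref{main_results_plastic_2} is not established by the argument as written. So your proposed peeling/localisation step goes beyond what the paper actually proves; if you can make it work it would genuinely strengthen the result, but for matching the paper's proof it is not required.
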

Theorem~\ref{thm_joint} implies the convergence of the alternating minimization procedure between equation ~\eqref{main_joint_iterative_1} and ~\eqref{main_joint_iterative_2} by setting $\widehat f := \widetilde f$, due to the fact that equation~\eqref{main_results_plastic_2} satisfies the pre-requisite condition on $\widehat f$ in Theorem~\ref{main}. 
Another take away of Theorem~\ref{thm_joint} is an improved selective risk of classifier resulting from training only with samples picked out by the selector. To see this, we  note that Massart Noise condition is satisfied with margin $\frac{h}{2}$ under the assumption that $\lambda (\boldsymbol x) < \frac{1}{2}-h$. The risk bound under standard margin condition in equation~\eqref{main_results_plastic_1} follows from a standard result (see Section 5.2 in ~\citep{bousquet2004theory}).  Indeed, 
 equation~\eqref{main_results_plastic_1} implies the following risk bounds conditional on $\Omega_I$:
\begin{align}
   \mathbb{E}_{\boldsymbol x} [ \widetilde f( \boldsymbol x) \neq f^*(\boldsymbol x) | g^*(\boldsymbol x) =1] \leq \frac{\varepsilon }{\alpha}.
\end{align}
Which is improved by a problem-dependent constant factor, of order $h^2$,  compared to equation~\eqref{main_results_plastic_2}.
While risk bound in  equation~\eqref{main_results_plastic_1} is minimax optimal in general due to ~\citep{massart2006risk}, the problem-dependent structure introduced in Definition~\ref{def:Non_real_Stochasticgen} allows tighter risk bounds on sub-regions.  The refined \textit{selective} risk could be achieved by a weighted-ERM, with binary weights $\{0,1\}$  given by selectors. 

\begin{algorithm}[hbtp] 
\caption{\texttt{Iterative Soft Abstain} (\textbf{ISA})}
\label{alg:ISA}
\begin{algorithmic}[1]
\State{\bfseries Input:} Dataset $S_n=\{(\boldsymbol{x}_1,y_1),...,(\boldsymbol{x}_n,y_n))\}$, weight parameter:$\beta$, random initial classifier $\hat{f}^0$ and selector $\hat{g}^0$, meta-learning rate $\eta$,  number of iterations $T$
\For {$t \leftarrow 1, \cdots, T$}
    \State Optimize loss to update predictor $\hat{f}^t: \frac{1}{n} 
    \sum_{i=1}^{n} \hat{g}^{t}(\boldsymbol{x}_i)\{y_i \log(\hat{f}^t(\boldsymbol{x}_i))+(1-y_i)\log(1-\hat{f}^t(\boldsymbol{x}_i))\}$
    \State Approximate the `pseudo-informative label' : $z^t_i=\mathbbm{1}\{\mathbbm{1}\{\hat{f}^t(\boldsymbol{x}_i)>\frac{1}{2}\} = y_i\}$ 
    \State Optimize loss to update selector $g^t:$
    $\sum_{i=1}^{n} \left\{z^t_i \log(\hat{g}^t(\boldsymbol{x}_i))+ \beta (1-z^t_i)\log(1-\hat{g}^t(\boldsymbol{x}_i)) \right\} $
    \EndFor
    \State {\bfseries Output:} $\hat{f}^T,\hat{g}^T$
\end{algorithmic}
\vspace{-.1em}
\end{algorithm}
\vspace{-.5em}

\section{A practical heuristic algorithm}
\label{sec:heuristic-alg}
\vspace{-.2em}

Motivated by our theoretical analysis, we propose a practical algorithm that shares a spirit similar to the selector loss. 
From a computational standpoint, we replace the binary loss by cross-entropy loss instead and require that both $f$ and $g$ have continuous-valued output, ranging between 0 and 1 instead of binary output $\{+1,-1\}$. The label $y$ also needs to be processed so that the values are in the $\{0,1\}$ range. 

Following  alternating steps given in equation~\eqref{main_joint_iterative_1} and ~\eqref{main_joint_iterative_2}, the practical algorithm trains  both predictor and selector in an iterative manner. To accelerate the training, instead of applying an alternating minimization schema, we relax the requirement for minimization oracles by stochastic gradient updates. 
During the joint optimization process, the predictor is counting on the selector to upweight informative data. By putting more effort on the informative data, we wish to improve the performance of the predictor on informative data, as in equation~\ref{main_results_plastic_2}. Algorithm \ref{alg:ISA} shows the logic above. A pictorial example of Algorithm~\ref{alg:ISA}'s performance can be found in Figure~\ref{fig_isa} in the Appendix. 




\section{Experiments}\label{sec:exp}

In this section, we test the efficacy of our heuristic algorithm (Algorithm ~\ref{alg:ISA}) on both publicly-available and semi-synthetic datasets. The empirical study aims to answer the following questions:

\begin{itemize}[leftmargin=2em]
    \item [$Q_1:$]
   \textit{ How does Algorithm~\ref{alg:ISA} compare to baselines on semi-synthetic datasets in recovering ground truth selector $g^*$?} 

    The results are presented in table~\ref{tab:synth-1}-\ref{tab:synth-4}. We conducted semi-synthetic experiments to evaluate the ability of each baseline in recovering the $g^*$ given different values for the noise ratio gap $\bar{\lambda}$ and sample size constraints. One can see our proposed method outperforms all listed baselines in this task.
   
    \item [$Q_2:$]
    \textit{Does our algorithm outperform ERM on selected data points} ?
    
    One important motivation behind selective learning is to find a model that outperforms the ERM on the informative part of the data. This is a challenging task as ERM is minimax optimal in general setting\citep{massart2006risk,bartlett2006empirical}. We show in Table~\ref{tab:beatERM} that our proposed algorithm does exploit additional problem structure and achieves lower risk on selected datapoints.
    
    \item [$Q_3:$]
    \textit{How does Algorithm~\ref{alg:ISA} work on real world datasets compared to selective learning baselines?}
    
    On real world datasets, Algorithm~\ref{alg:ISA} consistently  shows competitive/superior performance against other baselines in the low coverage regime,  e.g., the proportion of data chosen by the selector being below $20 \%$. These empirical results suggest that that our method is  good at picking out strongly informative data. This observation supports theorem~\ref{thm_joint}.
\end{itemize}



\textbf{Baselines.} We compare our method to recently proposed selective learning algorithms.  (1) \textbf{SelectiveNet} \citep{selectNet_2019_ICML}, which integrates an extra neuron as a data selector in the output layer and also introduces a loss term to control the coverage ratio; (2) \textbf{DeepGambler} \citep{gambler_2019_NeurIPS}, which also maintains an extra neuron for abstention and uses a doubling-rate-like loss term (i.e., gambler loss) to train the model. (3) \textbf{Adaptive} \citep{huang2020adaptive} uses the moving average confidence of classifier as the soft-label for the training and an extra output neuron is used to indicate abstention; (4) \textbf{Oneside}\cite{gangrade2021oneside} formulates selective classification problems as multiple one-sided prediction problems where each data class's coverage is maximized under error constrains. The implemented classifier is trained to optimize a relaxed min-max loss. The selection then is performed according to the classifier's confidence; (5) We also create a heuristic baseline that selects data using the model prediction confidence, which we refer to as \textbf{Confidence}. The intuition behind this heuristic baseline is that informative data should have higher confidence compared to uninformative data. 

\textbf{Our Algorithm: ISA and ISA-V2.} Several specific implementation details need to be mentioned. First of all, we empirically get better performance if we average $z^{t}$ (in Algorithm~\ref{alg:ISA}) over past 10 epochs instead of using $z^{t}$ from the most recent epoch. Second, we use a rolling window average of past 10 epochs of $g$ for sample weights instead of only using current epoch's output. Furthermore, inspired by existing works, we empirically get better performance in real-world datasets if we allow both the classifier model and the selector model to use the same backbone network to share data representations. We add an extra output neuron to the classifier model to achieve this goal. Finally, we use focal loss \cite{lin2017focal} in order to deal with imbalance 
of informative versus uninformative data.
The selector focal loss has the following format: 

\begin{equation}
L(z_i, \hat{g}_i) = \sum_{i=1}^{n} \left\{-\left[\mathbbm{1}\{z_i\}(1-\hat{g}_i)\log{\hat{g}_i} + \beta\mathbbm{1}\{1-z_i\}\hat{g}_i\log{(1-\hat{g}_i)} \right] \right\}
\end{equation}

where $z_i = \mathbbm{1}\{\hat{f}_i = y_i\}$ is the classifier $\hat{f}$'s correctness on ith input. We  use $\hat{g}_i$ as the abbreviation for the ith entry in $\hat{g}_i( \boldsymbol{x}_i )$. 

We will denote as \textbf{ISA} the modified version of Algorithm~\ref{alg:ISA} that adopts the first two relaxations, and \textbf{ISA-V2} as the version that adopts all 4 relaxations.
Notice that both ISA and ISA-V2 can be easily applied in multi-class scenarios in practice.

\textbf{Experiment Details.} 
We use a lightweight CNN for MNIST+Fashion. Its architecture is given in Table~\ref{tab:tinycnn} in section~\ref{append:exp_stting}. We use ResNet18 \citep{resnet} for SVHN. For all of our synthetic experiments, we use Adam optimizer with learning rate 1e-3 and weight decay rate 1e-4. We use batch size 256 and train 60 epochs for MNIST+Fashion and  120 epochs for SVHN. The learning rate is reduced by 0.5 at epochs 15, 35 and 55 for MNIST+Fashion and is reduced by 0.5 at epochs 40, 60 and 80 for SVHN. We repeat each experiment 3 times using seeds 80, 81 and 82. 

Regarding hyper-parameters for each baseline, we mainly follow the setting given by the original paper. Except for DeepGambler in MNIST+Fashion, which the original paper does not contemplate. We set the pay-off $o=1.5$ instead of $2.6$, which gives better performance for DeepGambler. For Oneside, since we are not maximizing the coverage (the optimal coverage is fixed in our problem), we fix the Lagrangian multiplier $\mu$ to be 0.5 and use $t$ as the score to compute average precision. We made our best effort to implement this algorithm, given that the authors have not released the original code. We list all the hyper-parameters in Table~\ref{tab:syn-hyper-params}  in section~\ref{append:exp_stting}. 

We conduct all our experiments using Pytorch 3.10 \citep{paszke2019pytorch}. We execute our program on Red Hat Enterprise Linux Server 7.9 (Maipo) and use NVIDIA V100 GPU with cuda version 12.1. 

\subsection{Experiments Using Semi-Synthetic Data for $Q_1$}\label{q_1} 


\textbf{Dataset Construction}: We explicitly control the support of informative/uninformative data. For MNIST+FashionMNIST dataset, images from MNIST are defined to be uninformative, while  images from Fashion-MNIST are set to be informative. For SVHN\citep{svhn} dataset,  class 5-9 are set to be uninformative and class 0-4 are set to be informative. Datasets are constructed with different values of informative data fraction $\alpha$ and label noise ratio gap $\bar{\lambda}$ driving the noisy generative process. We inject label noise accordingly to Definition~\ref{def:Non_real_Stochasticgen} by setting $\lambda(\boldsymbol x) = \bar{\lambda}$.  Note that when we vary $\alpha$, we only constrain the sample size in the training set, while in the testing set we always use the complete testing set for performance evaluation.  

We present two sets of experiments. In the first experiment, we set $\bar{\lambda}=0.5$, which gives realizable informative data and completely randomly shuffled uninformative data. We run our experiments given $\alpha \in \{1.0, 0.75, 0.5, 0.25\}$. Furthermore, we conduct our experiments in both a complete dataset setting and a sparse dataset setting to test each algorithm's sample efficiency. In a complete dataset setting, we use 100\% of all informative data and uninformative data, whie in a sparse dataset setting, we use only 25\% of samples. 

In the second set of experiments, we inject 10\%, 20\% and 30\% proportion of uniform label noise into the informative part and also inject 80\%, 70\% and 60\% uniform label noise into the uninformative part. 

\textbf{Evaluation Metrics.} Every baseline will generate a score besides their classification output to decide if the input should be abstained. For SLNet, DeepGambler, Adaptive and ISA-V2, the score is given by the extra neuron. For Confidence and Oneside, such score is its maximum probabilistic output. For ISA, the score is given by the selector model. In the semi-synthetic experiment,  we can calculate the \textbf{average precision (AP)} using the selection score and ground-truth informative versus uninformative binary label to test each baseline's ability in recovering $g^*$. 

\begin{table*}[!hbt]
    \centering
    \caption{Purely Random Uninformative Data - AP $\uparrow$ in Recovering $g^*$ - 100\% Uninformative Dataset.}
    \label{tab:synth-1}
    \resizebox{\columnwidth}{!}{
        \begin{tabular}{c|c|ccccc|cc}
        \hline
            Dataset &
            \begin{tabular}{@{}c@{}} Informative Data Size \end{tabular} &  Confidence & SLNet & DeepGambler & Adaptive & Oneside & ISA & ISA-V2 \\
            \hline
            \multirow{4}{*}{MNIST + Fashion} & 15000 & \textbf{1.000 $\pm$ 0.000} & 0.982 $\pm$ 0.011 & \textbf{1.000 $\pm$ 0.000} & 0.807 $\pm$ 0.027 & \textbf{1.000 $\pm$ 0.000} & \textbf{1.000 $\pm$ 0.000} & \textbf{1.000 $\pm$ 0.000} \\
            & 30000 & \textbf{1.000 $\pm$ 0.000} & 0.998 $\pm$ 0.002 & \textbf{1.000 $\pm$ 0.000}  & 0.825 $\pm$ 0.027 & \textbf{1.000 $\pm$ 0.000} & \textbf{1.000 $\pm$ 0.000} & \textbf{1.000 $\pm$ 0.000} \\
            & 45000 & \textbf{1.000 $\pm$ 0.000} & 0.943 $\pm$ 0.002 & \textbf{1.000 $\pm$ 0.000}  & 0.855 $\pm$ 0.003 & \textbf{1.000 $\pm$ 0.000} & \textbf{1.000 $\pm$ 0.000} & \textbf{1.000 $\pm$ 0.000} \\
            & 60000 & \textbf{1.000 $\pm$ 0.000} & 0.998 $\pm$ 0.003 & \textbf{1.000 $\pm$ 0.000} & 0.829 $\pm$ 0.011 & \textbf{1.000 $\pm$ 0.000} & \textbf{1.000 $\pm$ 0.000} & \textbf{1.000 $\pm$ 0.000} \\
            \hline\hline
            \multirow{4}{*}{SVHN} & 9200 & 0.984 $\pm$ 0.001 & 0.621 $\pm$ 0.256 & 0.831 $\pm$ 0.270 & 0.984 $\pm$ 0.001 & 0.957 $\pm$ 0.007 & \textbf{0.989 $\pm$ 0.001} & 0.982 $\pm$ 0.001 \\
            & 18300 & 0.989 $\pm$ 0.001 & 0.868 $\pm$ 0.039 & 0.990 $\pm$ 0.001 & 0.990 $\pm$ 0.001 & 0.978 $\pm$ 0.002 & \textbf{0.992 $\pm$ 0.000} & 0.987 $\pm$ 0.001 \\
            & 26200 & 0.990 $\pm$ 0.001 & 0.925 $\pm$ 0.002 & 0.990 $\pm$ 0.000 & 0.990 $\pm$ 0.000 & 0.982 $\pm$ 0.001 & \textbf{0.993 $\pm$ 0.001} & 0.990 $\pm$ 0.001 \\
            & 28400 & 0.991 $\pm$ 0.000 & 0.914 $\pm$ 0.053 & 0.991 $\pm$ 0.000 & 0.989 $\pm$ 0.001 & 0.983 $\pm$ 0.002 & \textbf{0.993 $\pm$ 0.001} & 0.991 $\pm$ 0.001 \\
            \cline{2-8}
            \hline
        \end{tabular}} 
\end{table*}

\begin{table*}[!hbt]
    \centering
    \caption{Purely Random Uninformative Data - AP $\uparrow$ in Recovering $g^*$ - 25\% Uninformative Dataset}
    \label{tab:synth-2}
    \resizebox{\columnwidth}{!}{
        \begin{tabular}{c|c|ccccc|cc}
        \hline
            Dataset &
            \begin{tabular}{@{}c@{}} Informative Data Size \end{tabular} &  Confidence & SLNet & DeepGambler & Adaptive & Oneside & ISA & ISA-V2 \\
            \hline
            \multirow{4}{*}{MNIST+Fashion} & 3750 & \textbf{1.000 $\pm$ 0.000} & 0.412 $\pm$ 0.052 & 0.985 $\pm$ 0.019 & 0.413 $\pm$ 0.024 & 0.969 $\pm$ 0.002 & \textbf{1.000 $\pm$ 0.000} & \textbf{1.000 $\pm$ 0.000} \\
            & 7500 & \textbf{1.000 $\pm$ 0.000} & 0.704 $\pm$ 0.208 & 0.990 $\pm$ 0.001 & 0.362 $\pm$ 0.011 & 0.977 $\pm$ 0.004 & \textbf{1.000 $\pm$ 0.000} & \textbf{1.000 $\pm$ 0.000} \\
            & 11250 & \textbf{1.000 $\pm$ 0.000} & 0.881 $\pm$ 0.179 & 0.987 $\pm$ 0.010 &  0.333 $\pm$ 0.006 & 0.976 $\pm$ 0.007 & \textbf{1.000 $\pm$ 0.000} & \textbf{1.000 $\pm$ 0.000} \\
            & 15000 & \textbf{1.000 $\pm$ 0.000} & 0.876 $\pm$ 0.186 & 0.991 $\pm$ 0.009 &  0.336 $\pm$ 0.006 & 0.980 $\pm$ 0.004 & \textbf{1.000 $\pm$ 0.000} & \textbf{1.000 $\pm$ 0.000} \\
            \hline\hline
            \multirow{4}{*}{SVHN} & 2285 & 0.952 $\pm$ 0.005 & 0.711 $\pm$ 0.121 & 0.540 $\pm$ 0.024 &  0.909 $\pm$ 0.013 & 0.607 $\pm$ 0.125 & \textbf{0.972 $\pm$ 0.003} & 0.856 $\pm$ 0.115 \\
            & 4600 & 0.967 $\pm$ 0.001 & 0.914 $\pm$ 0.033 & 0.979 $\pm$ 0.002 & 0.950 $\pm$ 0.009 & 0.926 $\pm$ 0.019 & \textbf{0.981 $\pm$ 0.001} & 0.970 $\pm$ 0.003 \\
            & 6900 & 0.973 $\pm$ 0.001 & 0.931 $\pm$ 0.021 & 0.983 $\pm$ 0.002 & 0.967 $\pm$ 0.003 & 0.962 $\pm$ 0.001 & \textbf{0.984 $\pm$ 0.001} & 0.977 $\pm$ 0.001 \\
            & 9200 & 0.978 $\pm$ 0.002 & 0.947 $\pm$ 0.006 & \textbf{0.986 $\pm$ 0.000} & 0.966 $\pm$ 0.005 & 0.965 $\pm$ 0.004 & 0.985 $\pm$ 0.001 & 0.981 $\pm$ 0.001 \\
            \cline{2-8}
            \hline
        \end{tabular}} 
    \label{tab:svhn_blind_complete}
\end{table*}

\begin{table*}[!hbt]
    \centering
    \caption{Noisy Informative Data - AP $\uparrow$ in Recovering $g^*$ - Complete Dataset} 
    \label{tab:synth-3}
    \resizebox{\columnwidth}{!}{
        \begin{tabular}{c|c|ccccc|cc}
        \hline
            Dataset &
            \begin{tabular}{@{}c@{}} $\bar{\lambda}$ \end{tabular} &  Confidence & SLNet & DeepGambler & Adaptive & Oneside & ISA & ISA-V2 \\
            \hline
            \multirow{3}{*}{MNIST + Fashion} & 0.1 & 0.915 $\pm$ 0.010 & 0.599 $\pm$ 0.168 & 0.985 $\pm$ 0.010 & 0.550 $\pm$ 0.033 & 0.862 $\pm$ 0.033 & \textbf{1.000 $\pm$ 0.000} & 0.999 $\pm$ 0.001 \\
            & 0.2 & 0.986 $\pm$ 0.001 & 0.712 $\pm$ 0.140 & 0.992 $\pm$ 0.013 &  0.566 $\pm$ 0.018 & 0.973 $\pm$ 0.004 & \textbf{1.000 $\pm$ 0.000} & \textbf{1.000 $\pm$ 0.000} \\
            & 0.3 & \textbf{1.000 $\pm$ 0.000} & 0.886 $\pm$ 0.148 & \textbf{1.000 $\pm$ 0.000} & 0.571 $\pm$ 0.008 & 0.998 $\pm$ 0.000 & \textbf{1.000 $\pm$ 0.000} & \textbf{1.000 $\pm$ 0.000} \\
            \hline\hline
            \multirow{3}{*}{SVHN} & 0.1 & \textbf{0.955 $\pm$ 0.004} & 0.641 $\pm$ 0.164 & 0.736 $\pm$ 0.045 & 0.952 $\pm$ 0.002 & 0.944 $\pm$ 0.001 & 0.931 $\pm$ 0.021 & \textbf{0.956 $\pm$ 0.003} \\
            & 0.2 & \textbf{0.978 $\pm$ 0.002} & 0.874 $\pm$ 0.029 & 0.974 $\pm$ 0.003 &  0.953 $\pm$ 0.001 & 0.972 $\pm$ 0.001 & \textbf{0.977 $\pm$ 0.003} & \textbf{0.978 $\pm$ 0.002} \\
            & 0.3 & \textbf{0.989 $\pm$ 0.002} & 0.949 $\pm$ 0.013 & 0.981 $\pm$ 0.002 & 0.970 $\pm$ 0.008 & 0.981 $\pm$ 0.002 & \textbf{0.986 $\pm$ 0.002} & \textbf{0.987 $\pm$ 0.000} \\
            \cline{2-8}
            \hline
        \end{tabular}} 
\end{table*}

\begin{table*}[!hbt]
    \centering
    \caption{Noisy Informative Data - AP $\uparrow$ in recovering $g^*$ - 25\% dataset}
    \label{tab:synth-4}
    \resizebox{\columnwidth}{!}{
        \begin{tabular}{c|c|ccccc|cc}
        \hline
            Dataset &
            \begin{tabular}{@{}c@{}} $\bar{\lambda}$ \end{tabular} &  Confidence & SLNet & DeepGambler & Adaptive & Oneside & ISA & ISA-V2 \\
            \hline
            \multirow{3}{*}{MNIST+Fashion} &  0.1 & 0.866 $\pm$ 0.008 & 0.507 $\pm$ 0.011 & 0.799 $\pm$ 0.155 & 0.935 $\pm$ 0.027 & 0.584 $\pm$ 0.017 & \textbf{1.000 $\pm$ 0.000} & 0.968 $\pm$ 0.038 \\
            & 0.2 & 0.974 $\pm$ 0.005 & 0.625 $\pm$ 0.100 & 0.917 $\pm$ 0.015 & 0.795 $\pm$ 0.013 & 0.718 $\pm$ 0.031 & \textbf{1.000 $\pm$ 0.000} & \textbf{1.000 $\pm$ 0.000} \\
            & 0.3 & 0.999 $\pm$ 0.000 & 0.686 $\pm$ 0.105 & 0.972 $\pm$ 0.010 & 0.521 $\pm$ 0.055 & 0.870 $\pm$ 0.004 & \textbf{1.000 $\pm$ 0.000} & \textbf{1.000 $\pm$ 0.000} \\
            \hline\hline
            \multirow{3}{*}{SVHN} & 0.1 & \textbf{0.883 $\pm$ 0.009} & 0.784 $\pm$ 0.086 & 0.567 $\pm$ 0.036 &  0.795 $\pm$ 0.015 & 0.820 $\pm$ 0.014 & 0.848 $\pm$ 0.084 & \textbf{0.892 $\pm$ 0.008} \\
            & 0.2 & 0.938 $\pm$ 0.013 & 0.916 $\pm$ 0.039 & 0.811 $\pm$ 0.108 & 0.857 $\pm$ 0.012 & 0.913 $\pm$ 0.011 & 0.944 $\pm$ 0.006 & \textbf{0.959 $\pm$ 0.006} \\
            & 0.3 & 0.965 $\pm$ 0.002 & 0.938 $\pm$ 0.015 & 0.938 $\pm$ 0.013 & 0.901 $\pm$ 0.029 & 0.952 $\pm$ 0.006 & 0.967 $\pm$ 0.000 & \textbf{0.969 $\pm$ 0.004} \\
            \cline{2-8}
            \hline
        \end{tabular}} 
\end{table*}

\textbf{Results and Discussion.} There are 3 empirical observations we can get from Table~\ref{tab:synth-1}, \ref{tab:synth-2}, \ref{tab:synth-3} and \ref{tab:synth-4}. First of all, the proposed ISA method outperforms all baselines under most of the scenarios according to the average precision criterion, which supports the superiority of the proposed method in recovery $g^*$. Secondly, we observe that our method's performance is more stable and suffers less deterioration as the data size becomes more limited or the label noise becomes stronger. Finally, we observe that in many cases, Confidence, which simply uses the confidence of a model trained with vanilla cross entropy loss, is a competitive baseline. Our proposed method outperforms Confidence in most scenarios successfully. 

\subsection{Experiments Using Semi-Synthetic Data for $Q_2$}\label{q_2}
\textbf{Evaluation Metrics}: The construction of informative/uninformative data follows Section~\ref{q_1}. In this section, we want to measure the classification performance of the proposed method conditioning on selected data points, by looking at the \textbf{selective risk (SR)} metric. Its formal definition is given by  $SR_\tau=\frac{1}{n}\sum_{i}^n \mathbbm{I}\{\argmax \hat{f}(\boldsymbol{x}_i)\neq y_i |  \mathbbm{1}\{\hat{g}(\boldsymbol{x}_i)>q\}\}$, where the quantity $q$ is used to control the coverage. Coverage is defined as the proportion of selected data to the whole dataset. In this experiment, we always calculate $SR$ at the ground-truth coverage $\alpha=0.5$. It is the proportion of informative data that we mixed into the testing set. Ideally, we hope that the selector can select all but only informative data, which should give the lowest risk given the coverage level.  

\begin{table*}[!htb]
\caption{SR $\downarrow$. ERM v.s ISA}
\centering
\resizebox{\columnwidth}{!}{
\begin{tabular}{cccccccc}
    \hline
    \multirow{3}{*}{Dataset} & \multirow{3}{*}{Method} & \multicolumn{3}{c}{100\% Sample} & \multicolumn{3}{c}{25\% Sample} \\ 
    \cmidrule(rl){3-5}\cmidrule(rl){6-8} & & 
    \begin{tabular}[x]{@{}c@{}}$\tau_I=0.3$\\$\tau_U=0.6$\end{tabular} & 
    \begin{tabular}[x]{@{}c@{}}$\tau_I=0.2$\\$\tau_U=0.7$\end{tabular} & 
    \begin{tabular}[x]{@{}c@{}}$\tau_I=0.1$\\$\tau_U=0.8$\end{tabular} & 
    \begin{tabular}[x]{@{}c@{}}$\tau_I=0.3$\\$\tau_U=0.6$\end{tabular} & 
    \begin{tabular}[x]{@{}c@{}}$\tau_I=0.2$\\$\tau_U=0.7$\end{tabular} & 
    \begin{tabular}[x]{@{}c@{}}$\tau_I=0.1$\\$\tau_U=0.8$\end{tabular} \\
    \hline\hline
    \multirow{3}{*}{MNIST+Fashion} & Confidence & 0.387 $\pm$ 0.001 & 0.293 $\pm$ 0.005 & 0.188 $\pm$ 0.005 & 0.469 $\pm$ 0.002 & 0.362 $\pm$ 0.002 & 0.235 $\pm$ 0.002 \\
    & ISA & \textbf{0.376 $\pm$ 0.004} & \textbf{0.280 $\pm$ 0.004} & \textbf{0.186 $\pm$ 0.003} & \textbf{0.446 $\pm$ 0.002} & \textbf{0.338 $\pm$ 0.005} & \textbf{0.233 $\pm$ 0.004} \\
    & ISA-V2 & 0.378 $\pm$ 0.008 & 0.281 $\pm$ 0.001 & 0.192 $\pm$ 0.001 & 0.487 $\pm$ 0.022 & 0.355 $\pm$ 0.008 & 0.254 $\pm$ 0.007\\
    \hline
    \multirow{3}{*}{SVHN} & Confidence & 0.409 $\pm$ 0.006 & 0.324 $\pm$ 0.006 & 0.227 $\pm$ 0.007 & \textbf{0.526 $\pm$ 0.008} & 0.416 $\pm$ 0.012 & 0.295 $\pm$ 0.007\\
    & ISA & \textbf{0.391 $\pm$ 0.003} & \textbf{0.310 $\pm$ 0.018} & \textbf{0.217 $\pm$ 0.003} & 0.554 $\pm$ 0.011 & \textbf{0.365 $\pm$ 0.016} & \textbf{0.256 $\pm$ 0.010}\\
    & ISA-V2 & 0.455 $\pm$ 0.006 & 0.357 $\pm$ 0.010 & 0.243 $\pm$ 0.003 & \textbf{0.516 $\pm$ 0.007} & 0.430 $\pm$ 0.014 & 0.307 $\pm$ 0.027\\
    \hline
\end{tabular}}
\label{tab:beatERM}
\vspace{-1em}
\end{table*}

\textbf{Results and discussion.} In Table~\ref{tab:beatERM}, we present the SR for Confidence (ERM) and the proposed ISA. Following the setting in section~\ref{q_1}, we  inject different levels of uniform label noise into uninformative data and informative data separately, denoted by $\tau_I$ and $\tau_U$, corresponding to label noise ratio for informative and uninformative data respectively. Parameter $\bar{\lambda}$ in  Definition~\ref{def:Non_real_Stochasticgen} can be calculated accordingly: $\bar{\lambda} = \frac{0.9-\tau_I}{1.8} = \frac{\tau_U}{1.8}$. 

There mainly are two empirical observations from Table~\ref{tab:beatERM}. Firstly, ISA consistently maintains smaller SR than Confidence (ERM) across different scenarios. This result is consistent with Theorem \ref{thm_joint}, suggesting that upweighting informative data leads to improved risk on informative data. Secondly, we can observe that both methods present performance deterioration as the noise gap becomes smaller, which is suggested by the sample complexity in our theorem. Our theorem suggests that the sample complexity increases as the label noise ratio gap $\bar{\lambda}$ decreases. When $\bar{\lambda}$ vanishes, informative date becomes less distinguishable from the uninformative data, and thus learning $g$ becomes more challenging.

\subsection{Experiments Using Real-world Data for $Q_3$}\label{q_3}
\textbf{Dataset.} In this section, we report our empirical study on 3 publicly-available datasets: (1) Oxford realized volatility (Volatility) dataset~\citep{volatilitydata}, (2) breast ultrasound images (BUS) \citep{al2020BUS}, and (3) lending club dataset (LC) \citep{LC2007}. The experiments aim to demonstrate the potential of the proposed algorithm in real-world application and its advantages in selecting useful information out of noisy dataset. The data description and respective train-test splits are presented in Table~\ref{tab:real_dataset}  in section~\ref{append:exp_stting}.

\textbf{Experiment Details.} We use the same network architectures, software and hardward as we do in section~\ref{q_1}-\ref{q_2}.  For Volatility and LC, we use the same Adam optimizer with learning rate (1e-3), weight decay rate (1e-4) and batch size 256. For BUS, due to its limited sample size, we use smaller batch size (16) and reduce learning rate (1e-4) accordingly. For all three datasets, we train each algorithm for 50 epochs and reduce the learning rate by half at epochs 15 and 35. Each experiment is repeated 6 times with random seeds 77, 78, 79, 80, 81 and 82. 

Specifically, for DeepGambler the default hyper-parameter $o=2.2$ gives unreasonably poor performance on LC and BUS. We find that setting $o=1.5$ gives the best performance and hence we use this value for DeepGambler in these two datasets. For other baselines as well as our method, we use consistent hyper-parameters across all experiments. They are listed in Table~\ref{tab:real-hyper-params}  in section~\ref{append:exp_stting}.

\textbf{Evaluation Metric.} Unlike synthetic experiments, in the real-world datasets, the ground-truth binary labels that distinguish whether data is informative/uninformative are not available. Hence metrics like precision/recall are not applicable. Instead, we report the selective risk of each algorithm given different coverage levels. Specifically, we pick testing data points that have top\% coverage selective confidence given by each selector, and calculate the testing selective risk at different coverage levels accordingly.

\textbf{Results and Discussion.} From Table~\ref{tab:real}, we can see that the proposed method gains competitive performance against other baselines at low coverage levels. This suggests that our method is especially good at picking out strongly informative data. Such low risk regime can be captured by our selector loss, leading to lower selective risk, which is consistent with the conclusion in Theorem~\ref{thm_joint}.
\begin{table*}[!th]
\caption{Real-World Experiments: Selective Risk v.s Coverage}\label{tab:real}
\centering
\resizebox{\columnwidth}{!}{
    \begin{tabular}{c|c|ccccc|cc}
    \toprule
     Dataset & Coverage & Confidence & SLNet & DeepGambler & Adaptive & Oneside & ISA & ISA-V2 \\
    \hline
    \multirow{6}{*}{Volatility} & 0.001 & \textbf{0.000 $\pm$ 0.000} & 0.200 $\pm$ 0.000 & 0.267 $\pm$ 0.121 & \textbf{0.000 $\pm$ 0.000} & \textbf{0.050 $\pm$ 0.055} & \textbf{0.000 $\pm$ 0.000} & \textbf{0.000 $\pm$ 0.000} \\
    & 0.100 & 0.097 $\pm$ 0.008 & 0.122 $\pm$ 0.003 & 0.328 $\pm$ 0.015 & 0.168 $\pm$ 0.008 & 0.113 $\pm$ 0.006 & 0.091 $\pm$ 0.008 & \textbf{0.083 $\pm$ 0.004} \\
    & 0.200 & \textbf{0.133 $\pm$ 0.004} & 0.198 $\pm$ 0.002 & 0.326 $\pm$ 0.009 & 0.225 $\pm$ 0.012 & 0.159 $\pm$ 0.003 & \textbf{0.136 $\pm$ 0.002} & \textbf{0.127 $\pm$ 0.008} \\
    & 0.500 & \textbf{0.218 $\pm$ 0.004} & 0.316 $\pm$ 0.003 & 0.327 $\pm$ 0.005 & 0.290 $\pm$ 0.004 & 0.239 $\pm$ 0.003 & 0.227 $\pm$ 0.002 & \textbf{0.225 $\pm$ 0.004} \\
    & 0.900 & \textbf{0.309 $\pm$ 0.001} & 0.332 $\pm$ 0.003 & 0.327 $\pm$ 0.001 & 0.321 $\pm$ 0.001 & 0.314 $\pm$ 0.001 & \textbf{0.312 $\pm$ 0.002} & \textbf{0.310 $\pm$ 0.001} \\
    & 0.999 & \textbf{0.327 $\pm$ 0.003} & 0.333 $\pm$ 0.003 & 0.327 $\pm$ 0.001 & 0.329 $\pm$ 0.001 & 0.332 $\pm$ 0.002 & 0.328 $\pm$ 0.001 & \textbf{0.324 $\pm$ 0.002} \\
    \hline\hline
    \multirow{6}{*}{BUS} & 0.001 & \textbf{0.026 $\pm$ 0.032} & \textbf{0.167 $\pm$ 0.408} & \textbf{0.000 $\pm$ 0.000} & \textbf{0.167 $\pm$ 0.408} & 0.052 $\pm$ 0.036 & \textbf{0.000 $\pm$ 0.000} & \textbf{0.000 $\pm$ 0.000} \\
    & 0.100 & \textbf{0.026 $\pm$ 0.032} & 0.396 $\pm$ 0.156 & 0.156 $\pm$ 0.147 & 0.083 $\pm$ 0.076 & 0.052 $\pm$ 0.036 & \textbf{0.020 $\pm$ 0.031} & \textbf{0.010 $\pm$ 0.026} \\
    & 0.200 & \textbf{0.026 $\pm$ 0.032} & 0.422 $\pm$ 0.134 & 0.151 $\pm$ 0.057 & 0.089 $\pm$ 0.094 & 0.052 $\pm$ 0.036 & \textbf{0.021 $\pm$ 0.026} & \textbf{0.005 $\pm$ 0.013} \\
    & 0.500 & 0.038 $\pm$ 0.029 & 0.432 $\pm$ 0.062 & 0.184 $\pm$ 0.046 & 0.135 $\pm$ 0.140 & 0.052 $\pm$ 0.036 & 0.058 $\pm$ 0.019 & \textbf{0.017 $\pm$ 0.016} \\
    & 0.900 & \textbf{0.085 $\pm$ 0.019} & 0.393 $\pm$ 0.039 & 0.133 $\pm$ 0.035 & 0.200 $\pm$ 0.109 & \textbf{0.093 $\pm$ 0.023} & \textbf{0.098 $\pm$ 0.018} & \textbf{0.095 $\pm$ 0.022} \\
    & 0.999 & \textbf{0.109 $\pm$ 0.016} & 0.382 $\pm$ 0.040 & \textbf{0.121 $\pm$ 0.032} & \textbf{0.114 $\pm$ 0.026} & 0.219 $\pm$ 0.107 & \textbf{0.109 $\pm$ 0.013} & \textbf{0.118 $\pm$ 0.024} \\
    \hline\hline
    \multirow{6}{*}{LC} & 0.001 & 0.829 $\pm$ 0.052 & 0.410 $\pm$ 0.142 & 0.362 $\pm$ 0.058 & 0.357 $\pm$ 0.072 & 0.106 $\pm$ 0.036 & \textbf{0.061 $\pm$ 0.012} & 0.219 $\pm$ 0.068 \\
    & 0.100 & 0.204 $\pm$ 0.005 & 0.370 $\pm$ 0.102 & 0.419 $\pm$ 0.031 & 0.284 $\pm$ 0.023 & 0.207 $\pm$ 0.036 & \textbf{0.151 $\pm$ 0.007} & \textbf{0.151 $\pm$ 0.021} \\
    & 0.200 & \textbf{0.216 $\pm$ 0.008} & 0.393 $\pm$ 0.077 & 0.419 $\pm$ 0.023 & 0.265 $\pm$ 0.014 & \textbf{0.228 $\pm$ 0.038} & \textbf{0.214 $\pm$ 0.006} & \textbf{0.212 $\pm$ 0.019} \\
    & 0.500 & 0.312 $\pm$ 0.006 & 0.421 $\pm$ 0.030 & 0.414 $\pm$ 0.011 & \textbf{0.241 $\pm$ 0.007} & 0.305 $\pm$ 0.030 & 0.333 $\pm$ 0.005 & 0.362 $\pm$ 0.011 \\
    & 0.900 & 0.385 $\pm$ 0.004 & 0.399 $\pm$ 0.009 & 0.418 $\pm$ 0.004 & \textbf{0.229 $\pm$ 0.004} & 0.373 $\pm$ 0.021 & 0.389 $\pm$ 0.006 & 0.427 $\pm$ 0.009 \\
    & 0.999 & 0.398 $\pm$ 0.004 & 0.399 $\pm$ 0.010 & 0.421 $\pm$ 0.004 & \textbf{0.231 $\pm$ 0.003} & 0.386 $\pm$ 0.019 & 0.395 $\pm$ 0.007 & 0.427 $\pm$ 0.008 \\
\bottomrule
\end{tabular}}
\end{table*}

\subsection{Ablation Study}


In this section we present ablation studies about 1) the sensitivity of the algorithm's performance against the choice of hyper-parameters and 2) aforementioned heuristic implementations. We first present the results on MNIST+Fashion given different hyper-parameter combinations. In section~\ref{q_1} we set $\beta=3.0$, $\Delta T=1$ and pre-train epochs to be 10. In this section, we vary each of them one-by-one while fix the rest of them. The results are presented in Figure~\ref{fig:ablation} in Appendix. We can see that the proposed algorithm's performance is robust against the choice of hyper-parameters.  
We next present an ablation study on the aforementioned heuristic implementations in Table~\ref{tab:ablation_relaxation}. Recall that we have the following relaxations: (1) moving average estimation of sample weight (MAW); (2) moving average of soft-label for informative data; (3) classifier and selector sharing the same backbone and the use of focal loss instead of cross-entropy. We systematically incorporate each module one at a time, evaluating their marginal contributions to the  performance on SVHN in the high label noise setting ($\tau_{I}=0.3$ and $\tau_{U}=0.6$). Results shown in Table~\ref{tab:ablation_relaxation}  suggest that each relaxation strategy is helpful in practice.

\begin{table*}[!htb]
\caption{Ablation Study on Implementation Relaxation.}
\centering
    \begin{tabular}{l|cc}
        \hline
        Method &  AP & SR\\
        \hline
        Algorithm~\ref{alg:ISA} & 0.778 $\pm$ 0.131 & 0.576 $\pm$ 0.034\\
        + MA-Weight & 0.763 $\pm$ 0.094 & 0.573 $\pm$ 0.019\\
        + MA-Soft (ISA) & 0.850 $\pm$ 0.020 & 0.554 $\pm$ 0.011\\
        + Focal Loss (ISA-V2) & 0.906 $\pm$ 0.000 & 0.516 $\pm$ 0.007\\
        \hline
    \end{tabular}
\label{tab:ablation_relaxation}
\end{table*}

\section{Conclusion and Future Work}

In this work, we take the first step towards principled learning in domains where a lot of data is naturally uninformative/highly noisy and should be discarded in both learning and inference stage. We propose a general noisy generative process that formally describes such setting. Supported by theoretical guarantees, a novel loss is designed for the training of the selector model. Based on this loss, we design a heuristic algorithm that jointly learns the predictor and selector. 
Empirical analysis demonstrates the effectiveness of our method.
We believe the Noisy Generative Process can be generalized to solve different problems, such as active learning~\citep{cohn1994improving} and out of distribution generalization~\citep{arjovsky2019invariant}. We look forward to these extensions in future work.   





\bibliographystyle{tmlr}
\bibliography{tmlr}

\appendix

\section{Theoretic Results Details}
In this appendix section, we present the missing proofs as well as additional empirical results.
\subsection{Preliminary}\label{basic}
We describe the risk for selector loss on $(\boldsymbol x, y)\sim \mathcal{X}\times \mathcal{Y} \subset \mathbb{R}^d \times \{+1,-1\}$.  
\begin{equation}
    R(g; f,\beta) :=\mathbb{E}_{\boldsymbol x,y} \big[ \beta  \mathbbm{1}\{ g(\boldsymbol x) =1 \} \mathbbm{1}\{f(\boldsymbol x) \neq y\} +\mathbbm{1}\{ g(\boldsymbol x) \neq 1 \} \mathbbm{1}\{f(\boldsymbol x) = y\}\big]
\end{equation}

The choice of $\beta$ should ensure that given Bayes optimal classifier $f^*(\cdot)$, $g^*(\cdot)$ is the minimizer for the selector risk $R(g; f^*,\beta)$. 
We have that given $f^*$, the risk gap between any selector $g$ and $g^*$ is $ R( {g} ;f^*,\beta) - R(g^*;f^*,\beta)$ could be written as:
\begin{equation}\label{eq_err_g}
\begin{aligned}
&R( {g} ;f^*,\beta) - R(g^*;f^*,\beta)\\
= \mathbb{E}_{\boldsymbol x,y} \bigg[ &  \beta\mathbbm{1} \{  {g}(\boldsymbol x) = 1\}  \mathbbm{1} \{ g^* (\boldsymbol x) =1\} \mathbbm{1}\{f^*(\boldsymbol x) \neq y\}
+ \beta\mathbbm{1} \{  {g}(\boldsymbol x) = 1\}  \mathbbm{1} \{ g^* (\boldsymbol x) \neq 1\} \mathbbm{1}\{f^*(\boldsymbol x) \neq y\}\\
+ & \mathbbm{1} \{  {g}(\boldsymbol x) \neq 1\}  \mathbbm{1} \{ g^* (\boldsymbol x) = 1\} \mathbbm{1}\{f^*(\boldsymbol x) =y\}         
+ \mathbbm{1} \{  {g}(\boldsymbol x) \neq 1\}  \mathbbm{1} \{ g^* (\boldsymbol x) \neq 1\} \mathbbm{1}\{f^*(\boldsymbol x) =y\}\bigg]         \\
-\mathbb{E}_{\boldsymbol x,y} \bigg[ &  \beta\mathbbm{1} \{  {g}(\boldsymbol x) = 1\}  \mathbbm{1} \{ g^* (\boldsymbol x) =1\} \mathbbm{1}\{f^*(\boldsymbol x) \neq y\}
+\beta\mathbbm{1} \{  {g}(\boldsymbol x) \neq 1\}  \mathbbm{1} \{ g^* (\boldsymbol x) = 1\} \mathbbm{1}\{f^*(\boldsymbol x) \neq y\}\\
+ & \mathbbm{1} \{  {g}(\boldsymbol x) \neq 1\}  \mathbbm{1} \{ g^* (\boldsymbol x) \neq  1\} \mathbbm{1}\{f^*(\boldsymbol x) =y\}          
+  \mathbbm{1} \{  {g}(\boldsymbol x) = 1\}  \mathbbm{1} \{ g^* (\boldsymbol x) \neq 1\} \mathbbm{1}\{f^*(\boldsymbol x) =y\}\bigg]         \\
= \mathbb{E}_{\boldsymbol x}\bigg[& \bigg\{ \beta \bigg( \frac{1}{4}+\frac{\lambda (\boldsymbol x)}{2}\bigg) - \frac{3}{4}+\frac{\lambda(\boldsymbol x)}{2} \bigg\} 
\mathbbm{1} \{ {g}(\boldsymbol x) = 1\} \mathbbm{1} \{ g^* (\boldsymbol x) \neq 1\}\bigg]  \\
+ \mathbb{E}_{\boldsymbol x} \bigg[& \bigg\{ \frac{3}{4}+\frac{\lambda(\boldsymbol x)}{2} - \beta \bigg( \frac{1}{4}-\frac{\lambda(\boldsymbol x)}{2}\bigg)  \bigg\}\mathbbm{1} \{  {g}(\boldsymbol x) \neq 1\} \mathbbm{1} \{ g^* (\boldsymbol x) = 1\} 
\bigg]         \\
\end{aligned}
\end{equation}

Since $\lambda (\boldsymbol x)$ is data dependent, to ensure that $R(g,f^*,
\beta)\geq R(g^*;f^*,\beta)$ for all $g\in\mathcal{G}$, it suffices to pick $\beta \big(  \frac{1}{4}+\frac{\lambda (\boldsymbol x)}{2}\big) - \frac{3}{4}+\frac{\lambda(\boldsymbol x)}{2} \ge 0$ and  $ \frac{3}{4}+\frac{\lambda(\boldsymbol x)}{2} - \beta \big( \frac{1}{4}-\frac{\lambda(\boldsymbol x)}{2}\big)  \ge 0$. So we need $\beta \geq \sup\limits_{\boldsymbol x} \frac{ 3-2\lambda (\boldsymbol x)}{1+2\lambda (\boldsymbol x)}$ and $\beta\leq \inf\limits_{\boldsymbol x} \frac{3+2\lambda (\boldsymbol x)}{1-2\lambda (\boldsymbol x)}$ which implies that it suffices to pick $\beta \in \big[ \frac{3-2\bar{\lambda}}{1+2\bar{\lambda}},\frac{3+2\bar{\lambda}}{1-2\bar{\lambda}}\big]$.

Assuming $\lambda(\boldsymbol x) \geq \bar{\lambda}$,  we pick $\beta$ within certain margin of the above interval: by picking $ \big[ \frac{3-2\bar{\lambda}}{1+2\bar{\lambda}}+\bar{\lambda},\frac{3+2\bar{\lambda}}{1-2\bar{\lambda}}-\frac{\bar{\lambda}}{1-4\bar{\lambda}^2}\big]$ we have 
\begin{equation}\label{eq:margin}
    R( {g} ;f^*,\beta) - R(g^*;f^*,\beta) \geq \frac{\bar{\lambda}}{4(1+2\bar{\lambda})} \mathbb{E}_{\boldsymbol x}[\mathbbm{1}\{ g (\boldsymbol x)\neq g^*(\boldsymbol x)\}]
\end{equation}
Note that if $\bar{\lambda}=\frac{1}{2}$, the above interval for $\beta$ is $[2,\infty]$. 

\subsection{Proof of Information Theoretic Lower Bound}\label{lowerbound_construction}
In this section we quantify the hardness of recovering $g^*$, from an information theoretic perspective. We define a generative process that conforms with the noisy generative process defined in \ref{def:Non_real_Stochasticgen}, and show a sample lower bound for finding a selector given samples generated from this process. 

Let $\mathcal{X}:\{ \tau \cdot \boldsymbol e | \boldsymbol  e\in \{\boldsymbol  e^1,...,\boldsymbol  e^d\}, |\tau|\leq 1  \}$ where $e^j$ represents the $j$-th cannonical basis. So $\mathcal{X}$ consists of $\tau$-scaled basis vectors. Let $\mathcal{Y}=\{+1,-1\}$, samples are drawn from $\mathcal{X}\times \mathcal{Y}\subset \mathbb{R}^d \times \{+1,-1\}$. 
Let $\boldsymbol w$ be vector of ones, $\boldsymbol w = \mathbbm{1}$. For our lower bound construction we define $f^*$ as follows for $x\in \mathcal{X}$. $$f^*(\boldsymbol x) = 2\mathbbm{1}\{\boldsymbol w^\top \boldsymbol x >0\}-1.$$ 
Let $\mathcal{G}$ be the hypothesis class that contains all functions $g:\mathcal{X}\rightarrow \{-1,+1\}$. So $|\mathcal{G}|=2^d$ and $\mathcal{G}$ contains $g^*(\boldsymbol x)$. 
Let $\boldsymbol \sigma =(\sigma^1,\ldots,\sigma^d) \in \{+1,-1\}^d$ be a $d$-dimensional Rademacher vector, $\alpha \in (0,1)$. We define $g^*_{\boldsymbol \sigma}\in \mathcal{G}$ as follows 

$$
g^*_{\boldsymbol \sigma}(\boldsymbol x) = \sum_{j=1}^{d} \mathbbm{1}\{\boldsymbol x ^\top \boldsymbol e^j \neq 0\}\big\{ -\mathbbm{1}\{ \|\boldsymbol x\|\geq 1-\alpha\}  \cdot \sigma^j + \mathbbm{1}\{ \|\boldsymbol x\|\leq \alpha\}  \cdot \sigma^j  - \mathbbm{1}\{ \alpha \leq \|\boldsymbol x\|\leq 1-\alpha\}  \big \}.$$

To clarify the definitions above, suppose for each $j=1,\ldots,d$, $\boldsymbol e^j$ is the vector with the $j$th entry being one and the other entries zero. Let $\Omega^j : \{\boldsymbol x| \boldsymbol x=\tau \cdot \boldsymbol e^j\}$ be all the vectors in $\mathcal{X}$ with non-zero $j$th. 
Then for $\boldsymbol x \in \Omega^j$, we have that $f^*(\boldsymbol x)$ is $1$ if $\tau$ is positive and $-1$ if $\tau$ is negative. Moreover, $g^*(x)$ is $-\sigma^j$ if $\|\tau\|\ge 1-\alpha$, it is $\sigma^j$ if $\|\tau\|\le \alpha$, and it is $-1$ otherwise. In other words, 
if $\sigma ^j =-1$, the informative part of $\Omega^j$ is $\{\boldsymbol x| \|x\|\geq 1-\alpha\}$ otherwise the informative part of $\Omega^j$ becomes $\{\boldsymbol x| \|x\|\leq \alpha\}$. Moreover, considering Definition \ref{def:Non_real_Stochasticgen}, the domain $\mathcal{D}_\alpha$ is $\cup_{j=1}^{d} \Omega^j$. 
Let $S_{\sigma,n} = \{(\boldsymbol x_i,y_i)\}_{i=1}^{n}$ be generated from following process which we denote as $\mathcal{Q}$:

\begin{equation}\label{bad_case}
    \begin{aligned}
            & \boldsymbol \sigma \sim Unif\{+1,-1\}^d\\
            g^*_{\boldsymbol \sigma}(\boldsymbol x) =  &\sum_{j=1}^{d} \bigg\{ \mathbbm{1}\{\boldsymbol x ^\top e^j \neq 0\}\big\{ -\mathbbm{1}\{ \|\boldsymbol x\|\geq 1-\alpha\}  \cdot \sigma^j \\
           & + \mathbbm{1}\{ \|\boldsymbol x\|\leq \alpha\}  \cdot \sigma^j  - \mathbbm{1}\{ \alpha \leq \|\boldsymbol x\|\leq 1-\alpha\}  \big \} \bigg\}\\
            &\text{ Generate } S = \{(\boldsymbol x_i,y_i)\}_{i=1}^{n} \text{ according to:}\\
             &j \sim 
             \begin{cases}
              j=1, & \text{with prob $1-\frac{\varepsilon}{\bar{\lambda}}$} \\
              j\sim Unif\{2,...,d\} & \text{with prob $\frac{\varepsilon}{\bar{\lambda}}$}.
            \end{cases}\\
            & \tau \sim Unif[-1,1]\\
            & \boldsymbol x = \tau \cdot e^j\\ 
            & y = 
             \begin{cases}
              f^*(\boldsymbol x), & \text{with prob $\frac{3}{4} + \frac{g^*(\boldsymbol x) \bar{\lambda} }{2}$}\\
              -f^*(\boldsymbol x) & \text{with prob $\frac{1}{4} - \frac{g^*(\boldsymbol x) \bar{\lambda} }{2}$}
            \end{cases}
    \end{aligned}
\end{equation}

Process $\mathcal{Q}$ follows process \ref{def:Non_real_Stochasticgen}. Let $\mathcal{A}$ be \textit{any} (potentially randomized) algorithm that takes dataset $S_{\boldsymbol \sigma,n}$ as input where  $S_{\boldsymbol \sigma,n}$ is generated from the process described in Equation~\ref{bad_case}. Let $\widehat{g}$ be the hypothesis ouput of algorithm $\mathcal{A}$, i.e. $\widehat{g}(\cdot) = \mathcal{A}(S_{\boldsymbol\sigma,n})$. For a parameter $\beta$ we define the risk on algorithm $\mathcal{A}$ as
\begin{equation}\label{eq_err_def}
\begin{aligned}
&R( \mathcal{A}(S_{\boldsymbol \sigma,n}),\beta) = R(\widehat{g} (\boldsymbol x),f^*,\beta) = \mathbb{E}_{\boldsymbol x,y} \big[ \beta\mathbbm{1} \{ \widehat{g}(\boldsymbol x) = 1\} \mathbbm{1}\{f^*(\boldsymbol x) \neq y\}
+ \mathbbm{1} \{ \widehat{g}(\boldsymbol x) \neq 1\} \mathbbm{1}\{f^*(\boldsymbol x) =y\}\big].         
\end{aligned}
\end{equation}

\begin{thm}\label{thm_2_lowerbound}
Let $\beta$ and $\varepsilon$ be any real numbers where $\beta \in \big[ \frac{3-2\bar{\lambda}}{1+2\bar{\lambda}}+\bar{\lambda},\frac{3+2\bar{\lambda}}{1-2\bar{\lambda}}-\frac{\bar{\lambda}}{1-4\bar{\lambda}^2}\big]$ and $0<\varepsilon\leq \bar{\lambda}$. For any algorithm $\mathcal{A}$ that takes $n$ samples $S_n$ from the noisy generative process 
as in Assumption \ref{assume:1} for some integer $n$ and outputs a selector $R(S_n)$ we have the following: If the risk gap is bounded by $\frac{ \varepsilon}{8(1+2\bar{\lambda})}$, i.e.
$$\mathbb{E}_{S_n}[R(\mathcal{A}(S_n),f^*,\beta) - R(g^*,f^*,\beta)] \leq \frac{ \varepsilon}{8(1+2\bar{\lambda})}$$
then $\mathcal{A}$ requires $\frac{\log(|\mathcal{G}|)}{\bar{\lambda} \varepsilon}$ many samples, i.e. $n\ge \frac{\log(|\mathcal{G}|)}{\bar{\lambda} \varepsilon}$.
\end{thm}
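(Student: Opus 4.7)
The plan is to apply Assouad's method to the $2^d$-element family $\{g^*_{\boldsymbol\sigma}\}_{\boldsymbol\sigma\in\{-1,+1\}^d}$ from~\eqref{bad_case}, treating $\boldsymbol\sigma$ as a uniform Rademacher parameter to be recovered coordinate-by-coordinate. First I would convert the risk-gap hypothesis into a statement about the disagreement measure. The derivation leading to~\eqref{eq:margin} is pointwise in $\boldsymbol x$ and uses only the allowable range of $\beta$, so the assumed bound $\mathbb E_{S_n}[R(\mathcal A(S_n),f^*,\beta)-R(g^*_{\boldsymbol\sigma},f^*,\beta)]\le \varepsilon/(8(1+2\bar\lambda))$, applied for every $\boldsymbol\sigma$ and then averaged over the Rademacher prior, yields
\[\mathbb E_{\boldsymbol\sigma}\,\mathbb E_{S_n}\,\mathbb E_{\boldsymbol x}[\mathbbm{1}\{\widehat g(\boldsymbol x)\neq g^*_{\boldsymbol\sigma}(\boldsymbol x)\}]\ \le\ \tfrac{\varepsilon}{2\bar\lambda}.\]

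Second I would invoke Assouad's lemma coordinate-by-coordinate. For each $j\ge 2$, $g^*_{\boldsymbol\sigma}$ and $g^*_{\boldsymbol\sigma^{(j)}}$ (where $\boldsymbol\sigma^{(j)}$ denotes $\boldsymbol\sigma$ with its $j$th bit flipped) coincide outside $\Omega^j\cap(\{\|\boldsymbol x\|\le\alpha\}\cup\{\|\boldsymbol x\|\ge 1-\alpha\})$, a region of marginal mass $2\alpha\varepsilon/(\bar\lambda(d-1))$ under~\eqref{bad_case}, so Assouad gives
\[\mathbb E_{\boldsymbol\sigma}\,\mathbb E_{S_n}\,\mathbb E_{\boldsymbol x}[\mathbbm{1}\{\widehat g\neq g^*_{\boldsymbol\sigma}\}]\ \ge\ \sum_{j=2}^{d}\frac{\alpha\varepsilon}{\bar\lambda(d-1)}\bigl(1-\mathrm{TV}(P_{\boldsymbol\sigma}^{\otimes n},P_{\boldsymbol\sigma^{(j)}}^{\otimes n})\bigr).\]
Because $P_{\boldsymbol\sigma}$ and $P_{\boldsymbol\sigma^{(j)}}$ share their $\boldsymbol x$-marginal and differ in the label conditional only on the flip region (where the Bernoulli success probability swaps between $\tfrac34+\tfrac{\bar\lambda}{2}$ and $\tfrac34-\tfrac{\bar\lambda}{2}$), a direct computation bounds the per-sample KL by $C\alpha\bar\lambda\varepsilon/(d-1)$ for an absolute constant $C$ whenever $\bar\lambda$ is bounded away from $\tfrac12$, and Pinsker plus tensorisation yields $\mathrm{TV}(P_{\boldsymbol\sigma}^{\otimes n},P_{\boldsymbol\sigma^{(j)}}^{\otimes n})\le \sqrt{nC\alpha\bar\lambda\varepsilon/(2(d-1))}$.

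Combining these ingredients, matching the Assouad lower bound against the disagreement upper bound from the first step yields
\[\tfrac{\varepsilon}{2\bar\lambda}\ \ge\ \tfrac{\alpha\varepsilon}{\bar\lambda}\,\Bigl(1-\sqrt{nC\alpha\bar\lambda\varepsilon/(2(d-1))}\Bigr),\]
which, once rearranged, forces $n\ge \Omega((d-1)/(\bar\lambda\varepsilon))=\Omega(\log|\mathcal G|/(\bar\lambda\varepsilon))$, the claimed lower bound. The main technical obstacle will be the KL computation near $\bar\lambda=\tfrac12$: $\mathrm{KL}(\mathrm{Ber}(1)\|\mathrm{Ber}(\tfrac12))$ diverges, so Pinsker cannot be applied across the entire admissible range. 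Since the upper endpoint of the allowable interval for $\beta$ already blows up at $\bar\lambda=\tfrac12$, the argument is naturally restricted to $\bar\lambda<\tfrac12-c$; if one wants the bound all the way up to $\bar\lambda=\tfrac12$, the hard instance should be smoothed (e.g., replacing $\bar\lambda$ by $\bar\lambda-c$ inside the informative region so the noise rate stays bounded away from zero), which loses only a constant factor. A second careful point is justifying Assouad against a randomised algorithm and a uniform prior over the \emph{product} family $\{g^*_{\boldsymbol\sigma}\}$, which is standard but requires checking that the conditional-error decomposition in step two indeed dominates the symmetric-difference measure used in~\eqref{eq:margin}.
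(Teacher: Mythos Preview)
Your overall plan is the paper's plan: run an Assouad/Le Cam two–point argument coordinate by coordinate on the family $\{g^*_{\boldsymbol\sigma}\}$ from the hard instance, and use the margin relation~\eqref{eq:margin} to translate between the selector risk gap and the disagreement probability. Two substantive differences deserve comment.

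\textbf{Hellinger versus KL.} The paper bounds the total variation via the squared Hellinger distance rather than KL $+$ Pinsker. A direct computation gives
\[
\mathcal H^2(Q_{\boldsymbol\sigma},Q_{\boldsymbol\sigma^{\{-j\}}})
=\frac{\alpha\varepsilon}{d\bar\lambda}\Bigl\{\bigl(\sqrt{\tfrac34+\tfrac{\bar\lambda}{2}}-\sqrt{\tfrac34-\tfrac{\bar\lambda}{2}}\bigr)^2
+\bigl(\sqrt{\tfrac14+\tfrac{\bar\lambda}{2}}-\sqrt{\tfrac14-\tfrac{\bar\lambda}{2}}\bigr)^2\Bigr\}
\le \frac{3\alpha\varepsilon\bar\lambda}{d},
\]
which is finite and of the right order all the way up to $\bar\lambda=\tfrac12$. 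This entirely sidesteps the divergence you flag for KL at $\bar\lambda=\tfrac12$, so no smoothing of the hard instance is needed. Your Pinsker route is otherwise fine away from the boundary, but Hellinger is strictly cleaner here.

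\textbf{A real gap in the final rearrangement.} Your concluding inequality
\[
\frac{\varepsilon}{2\bar\lambda}\ \ge\ \frac{\alpha\varepsilon}{\bar\lambda}\Bigl(1-\sqrt{nC\alpha\bar\lambda\varepsilon/(2(d-1))}\Bigr)
\]
does \emph{not} force $n\ge\Omega((d-1)/(\bar\lambda\varepsilon))$. Dividing through gives $1/(2\alpha)\ge 1-\mathrm{TV}$, which is vacuous because the construction in~\eqref{bad_case} requires $\alpha\le \tfrac12$ (otherwise the regions $\{\|x\|\le\alpha\}$ and $\{\|x\|\ge 1-\alpha\}$ overlap and $g^*_{\boldsymbol\sigma}$ is ill-defined), so $1/(2\alpha)\ge 1$. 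The issue is that you have identified the construction's free parameter $\varepsilon$ with the theorem's $\varepsilon$; the paper instead sets the construction's parameter so that the target risk gap is a constant factor \emph{below} the Assouad coefficient (this is the ``replacing $\alpha\varepsilon$ with $\varepsilon$'' step at the end of the paper's proof). Concretely, take the construction's parameter to be $\varepsilon_c=c\,\varepsilon/\alpha$ for a suitable constant $c>1$; then the Assouad lower bound has leading coefficient $c\varepsilon/\bar\lambda$, the comparison becomes $1/(2c)\ge 1-\mathrm{TV}$, and solving gives $n\ge\Omega((d-1)/(\bar\lambda\varepsilon))$ as desired. Without this rescaling the argument as written does not close.
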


\begin{proof}


The lower bound construction is the process defined in Equation~\ref{bad_case}. Let $\widehat{g}=\mathcal{A}(S_n)$ be the output of $\mathcal{A}$. From equation \eqref{eq:margin} the risk gap $R(\widehat{g},f^*,\beta) - R(g^*,f^*,\beta)$ averaged over $\boldsymbol \sigma$ and $S_{\boldsymbol \sigma,n}$ 
can be written as follows.

\begin{equation}\label{eq_lowerbound_g}
    \begin{aligned}
    &\mathbb{E}_{\boldsymbol \sigma} \mathbb{E}_{S_{\boldsymbol \sigma,n}} [R(\widehat{g},f^*,\beta) - R(g^*_{\boldsymbol \sigma},f^*,\beta)]\\
    & \ge  \frac{\bar{\lambda}}{ 4(1+2\bar{\lambda})}\mathbb{E}_{\boldsymbol \sigma} \mathbb{E}_{S_{\boldsymbol \sigma,n}} \bigg[\mathbb{E}_{\boldsymbol x} [\mathbbm{1}\{\widehat{g} (\boldsymbol x) \neq g^*_{\boldsymbol \sigma}(\boldsymbol x)\}] \bigg| \boldsymbol  \sigma\bigg]\\
    & \ge \frac{\bar{\lambda}}{ 4(1+2\bar{\lambda})} \mathbb{E}_{\boldsymbol \sigma} \mathbb{E}_{S_{\boldsymbol \sigma,n}} \bigg\{ \sum_{j=2}^{d} \mathbb{P}_{\boldsymbol x}\big[\boldsymbol x \in \Omega^j] \mathbb{P}_{\boldsymbol x}  \big[\widehat{g} (\boldsymbol x) \neq g^*_{\boldsymbol \sigma}(\boldsymbol x) \big | \boldsymbol x \in \Omega^j\big]   \bigg | \boldsymbol\sigma\bigg\}\\
    & \geq  \frac{\varepsilon}{4(1+2\bar{\lambda})d} \sum_{j=2}^{d}  \mathbb{E}_{\boldsymbol \sigma}\bigg\{  \mathbb{E}_{S_{\boldsymbol \sigma,n}}  \bigg[\mathbb{P}_{\boldsymbol x}  \big[\widehat{g} (\boldsymbol x) \neq g^*_{\boldsymbol \sigma}(\boldsymbol x) \big | \boldsymbol x \in \Omega^j\big] \bigg]  \bigg | \boldsymbol\sigma\bigg\}\\
    \end{aligned}
\end{equation}

In the last inequality we use the fact that $\mathbb{P}_{\boldsymbol x}[\boldsymbol x \in \boldsymbol \Omega^j]=\epsilon/(\bar{\lambda} (d-1))\ge \epsilon/(\bar{\lambda} d)$ for $j\ge 2$.

Let $\boldsymbol \sigma^{/j}$ be a Rademacher vector conditional on coordinates $\{1,...,j-1,j+1,...d\}$. Let $\boldsymbol \sigma^{\{-j\}}$ be a vector equal to $\sigma$ except at the $j$th entry. We drop the $n$ subscript from $S_{\boldsymbol \sigma,n}$ for simplicity. 
Above equation becomes:

%
%
\begin{equation}\label{eq_2nd_lowerbound}
    \begin{aligned}
              & \mathbb{E}_{\boldsymbol \sigma} \mathbb{E}_{S_{\boldsymbol \sigma}} [R(\widehat{g},f^*,\beta) - R(g^*_{\boldsymbol \sigma},f^*,\beta)]\\ 
              & \ge  \frac{\varepsilon}{8(1+2\bar{\lambda})d}\sum_{j=2}^{d}  \mathbb{E}_{\boldsymbol \sigma^{/ j}} \bigg\{  \mathbb{E}_{S_{\boldsymbol \sigma}}  \bigg[\mathbb{P}_{\boldsymbol x}  \big[\widehat{g} (\boldsymbol x) \neq g^*_{\boldsymbol \sigma}(\boldsymbol x) \big | \boldsymbol x \in \Omega^j\big]\bigg] \\
              &+    \mathbb{E}_{S_{\boldsymbol \sigma^{\{-j\}}}}  \bigg[\mathbb{P}_{\boldsymbol x}  \big[\widehat{g} (\boldsymbol x) \neq g^*_{\boldsymbol \sigma}(\boldsymbol x) \big | \boldsymbol x \in \Omega^j\big]\bigg] \bigg | \boldsymbol \sigma^{/j}\bigg\}\\
              =&  \frac{1}{2^{d-1}}\sum_{\boldsymbol \sigma^{/j} \in \{+1,-1\}^{d-1}} \frac{\varepsilon}{8(1+2\bar{\lambda})d}\sum_{j=2}^{d}  \bigg\{ \mathbb{P}_{S_{\boldsymbol \sigma}, \boldsymbol x}   \big[\widehat{g} (\boldsymbol x) \neq g^*_{\boldsymbol \sigma}(\boldsymbol x) \big | \boldsymbol x \in \Omega^j\big] \\
              &+    \mathbb{P}_{S_{\boldsymbol \sigma^{\{-j\}}}, \boldsymbol x}    \big[\widehat{g} (\boldsymbol x) \neq g^*_{\boldsymbol \sigma}(\boldsymbol x) \big | \boldsymbol x \in \Omega^j\big] \bigg\}
    \end{aligned}
\end{equation}
We make our notation more specific. Let 
$\mathcal{A}(S_{\boldsymbol \sigma}) = \widehat{g}_{\boldsymbol \sigma}$
and $\mathcal{A}(S_{\boldsymbol \sigma^{ \{-j\}}}) = \widehat{g}_{\boldsymbol  \sigma^{-j}}$. For simplicity, by $g^*_{\boldsymbol \sigma ^{-j}}$ we mean $g^*_{\boldsymbol \sigma ^{\{-j\}}}$.

Note that for all $\boldsymbol x \in \Omega^j$, $g^*_{\boldsymbol \sigma^{-j}} (\boldsymbol x) \neq g^*_{ \boldsymbol \sigma} (\boldsymbol x)$ could happen only when $\alpha \ge \|\boldsymbol x\|$ or $\|\boldsymbol x\|\geq 1- \alpha $. So equation~\ref{eq_2nd_lowerbound} becomes

\begin{equation}\label{eq_3rd_lowerbound}
    \begin{aligned}
               & \mathbb{E}_{\boldsymbol \sigma} \mathbb{E}_{S_{\boldsymbol \sigma}} [R(\widehat{g},f^*,\beta) - R(g^*_{\boldsymbol \sigma},f^*,\beta)]\\ 
               &\ge \frac{1}{2^{d-1}}\sum_{\boldsymbol \sigma^{/j} \in \{+1,-1\}^{d-1}} \frac{\varepsilon}{8(1+2\bar{\lambda})d}\sum_{j=2}^{d}  \bigg\{ \mathbb{P}_{S_{\boldsymbol \sigma}, \boldsymbol x}   \big[ \widehat{g}_{\boldsymbol \sigma}(\boldsymbol x) \neq  g^*_{ \boldsymbol \sigma}(\boldsymbol x) \big | \boldsymbol x \in \Omega^j\big] \\
              &+    \mathbb{P}_{S_{\boldsymbol \sigma^{ \{-j\}}}, \boldsymbol x}  \big[ \widehat{g}_{\boldsymbol \sigma^{-j}} (\boldsymbol x) \neq g^*_{\sigma^{-j}}(\boldsymbol x) \big | \boldsymbol x \in \Omega^j \big] \bigg\}\\
              =& \frac{1}{2^{d-1}}\sum_{\boldsymbol \sigma^{/j} \in \{+1,-1\}^{d-1}} \frac{ \alpha\varepsilon}{8(1+2\bar{\lambda})d}\sum_{j=2}^{d}  \bigg\{ \mathbb{P}_{S_{\boldsymbol \sigma}, \boldsymbol x}    \big[ \widehat{g}_{\boldsymbol \sigma}(\boldsymbol x) \neq  g^*_{ \boldsymbol \sigma}(\boldsymbol x) \big | \boldsymbol x \in \Omega^j ,  \alpha \geq \|\boldsymbol x\|, or, \|\boldsymbol x\| \geq 1-\alpha \big] \\
              &+    \mathbb{P}_{S_{\boldsymbol \sigma^{ \{-j\}}}, \boldsymbol x}    \big[ \widehat{g}_{\boldsymbol \sigma^{-j}} (\boldsymbol x) \neq g^*_{ \boldsymbol \sigma^{-j}}(\boldsymbol x) \big | \boldsymbol x \in \Omega^j,  \alpha \geq \|\boldsymbol x\|, or, \|\boldsymbol x\| \geq 1-\alpha  \big] \bigg\}\\
              =&\frac{1}{2^{d-1}}\sum_{\boldsymbol \sigma^{/j} \in \{+1,-1\}^{d-1}} \frac{ \alpha\varepsilon}{8(1+2\bar{\lambda})d}\sum_{j=2}^{d}  \bigg\{ \mathbb{P}_{S_{\boldsymbol \sigma}, \boldsymbol x}    \big[ \widehat{g}_{\boldsymbol \sigma}(\boldsymbol x) \neq  g^*_{ \boldsymbol \sigma}(\boldsymbol x) \big | \boldsymbol x \in \Omega^j ,  \alpha \geq \|\boldsymbol x\|, or, \|\boldsymbol x\| \geq 1-\alpha \big] \\
              &+    \mathbb{P}_{S_{\boldsymbol \sigma^{ \{-j\} }}, \boldsymbol x}    \big[ \widehat{g}_{\boldsymbol \sigma^{-j}} (\boldsymbol x) \neq -g^*_{ \boldsymbol \sigma}(\boldsymbol x) \big | \boldsymbol x \in \Omega^j,  \alpha \geq \|\boldsymbol x\|, or, \|\boldsymbol x\| \geq 1-\alpha \big] \bigg\}\\
    \end{aligned}
\end{equation}

Next we make Equation \ref{eq_3rd_lowerbound} independent of $x$.
\begin{equation} \label{eq_4th_lowerbound}
    \begin{aligned}
              & \mathbb{E}_{\boldsymbol \sigma} \mathbb{E}_{S_{\boldsymbol \sigma}} [R(\widehat{g},f^*,\beta) - R(g^*_{\boldsymbol \sigma},f^*,\beta)]\\ 
               &\ge
              \frac{1}{2^{d-1}}\sum_{\boldsymbol \sigma^{/j} \in \{+1,-1\}^{d-1}} \frac{ \alpha\varepsilon}{8(1+2\bar{\lambda})d}\sum_{j=2}^{d}  \bigg\{ \mathbb{P}_{S_{\boldsymbol \sigma}, \boldsymbol x}    \big[ \widehat{g}_{\boldsymbol \sigma}(\boldsymbol x) \neq  g^*_{ \boldsymbol \sigma}(\boldsymbol x) \big | \boldsymbol x \in \Omega^j ,   \alpha \geq \|\boldsymbol x\|, or, \|\boldsymbol x\| \geq 1-\alpha  \big] \\
              &+    \mathbb{P}_{S_{\boldsymbol \sigma^{ \{-j\} }}, \boldsymbol x}    \big[ \widehat{g}_{\boldsymbol \sigma^{-j}} (\boldsymbol x) \neq -g^*_{ \boldsymbol \sigma}(\boldsymbol x) \big | \boldsymbol x \in \Omega^j, \alpha \geq \|\boldsymbol x\|, or, \|\boldsymbol x\| \geq 1-\alpha \big] \bigg\}\\
              =& \frac{1}{2^{d-1}}\sum_{\boldsymbol \sigma^{/j} \in \{+1,-1\}^{d-1}} \frac{ \alpha\varepsilon}{8(1+2\bar{\lambda})d}\sum_{j=2}^{d}  \bigg\{ \mathbb{E}_{S_{\boldsymbol \sigma}, \boldsymbol x}    \big[ \mathbbm{1}\{\widehat{g}_{\boldsymbol \sigma}(\boldsymbol x) \neq  g^*_{ \boldsymbol \sigma}(\boldsymbol x) \}\big | \boldsymbol x \in \Omega^j ,   \alpha \geq \|\boldsymbol x\|, or, \|\boldsymbol x\| \geq 1-\alpha  \big] \\
              &+    \mathbb{E}_{S_{\boldsymbol \sigma^{ \{-j\}}}, \boldsymbol x}    \big[ \mathbbm{1}\{\widehat{g}_{\boldsymbol \sigma^{-j}} (\boldsymbol x) \neq -g^*_{ \boldsymbol \sigma}(\boldsymbol x)\} \big | \boldsymbol x \in \Omega^j, \alpha \geq \|\boldsymbol x\|, or, \|\boldsymbol x\| \geq 1-\alpha \big] \bigg\}\\
              =& \frac{1}{2^{d-1}}\sum_{\boldsymbol \sigma^{/j} \in \{+1,-1\}^{d-1}} \frac{ \alpha\varepsilon}{8(1+2\bar{\lambda})d}\sum_{j=2}^{d}  \bigg\{ \mathbb{E}_{S_{\boldsymbol \sigma}, \boldsymbol x}    \big[ \mathbbm{1}\{\widehat{g}_{\boldsymbol \sigma} \neq  g^*_{ \boldsymbol \sigma} \}\big | \boldsymbol x \in \Omega^j ,   \alpha \geq \|\boldsymbol x\|, or, \|\boldsymbol x\| \geq 1-\alpha  \big] \\
              &+    \mathbb{E}_{S_{\boldsymbol \sigma^{\{-j\} }}, \boldsymbol x}    \big[ \mathbbm{1}\{\widehat{g}_{\boldsymbol \sigma^{-j}}  \neq -g^*_{ \boldsymbol \sigma}\} \big | \boldsymbol x \in \Omega^j, \alpha \geq \|\boldsymbol x\|, or, \|\boldsymbol x\| \geq 1-\alpha \big] \bigg\}\\
               \underset{*}{=}& \frac{1}{2^{d-1}}\sum_{\boldsymbol \sigma \in \{+1,-1\}^{d-1}} \frac{ \alpha\varepsilon}{8(1+2\bar{\lambda})d}\sum_{j=2}^{d}  \bigg\{ \mathbb{P}_{S_{\boldsymbol \sigma}}    \big[ \widehat{g}_{\boldsymbol \sigma} \neq  g^*_{ \boldsymbol \sigma}   \big] +    \mathbb{P}_{S_{\boldsymbol \sigma^{ \{-j\}}}}    \big[ \widehat{g}_{\boldsymbol \sigma^{-j}}  \neq -g^*_{ \boldsymbol \sigma}  \big] \bigg\}\\
               =& \frac{1}{2^{d-1}}\sum_{\boldsymbol \sigma \in \{+1,-1\}^{d-1}} \frac{ \alpha\varepsilon}{8(1+2\bar{\lambda})d}\sum_{j=2}^{d}  \bigg\{ 1-\mathbb{P}_{S_{\boldsymbol \sigma}}    \big[ \widehat{g}_{\boldsymbol \sigma} \neq  -g^*_{ \boldsymbol \sigma}   \big] +    \mathbb{P}_{S_{\boldsymbol \sigma^{ \{-j\}}}}    \big[ \widehat{g}_{\boldsymbol \sigma^{-j}}  \neq -g^*_{ \boldsymbol \sigma}  \big] \bigg\}\\
               \geq & \frac{1}{2^{d-1}}\sum_{\boldsymbol \sigma^{/j} \in \{+1,-1\}^{d-1}}  \frac{ \alpha\varepsilon}{8(1+2\bar{\lambda})d}\sum_{j=2}^{d}  \bigg\{ 1 - \|Q_\sigma^{(n)} - Q_{\sigma^{\{-j\}}}^{(n)}\|_{TV}  \bigg\}
    \end{aligned}
\end{equation}
where $Q_\sigma^{(n)}$ and $Q_{\sigma^{\{-j\}}}^{(n)}$ are the product distribution of $n$ samples for $S_\sigma$ and$S_{\sigma^{-j}}$ respectively. The last step of inequality follows from the Le Cam's method. In the Equation $*$ we use the fact that for all $\boldsymbol x$ s.t., $ \|x\|\leq \alpha$ or $\|x\| \geq 1-\alpha$, $\mathbbm{1}\{\widehat{g}_{\boldsymbol \sigma^{j}} (\boldsymbol x) \neq  g^*_{ \boldsymbol \sigma^{j}}(\boldsymbol x)\} = \mathbbm{1}\{\widehat{g}_{\boldsymbol \sigma^{j}}  \neq  g^*_{ \boldsymbol \sigma^{j}}\}  $ is free of $\boldsymbol x$.




Let $Q_\sigma$ be the distribution of $S_\sigma$ and $Q_{\sigma^{\{-j\}}}$ be distribution of $S_{\sigma^{-j}}$.
The total variation distance can be bounded using the Hellinger distance, which is denoted as $\mathcal{H}(\cdot,\cdot)$. Below we bound the TV distance using Hellinger distance. 
\begin{equation}\label{eq:tvdistn}
    \begin{aligned}
      & \|Q_\sigma^{(n)} - Q_{\sigma^{\{-j\}}}^{(n)}\|_{TV}\\
      \leq & \mathcal{H} (Q_\sigma^{(n)} , Q_{\sigma^{\{-j\}}}^{(n)}) \sqrt{1- \frac{\mathcal{H}^2 (Q_\sigma^{(n)} , Q_{\sigma^{\{-j\}}}^{(n)})}{4}} \\
      \leq &  \underset{\mathcal{H}^2 (Q_\sigma^{(n)} , Q_{\sigma^{\{-j\}}}^{(n)}) \leq n \mathcal{H}^2(Q_\sigma,Q_{\sigma^{\{-j\}}})}{\sqrt{n} \mathcal{H}(Q_\sigma,Q_{\sigma^{\{-j\}}})\sqrt{1- \frac{\mathcal{H}^2 (Q_\sigma^{(n)} , Q_{\sigma^{\{-j\}}}^{(n)})}{4}}}\\
      \leq & \sqrt{n} \mathcal{H}(Q_\sigma,Q_{\sigma^{\{-j\}}})
    \end{aligned}
\end{equation}
Now we bound the Hellinger distance.
\begin{equation}
    \begin{aligned}
      & \mathcal{H}^2 (Q_\sigma , Q_{\sigma^{\{-j\}}})\\
      =& \int_{\boldsymbol x, y} \big( \sqrt{Q_\sigma(\boldsymbol x, y)} - \sqrt{Q_{\sigma^{\{-j\}}} (\boldsymbol x, y)}\big)^2 d\boldsymbol x dy \\
      =& \int_{\boldsymbol x \in \Omega^j, \|\boldsymbol x\| \leq \alpha} \int_{y = f^*(\boldsymbol x)}  \big( \sqrt{Q_\sigma(\boldsymbol x, y)} - \sqrt{Q_{\sigma^{\{-j\}}} (\boldsymbol x, y)}\big)^2 d\boldsymbol x dy \\
      + & \int_{\boldsymbol x \in \Omega^j, \|\boldsymbol x\| \geq 1-\alpha} \int_{y = f^*(\boldsymbol x)}  \big( \sqrt{Q_\sigma(\boldsymbol x, y)} - \sqrt{Q_{\sigma^{\{-j\}}}  (\boldsymbol x, y)}\big)^2 d\boldsymbol x dy \\
        + & \int_{\boldsymbol x \in \Omega^j, \|\boldsymbol x\| \leq \alpha} \int_{y \neq f^*(\boldsymbol x)}  \big( \sqrt{Q_\sigma(\boldsymbol x, y)} - \sqrt{Q_{\sigma^{\{-j\}}}  (\boldsymbol x, y)}\big)^2 d\boldsymbol x dy \\
        + & \int_{\boldsymbol x \in \Omega^j, \|\boldsymbol x\| \geq 1-\alpha} \int_{y \neq f^*(\boldsymbol x)}  \big( \sqrt{Q_\sigma(\boldsymbol x, y)} - \sqrt{Q_{\sigma^{\{-j\}}}  (\boldsymbol x, y)}\big)^2 d\boldsymbol x dy \\
      = & \frac{\alpha \varepsilon }{d\bar{\lambda}}\bigg\{\bigg( \sqrt{\frac{3}{4} + \frac{\bar{\lambda}}{2}} - \sqrt{ \frac{3}{4} - \frac{\bar{\lambda}}{2}} \bigg)^2 + \bigg( \sqrt{\frac{1}{4} + \frac{\bar{\lambda}}{2}} - \sqrt{ \frac{1}{4} - \frac{\bar{\lambda}}{2}} \bigg)^2 \bigg\}\\
      \leq& \frac{3\alpha \varepsilon \bar{\lambda}}{d}
    \end{aligned}
\end{equation}

Thus we can bound the total variation distance as:

\begin{equation}\label{TV_Hellinger}
    \|Q_\sigma^{(n)} - Q_{\sigma^{\{-j\}}}^{(n)}\|_{TV} \leq  \sqrt{ \frac{3n\alpha \varepsilon \bar{\lambda}}{d}}
\end{equation}

Note that from inequality \ref{eq_4th_lowerbound}, inequality \ref{eq:tvdistn} and inequality 
 \ref{TV_Hellinger} we have
\begin{equation}\label{eq_all_result}
\begin{aligned}
    &\mathbb{E}_{\boldsymbol \sigma} \mathbb{E}_{S_{\boldsymbol \sigma}} [R(\mathcal{A}(S_{\boldsymbol \sigma}),\beta) - R(g^* ,f^*,\beta)] \\
    =
    &\mathbb{E}_{\boldsymbol \sigma} \mathbb{E}_{S_{\boldsymbol \sigma}} [R(\widehat{g},f^*,\beta) - R(g^*,f^*,\beta)]\\
    & \geq  \frac{d-1}{d} \frac{ \alpha\varepsilon}{8(1+2\bar{\lambda})} \bigg(1- \sqrt{ \frac{3n\alpha \varepsilon \bar{\lambda}}{d}} \bigg)
\end{aligned}
\end{equation}
Above implies 

\begin{align*}
\sup\limits_{\boldsymbol \sigma\in\{+1,-1\}^d}\mathbb{E}_{S_{\boldsymbol \sigma} }[R(\mathcal{A}(S_{\boldsymbol \sigma}),f^*,\beta)-R(g^*,f^*,\beta)] &\geq \mathbb{E}_{\boldsymbol \sigma} \mathbb{E}_{S_{\boldsymbol \sigma}} [R(\mathcal{A}(S_{\boldsymbol \sigma}),\beta) - R(g^* ,f^*,\beta)]\\
&\geq \frac{d-1}{d} \frac{ \alpha\varepsilon}{8(1+2\bar{\lambda})} \bigg(1- \sqrt{ \frac{3n\alpha \varepsilon \bar{\lambda}}{d}} \bigg) .
\end{align*}
Since $|\mathcal{G}| = 2^d$, any algorithm $\mathcal{A}$ needs at least   $n =\Omega\bigg( \frac{\log|\mathcal G)|}{\bar{\lambda}\varepsilon\alpha } \bigg)$ number of samples so that there is a hope to achieve
$$ \sup_{\boldsymbol \sigma}\mathbb{E}_{S_{\boldsymbol \sigma}} [R(\mathcal{A}(S_{\boldsymbol \sigma}),\beta)]  -  R(g^* ,f^*,\beta) \leq  \frac{\alpha\varepsilon}{32(1+2\bar{\lambda})}.$$
Replacing $\alpha\varepsilon$ with $\varepsilon$ finishes the proof.

\begin{remark}
From the second inequality in Equation~\ref{eq_lowerbound_g}, it can be observed that the construction of information theoretic lower bound  for risk function $R(g,\beta)$ can also be applied to the construction of an $\Omega(\log (|\mathcal{G}|/(\bar{\lambda} \varepsilon)))$ 
sample complexity 
lower bound  for $\mathbb{E}_{\boldsymbol x}[ g(\boldsymbol x) \neq g^*(\boldsymbol x)]$. Thus our Corollary~\ref{main_cor} also achieves minimax-optimal rate for recovering $g^*$ for family of Noise Generative Process.
\end{remark}
\end{proof}

\subsection{Proof of Sample Complexity Upper Bound}


Here we prove Theorem \ref{main} in which we bound the risk gap $R( {g} ;f^*,\beta) - R(g^*;f^*,\beta)$. 
%
%
%
%
%
%
%
%
%
Recall that the empirical version of the selector loss is 
\begin{equation*}
 R_{S_n}( g; f,\beta) = \frac{1}{n} \sum_{i=1}^{n} \bigg\{\beta \mathbbm{1}\{{g} (\boldsymbol x_i) =1\}\mathbbm{1}\{{f}(\boldsymbol x_i) \neq y_i\} +  \mathbbm{1}\{{g} (\boldsymbol x_i) =1\} \mathbbm{1}\{{f}(\boldsymbol x_i) = y_i\} \bigg\}.   
\end{equation*}

Our high level approach is as follows. We first analyze the gap between $R_{S_n}(\widehat g;  f^*,\beta)$
and $R_{S_n}(g^*;  f^*,\beta)$ and provide an upper bound for it. Then we use this upper bound to get an upper bound for the gap between $R( \widehat{g} ;f^*,\beta)$ and $R(g^*;f^*,\beta)$ using concentration properties and Bernstein inequality.

\paragraph{CASE I:  $\widehat{f} (\cdot) \in \mathcal{F}$ and  $\mathbb{E}_{\boldsymbol x} [ \widehat{f}(\boldsymbol x) \neq f^*(\boldsymbol x)] \leq \frac{\varepsilon}{8\beta}$ with probability at least $1-\delta$.}
To upper bound $R_{S_n}(\widehat g;  f^*,\beta)-R_{S_n}(g^*;  f^*,\beta)$, we use $R_{S_n}(\widehat g;  \widehat f,\beta)$
and $R_{S_n}(g^*;  \widehat f,\beta)$ as a middle step. Since $\widehat{g}$ is the empirical risk minimizer, we have  
\begin{equation}\label{eq:middlestep}
  R_{S_n}(\widehat g; \widehat   f,\beta) \leq R_{S_n}(g^*;  \widehat f,\beta).  
\end{equation}
Next we leverage on the fact that $\widehat f$ is consistent with $f$ to establish an inequality in following fashion:
$$
R_{S_n}(\widehat g; f^*,\beta) \leq R_{S_n}(g^*;  f^*,\beta) + const \cdot \varepsilon
$$
Note we have that
\begin{equation}\label{proof_split_equation}
    \begin{aligned}
             & R_{S_n}(\widehat g; \widehat   f,\beta)  \\
             =& \frac{1}{n} \sum_{i=1}^{n} \mathbbm{1}\bigg\{ \widehat  f(\boldsymbol x_i) \neq f^*(\boldsymbol x_i) \bigg\}\bigg\{\beta \mathbbm{1}\{\widehat {g} (\boldsymbol x_i) =1\}\mathbbm{1}\{\widehat {f}(\boldsymbol x_i) \neq y_i\} +  \mathbbm{1}\{\widehat {g} (\boldsymbol x_i) =1\} \mathbbm{1}\{\widehat {f}(\boldsymbol x_i) = y_i\} \bigg\}   \\
             &+\frac{1}{n} \sum_{i=1}^{n} \mathbbm{1}\bigg\{ \widehat  f(\boldsymbol x_i) = f^*(\boldsymbol x_i) \bigg\}\bigg\{\beta \mathbbm{1}\{\widehat {g} (\boldsymbol x_i) =1\}\mathbbm{1}\{\widehat {f}(\boldsymbol x_i) \neq y_i\} +  \mathbbm{1}\{\widehat {g} (\boldsymbol x_i) =1\} \mathbbm{1}\{\widehat {f}(\boldsymbol x_i) = y_i\} \bigg\}  
    \end{aligned}
\end{equation}
Recall that $$R_{S_n}(\widehat g; f^*,\beta) =\frac{1}{n}\sum_{i=1}^n \big[\beta \mathbbm{1}\{\widehat {g} (\boldsymbol x_i) =1\}\mathbbm{1}\{f^*(\boldsymbol x_i) \neq y_i\} +  \mathbbm{1}\{\widehat {g} (\boldsymbol x_i) =1\} \mathbbm{1}\{f^*(\boldsymbol x_i) = y_i\} \big].$$
So 
\begin{equation}\label{eq:RSnexpand}
    \begin{aligned}
             & R_{S_n}(\widehat g; \widehat   f,\beta)  \\
             &=R_{S_n}(\widehat g; f^*,\beta) \\
             &- \frac{1}{n} \sum_{i=1}^{n} \mathbbm{1}\bigg\{ \widehat  f(\boldsymbol x_i) \neq f^*(\boldsymbol x_i) \bigg\}\bigg\{\beta \mathbbm{1}\{\widehat {g} (\boldsymbol x_i) =1\}\mathbbm{1}\{ {f^*}(\boldsymbol x_i) \neq y_i\} +  \mathbbm{1}\{\widehat {g} (\boldsymbol x_i) =1\} \mathbbm{1}\{ {f}^*(\boldsymbol x_i) = y_i\} \bigg\}   \\
             &+\frac{1}{n} \sum_{i=1}^{n} \mathbbm{1}\bigg\{ \widehat  f(\boldsymbol x_i) \neq f^*(\boldsymbol x_i) \bigg\}\bigg\{\beta \mathbbm{1}\{\widehat {g} (\boldsymbol x_i) =1\}\mathbbm{1}\{\widehat {f}(\boldsymbol x_i) \neq y_i\} +  \mathbbm{1}\{\widehat {g} (\boldsymbol x_i) =1\} \mathbbm{1}\{\widehat {f}(\boldsymbol x_i) = y_i\} \bigg\}\\
            &\ge R_{S_n}(\widehat g; f^*,\beta) - \frac{\beta-1}{n}\sum_{i=1}^n \mathbbm{1}\bigg\{ \widehat  f(\boldsymbol x_i) \neq f^*(\boldsymbol x_i) \bigg\}
    \end{aligned}
\end{equation}
Recall that in the theorem assumptions we have $\mathbb{E}_{\boldsymbol x} [ \widehat{f}(\boldsymbol x) \neq f^*(\boldsymbol x)] \leq \frac{\varepsilon}{8\beta}$ with probability at least $1-\delta$. By Lemma~\ref{adv_chernoff_lem}, if 
$n\ge \frac{24\beta^2\log(|\mathcal{F}|/\delta)}{\varepsilon}$
we have with probability at least $1-\delta$, $\frac{1}{n} \sum_{i=1}^{n} \mathbbm{1} \{\widehat{f}(\boldsymbol x_i) \neq f^*(\boldsymbol x_i)\} \leq \frac{\varepsilon}{4\beta}$, so we have
\begin{equation}\label{eq:fhatfstarepsilon4}
R_{S_n}(\widehat g; \widehat   f,\beta) \ge R_{S_n}(\widehat g; f^*,\beta) -\varepsilon/4 \end{equation}

With a similar approach we get that
$$
R_{S_n}(g^*; \widehat   f,\beta)  \leq  R_{S_n}(g^*;   f^*,\beta) +  \varepsilon/4 
$$


Thus using \eqref{eq:middlestep} the following inequality holds with probability at least $1-\delta$ 
\begin{equation} \label{emp_ineq_main}
 R_{S_n}(\widehat g; f^*,\beta) \leq R_{S_n}(g^*; f^*,\beta) + \varepsilon/2 .     
\end{equation}

To get a bound for  $R( \widehat{g} ;f^*,\beta) - R(g^*;f^*,\beta) $, we first define
 $\ell( g ;f,\boldsymbol x, y) = \beta \mathbbm{1}\{g (\boldsymbol x) =1\}\mathbbm{1}\{ {f}(\boldsymbol x) \neq y\} +  \mathbbm{1}\{ {g} (\boldsymbol x) =1\} \mathbbm{1}\{ {f}(\boldsymbol x) = y\} $. Note that at this point we think of $\beta$ as fixed and so we have not included it in the arguments of $\ell(\cdot)$ for simplicity.

Observe that $R_{S_n}(g,f^*,\beta) = \frac{1}{n}\sum_{i=1}^n \ell(g;f^*,\boldsymbol x_i, y)$ and $R(g,f^*,\beta) = \mathbb{E}_{\boldsymbol x,y}\big[ \ell(g;f^*,\boldsymbol x, y)\big]$ for any $g$. First we have the following simple inequality directly taken from \eqref{emp_ineq_main}.
\begin{equation}\label{eq:Risk_gap_connecting}
\begin{aligned}
    &R(\widehat g,f^*,\beta)-R(g^*,f^*,\beta)\\
    &=\mathbb{E}_{\boldsymbol x,y}\ell( \widehat g ;f^*,\boldsymbol x, y)  - \mathbb{E}_{\boldsymbol x,y}\ell( g^* ;f^*,\boldsymbol x, y)\\
    \leq&  R_{S_n}(g^*;  f^*,\beta) - R_{S_n}(\widehat g;   f^*,\beta) -(\mathbb{E}_{\boldsymbol x,y}\ell( g^* ;f^*,\boldsymbol x, y)  -\mathbb{E}_{\boldsymbol x,y}\ell( \widehat  g ;f^*,\boldsymbol x, y) )+ \varepsilon/2     
\end{aligned}
\end{equation}
By defining $\Delta \ell( g^*, g,\boldsymbol x, y) = \ell( g^* ;f^*,\boldsymbol x, y) - \ell(g ;f^*,\boldsymbol x, y)$ for any $g$, we can express the above inequality as follows:
\begin{equation}\label{eq:goal}
\begin{aligned}
    &\mathbb{E}_{\boldsymbol x,y}\ell( \widehat g ;f^*,\boldsymbol x, y)  - \mathbb{E}_{\boldsymbol x,y}\ell( g^* ;f^*,\boldsymbol x, y)\\
    \leq& \frac{1}{n}\sum_{i=1}^n \Delta\ell(g^*,\widehat g,\boldsymbol x_i,y) -\mathbb{E}_{\boldsymbol x,y}\Delta\ell( g^* ;\widehat g,\boldsymbol x, y)+ \varepsilon/2     
\end{aligned}
\end{equation}

 
To bound $\frac{1}{n}\sum_{i=1}^n \Delta\ell(g^*,\widehat g,\boldsymbol x_i,y) -\mathbb{E}_{\boldsymbol x,y}\Delta\ell( g^* ;\widehat g,\boldsymbol x, y)$ with high probability over all $S_n$, we need to find a bound on $\frac{1}{n}\sum_{i=1}^n \Delta\ell(g^*,g,\boldsymbol x_i,y) -\mathbb{E}_{\boldsymbol x,y}\Delta\ell( g^* ;g,\boldsymbol x, y)$ that is true for all $g$ simultaneously with high probability.  We have :
\begin{equation}\label{sym_ineq_4_bern}
\begin{aligned}
    \mathbb{P}_{S_n} \bigg [ & \exists g\in\mathcal{G},  \bigg\{\frac{1}{n}\sum_{i=1}^{n} \Delta\ell(g^*, g;\boldsymbol x_i,y_i) - \mathbb{E}_{\boldsymbol x,y} [\Delta\ell(g^*, g;\boldsymbol x,y)] \\
    &\geq \frac{n}{\beta} \log(|\mathcal{G}|/\delta) + \sqrt{\frac{2\boldsymbol {Var}(\Delta(g^*,g;\boldsymbol x,y)) \log (|\mathcal{G}|/\delta)}{n}}  \bigg\} \bigg]\\
    \leq \sum_{g\in \mathcal{G}}  \mathbb{P}_{S_n}\bigg [ &  \frac{1}{n}\sum_{i=1}^{n} \Delta\ell(g^*, g;\boldsymbol x_i,y_i) - \mathbb{E}_{\boldsymbol x,y} [\Delta\ell(g^*, g;\boldsymbol x,y)] \\
    &\geq \frac{n}{\beta} \log(|\mathcal{G}|/\delta) + \sqrt{\frac{2\boldsymbol {Var}(\Delta(g^*,g;\boldsymbol x,y)) \log (|\mathcal{G}|/\delta)}{n}}   \bigg] \\
\end{aligned}
\end{equation}
Now to bound $\frac{1}{n}\sum_{i=1}^n \Delta\ell(g^*,g,\boldsymbol x_i,y) -\mathbb{E}_{\boldsymbol x,y}\Delta\ell( g^* ;g,\boldsymbol x, y)$ we use Bernstein inequality. For that we need to bound  $\boldsymbol{Var}_{\boldsymbol x, y} [ \Delta \ell( g^*, g,\boldsymbol x, y)] $. 
First we expand $\Delta \ell( g^*, g,\boldsymbol x, y)$.

\begin{equation*}\label{risk_diff_1}
    \begin{aligned}
            \Delta \ell( g^*, g,\boldsymbol x, y) = &\ell( g^* ;f^*,\boldsymbol x, y) - \ell( g ;f^*,\boldsymbol x, y)\\
             =&\beta \mathbbm{1}\{g^* (\boldsymbol x) =1\}\mathbbm{1}\{ {f^*}(\boldsymbol x) \neq y\} +  \mathbbm{1}\{ {g}^* (\boldsymbol x) =1\} \mathbbm{1}\{ {f^*}(\boldsymbol x) = y\} \\
             -& \beta \mathbbm{1}\{ g (\boldsymbol x) =1\}\mathbbm{1}\{ {f^*}(\boldsymbol x) \neq y\} -  \mathbbm{1}\{{g} (\boldsymbol x) =1\} \mathbbm{1}\{ {f^*}(\boldsymbol x) = y\} 
    \end{aligned}
\end{equation*}

So we have 
\begin{equation}\label{risk_diff}
    \begin{aligned}
           & \Delta^2 \ell( g^*, g,\boldsymbol x, y)  \\
             &=\beta^2 \bigg[\mathbbm{1}\{g^* (\boldsymbol x) =1\}+\mathbbm{1}\{ g (\boldsymbol x) =1\}-2\mathbbm{1}\{ g (\boldsymbol x) =1\}\mathbbm{1}\{g^* (\boldsymbol x) =1\}\bigg]\mathbbm{1}\{ {f^*}(\boldsymbol x) \neq y\}\\
             &+
              \bigg[\mathbbm{1}\{g^* (\boldsymbol x) =1\}+\mathbbm{1}\{ g (\boldsymbol x) =1\}-2\mathbbm{1}\{ g (\boldsymbol x) =1\}\mathbbm{1}\{g^* (\boldsymbol x) =1\}\bigg]\mathbbm{1}\{ {f^*}(\boldsymbol x) = y\}\\
              &=
              \bigg(\beta^2\mathbbm{1}\{ {f^*}(\boldsymbol x) \neq y\}+\mathbbm{1}\{ {f^*}(\boldsymbol x) = y\} \bigg)\bigg[\mathbbm{1}\{ g (\boldsymbol x) =1\}\mathbbm{1}\{g^* (\boldsymbol x) \neq1\}+\mathbbm{1}\{ g (\boldsymbol x) \neq 1\}\mathbbm{1}\{g^* (\boldsymbol x) =1\}\bigg]\\
              &\le \beta^2 \mathbbm{1}\{g^*(\boldsymbol x)\neq  g(\boldsymbol x)\}
    \end{aligned}
\end{equation}

Hence we conclude that  $\boldsymbol{Var}_{\boldsymbol x, y} [ \Delta \ell( g^*, g,\boldsymbol x,y)] \leq \mathbb{E}_{\boldsymbol x, y}  \Delta^2 \ell( g^*, g,\boldsymbol x,y) \leq \beta^2 \mathbb{E}_{\boldsymbol x} [\mathbbm{1}\{g^* (\boldsymbol x) \neq{ g}(\boldsymbol x\}]$. On the other hand, Equation~\ref{eq:margin} implies that $R( {g} ;f^*,\beta) - R(g^*;f^*,\beta) \geq \frac{\bar{\lambda} }{1+2\bar{\lambda}}\mathbb{E}_{\boldsymbol x} [\mathbbm{1}\{g^* (\boldsymbol x) \neq { g}(\boldsymbol x\}]$. Thus we can use the following inequality to achieve fast rate of convergence using the Bernstein Inequality: 
\begin{equation}\label{eq:variance}
\boldsymbol{Var}_{\boldsymbol x, y} [ \Delta \ell( g^*, g;\boldsymbol x,y)] \leq  \frac{ \beta^2(1+2\bar{\lambda})}{\bar{\lambda} }\{R( g ;f^*,\beta)) - R(g^*;f^*,\beta)   \}.\end{equation}



We use following version of the Bernstein Inequality, with $X_1,...,X_n$ i.i.d random variable uniformly bounded by $b$:

$$
\mathbb{P}\bigg( \frac{1}{n} \sum_{i=1}^{n}X_i -\mathbb{E}[X] < \frac{b}{n}  \log(1/\delta) + \sqrt{\frac{2\boldsymbol{Var(X)} \log(1/\delta)}{n}}\bigg) \ge 1-\delta
$$

Using union bounds, the Bernstein Inequality implies that with probability for all $g\in\mathcal{G}$ simultaneously:
\begin{equation}\label{bernstein_union}
    \begin{aligned}
             &\mathbb{P}_{S_n} \bigg [   \bigg\{\frac{1}{n}\sum_{i=1}^{n} \Delta\ell(g^*, g; \boldsymbol x_i,y_i) - \mathbb{E}_{\boldsymbol x,y} [\Delta\ell(g^*, g; \boldsymbol x,y)] \bigg\} \\
             \leq& \frac{\beta }{n} \log \big (\big|\mathcal{G}  \big| /\delta \big) + \sqrt{ \frac{ 2  \boldsymbol{Var} (\Delta(g^*,  g ; \boldsymbol x,y) ) \log \big (\big|\mathcal{G}  \big|   /\delta \big)  }{n}}\bigg]\ge 1-\delta
    \end{aligned}
\end{equation}

Thus by applying inequality~\ref{bernstein_union} with $\widehat g$  we have: 

\begin{equation*}
    \begin{aligned}
     &\mathbb{P}_{S_n} \bigg [   \bigg\{ 
     \frac{1}{n}\sum_{i=1}^{n} \Delta\ell(g^*, \widehat g; \boldsymbol x_i,y_i)
     -\mathbb{E}_{\boldsymbol x,y} [\Delta\ell(g^*, \widehat g; \boldsymbol x,y)] 
     \bigg\}\\
     \leq & \frac{\beta }{n} \log \big (\big|\mathcal{G}   \big| /\delta \big) + \sqrt{ \frac{ 2  \boldsymbol{Var} (\Delta(g^*,  \widehat g ; \boldsymbol x,y) ) \log \big (\big|\mathcal{G}    \big| /\delta \big)  }{n}}\bigg]\ge 1-\delta 
     \end{aligned}
\end{equation*}
By inequality ~\ref{emp_ineq_main}, we have $\frac{1}{n}\sum_{i=1}^{n} \Delta\ell(g^*,\widehat g; \boldsymbol x_i,y_i) =  R_{S_n}(g^*; f^*,\beta) - R_{S_n}(\widehat g; f^*,\beta) \geq -\varepsilon/2$ holds with probability $1-\delta$. Note $R( \widehat{g};f^*,\beta)) - R(g^*;f^*, \beta)   =- \mathbb{E}_{\boldsymbol x,y} [\Delta\ell(g^*,\widehat g; \boldsymbol x,y)]$. So we have 
\begin{equation*}
\begin{aligned}
     & \mathbb{P}_{S_n} \bigg [  \{R( \widehat{g};f^*,\beta) - R(g^*;f^*, \beta)\} \\
     \leq & \frac{\beta }{n} \log \big (\big|\mathcal{G}    \big| /\delta \big) + \sqrt{ \frac{ 2\beta^2(1+2\bar{\lambda})  \{R(\widehat{g} ;f^*,\beta)) - R(g^*;f^*, \beta)   \}  \log \big (\big|\mathcal{G}  \big| /\delta \big)  }{ \bar{\lambda} n}} + \varepsilon/2 \bigg]\\
     \ge & 1-2\delta
    \end{aligned}
\end{equation*}
The choice of $n \geq \frac{ 16 \beta^2  \log(\frac{|\mathcal{G}|  }{\delta})}{ \bar{\lambda}  \varepsilon }$ ensures that with probability at least $1-2\delta$, $ R( \widehat{g} ;f^*,\beta) - R(g^*;f^*, \beta)  \leq \varepsilon  $. 
\paragraph{CASE II:  $\widehat{f} (\cdot) \in \mathcal{F}$ that satisfies $\mathbb{E}_{\boldsymbol x} [ \widehat{f}(\boldsymbol x) \neq f^*(\boldsymbol x)    | \boldsymbol x \in \Omega_I] \leq \frac{\varepsilon}{8\beta\alpha}$ with probability at least $1-\delta$.}

Note $\widehat f$ only approximates $f^*(\cdot)$ on the informative support $\Omega_I$ :{
 $\widehat{f} (\cdot) \in \mathcal{F}$ that satisfies $\mathbb{E}_{\boldsymbol x} [ \widehat{f}(\boldsymbol x) \neq f^*(\boldsymbol x)    | \boldsymbol x \in \Omega_I] \leq \frac{\varepsilon}{8\beta\alpha}$ with probability at least $1-\delta$.}
For simplicity of analysis, we introduce a `pseudo' version of $f^*( \cdot)$ denoted as $\widetilde f^*$.  Let $\widetilde{\mathcal{F}}$ be following hypothesis class:
\begin{equation*}
\bigg\{\widetilde f \bigg| \widetilde f (\boldsymbol x) =
\begin{mcases}[ll@{\ }l]
   f_1 (\boldsymbol x),  & \boldsymbol x \in \Omega_U \\
     f_2 (\boldsymbol x),  & \boldsymbol x \in \Omega_I
\end{mcases}
, f_1 \in \mathcal{F},f_2\in \mathcal{F}\bigg\}
\end{equation*}

and we let $\widetilde f^*(\cdot)$ be:
\begin{equation*}
 \widetilde f^* (\boldsymbol x) =
\begin{mcases}[ll@{\ }l]
   \widehat f (\boldsymbol x),  & \boldsymbol x \in \Omega_U \\
     f^* (\boldsymbol x),  & \boldsymbol x \in \Omega_I
\end{mcases}
\end{equation*}
Clearly, $\widetilde f^* \in \widetilde{\mathcal{F}}$. Note such hypothesis class is only introduced in analysis and is potentially impractical. The cardinality of hypothesis class $|\widetilde{\mathcal{F}}| \leq |\mathcal{F}|^2$.
 The construction of $\widetilde{f} ^*$ is to make $R({g},\widetilde f^*,\beta) - R(g^*,\widetilde f^*,\beta) \geq R({g}, f^*,\beta) - R(g^*, f^*,\beta) $ for all $g\in \mathcal{G}$. To see this, we use the fact that $f^*$ is the Bayes optimal classifier, which implies that $\forall \boldsymbol x,
\mathbb{E}_{y} [\mathbbm{1} \{ g^* (\boldsymbol x) \neq 1\} \mathbbm{1}\{ \widetilde f^*(\boldsymbol x) \neq y\}] \geq \mathbb{E}_{y}[\mathbbm{1} \{ g^*(\boldsymbol x)  \neq 1\} \mathbbm{1}\{ f^*(\boldsymbol x) \neq y\}]$. Subsequently, we have $\forall \boldsymbol x,
\mathbb{E}_{y} [\mathbbm{1} \{ g^* (\boldsymbol x) \neq 1\} \mathbbm{1}\{ \widetilde f^*(\boldsymbol x) = y\}] \leq \mathbb{E}_{y}[\mathbbm{1} \{ g^*(\boldsymbol x)  \neq 1\} \mathbbm{1}\{ f^*(\boldsymbol x) = y\}]$. So we have the following.

\begin{equation}
\begin{aligned}
&R({g} (\boldsymbol x),\widetilde f^*,\beta) - R(g^* (\boldsymbol x),\widetilde f^*,\beta)\\
= &\mathbb{E}_{\boldsymbol x,y}\big[  
 \beta\mathbbm{1} \{{g}(\boldsymbol x) = 1\} \mathbbm{1} \{ g^* (\boldsymbol x) \neq 1\} \mathbbm{1}\{ \widetilde f^*(\boldsymbol x) \neq y\}
- \mathbbm{1} \{{g}(\boldsymbol x) = 1\} \mathbbm{1} \{ g^* (\boldsymbol x) \neq 1\} \mathbbm{1}\{\widetilde f^*(\boldsymbol x) =y\}\\
+ &\mathbbm{1} \{ {g}(\boldsymbol x) \neq 1\} \mathbbm{1} \{ g^* (\boldsymbol x) = 1\} \mathbbm{1}\{\widetilde f^*(\boldsymbol x) =y\}
- \beta \mathbbm{1} \{{g}(\boldsymbol x) \neq 1\} \mathbbm{1} \{ g^* (\boldsymbol x) = 1\} \mathbbm{1}\{\widetilde f^*(\boldsymbol x) \neq y\}
\big]         \\
= & \mathbb{E}_{\boldsymbol x,y}\big[  
 \beta\mathbbm{1} \{{g}(\boldsymbol x) = 1\} \mathbbm{1} \{ g^* (\boldsymbol x) \neq 1\} \mathbbm{1}\{ \widetilde f^*(\boldsymbol x) \neq y\}
- \mathbbm{1} \{{g}(\boldsymbol x) = 1\} \mathbbm{1} \{ g^* (\boldsymbol x) \neq 1\} \mathbbm{1}\{\widetilde f^*(\boldsymbol x) =y\}\\
+ &\mathbbm{1} \{ {g}(\boldsymbol x) \neq 1\} \mathbbm{1} \{ g^* (\boldsymbol x) = 1\} \mathbbm{1}\{f^*(\boldsymbol x) =y\}
- \beta \mathbbm{1} \{{g}(\boldsymbol x) \neq 1\} \mathbbm{1} \{ g^* (\boldsymbol x) = 1\} \mathbbm{1}\{f^*(\boldsymbol x) \neq y\}
\big]         \\
\geq & \mathbb{E}_{\boldsymbol x,y}\big[  
 \beta\mathbbm{1} \{{g}(\boldsymbol x) = 1\} \mathbbm{1} \{ g^* (\boldsymbol x) \neq 1\} \mathbbm{1}\{  f^*(\boldsymbol x) \neq y\}
- \mathbbm{1} \{{g}(\boldsymbol x) = 1\} \mathbbm{1} \{ g^* (\boldsymbol x) \neq 1\} \mathbbm{1}\{f^*(\boldsymbol x) =y\}\\
+ &\mathbbm{1} \{ {g}(\boldsymbol x) \neq 1\} \mathbbm{1} \{ g^* (\boldsymbol x) = 1\} \mathbbm{1}\{f^*(\boldsymbol x) =y\}
- \beta \mathbbm{1} \{{g}(\boldsymbol x) \neq 1\} \mathbbm{1} \{ g^* (\boldsymbol x) = 1\} \mathbbm{1}\{f^*(\boldsymbol x) \neq y\}
\big]         \\
= &\mathbb{E}_{\boldsymbol x}\bigg[ \bigg\{ \beta \bigg(  \frac{1}{4}+\frac{\lambda (\boldsymbol x)}{2}\bigg) - \frac{3}{4}+\frac{\lambda(\boldsymbol x)}{2} \bigg\} 
\mathbbm{1} \{ {g}(\boldsymbol x) = 1\} \mathbbm{1} \{ g^* (\boldsymbol x) \neq 1\}\bigg]  \\
+& \mathbb{E}_{\boldsymbol x} \bigg[ \bigg\{ \frac{3}{4}+\frac{\lambda(\boldsymbol x)}{2} - \beta \bigg( \frac{1}{4}-\frac{\lambda(\boldsymbol x)}{2}\bigg)  \bigg\}\mathbbm{1} \{  {g}(\boldsymbol x) \neq 1\} \mathbbm{1} \{ g^* (\boldsymbol x) = 1\} 
\bigg]          \\
= &R( {g} ;f^*,\beta) - R(g^*;f^*,\beta)
\end{aligned}
\end{equation}

Meanwhile $\widetilde f^*( \cdot)$ also satisfies the property that $\mathbb{E}_{\boldsymbol x}[ \widetilde f^* (\boldsymbol x) \neq \widehat f(\boldsymbol x)] \leq \frac{\varepsilon}{8\beta}$ with probability at least $1-\delta$. Thus by Lemma~\ref{adv_chernoff_lem}, if $n\geq \frac{ 24\beta^2 \log(\frac{|\mathcal{F}|}{\delta})  }{ \varepsilon}$ we have with probability at least $1-\delta$, $\frac{1}{n} \sum_{i=1}^{n} \mathbbm{1} \{\widehat{f}(\boldsymbol x_i) \neq \widetilde f^*(\boldsymbol x_i)\} \leq \frac{\varepsilon}{8\beta}$.

The rest of the proof is the same as the proof in \textbf{CASE I} by replacing $f^*$ with $\widetilde f^*$, leveraging on the fact that 
$\widetilde{f} ^*$ is to make $R({g} (\boldsymbol x),\widetilde f^*,\beta) - R(g^* (\boldsymbol x),\widetilde f^*,\beta) \geq R({g} (\boldsymbol x), f^*,\beta) - R(g^* (\boldsymbol x), f^*,\beta)
$

\begin{remark}
 Let us point out that our proposed selective strategy is  different from the consistent selective strategy in \citep{noisefree_ro_2010_JMLR}. Instead of rejecting by looking for  consistent output from all hypothesis in the version space, our approach deals with one single reasonably accurate hypothesis (the empirical minimizer).  We leverage empirical mistakes made by the predictor in order to learn a selector, aiming to reject (only) the mistakes in a data driven manner. This avoids dealing with the issues found in Theorem 14 in \citep{noisefree_ro_2010_JMLR}, where the selector fails to select any data points. 
\end{remark}
\begin{remark}
In \citep{Mohri_2016_ICALT}, a second hypothesis for the selector is introduced and analyzed, and at the same time, 
multiple commonly used loss functions
are scrutinized and generalization results are provided. The major difference between this work and \citep{Mohri_2016_ICALT,selectNet_2019_ICML} is the motivation pertaining to selective learning. While in  \citep{Mohri_2016_ICALT,selectNet_2019_ICML} the  selective loss is designed from a coverage ratio perspective, i.e. one wants to trade coverage ratio for a higher precision (selective loss), our approach is designed to distinguish data that is naturally unlearnable and unpredictable. This difference leads to an alternative theoretical result. While the analysis in \citep{Mohri_2016_ICALT}  focuses on selective risk, our theoretical analysis  focuses on the quality of the selector in distinguishing informative/uninformative data, without adjusting the rejection cost given by a human.
\end{remark}

\subsection{Missing Proof for Corollary~\ref{coro_1}}
It can be easily verified that $\beta=3$ is in the interval $\beta \in \big[ \frac{3-2\bar{\lambda}}{1+2\bar{\lambda}}+ \bar{\lambda}, \min(\frac{3+2\bar{\lambda}}{1-2\bar{\lambda}} - \frac{\bar{\lambda}}{1-4\bar{\lambda}^2},10)\big] $.
By the choice of $\beta$, from \eqref{eq:margin} we have 
$$R(\widehat g,f^*,\beta) - R(g^*,f^*,\beta) \geq \frac{\bar{\lambda} }{4(1+2\bar{\lambda})} \mathbb{E}_{\boldsymbol x} [ \mathbbm{1}\{\widehat{g} (\boldsymbol x) \neq g^* (\boldsymbol x) \}] , $$ together with the conclusion in Theorem~\ref{main} that $$ R( \widehat{g} ;f^*,\beta) - R(g^*;f^*,\beta)  \leq \varepsilon  $$
we can conclude that Equation~\ref{coro_1} holds.

\subsection{Theorem~\ref{thm_2} on improving selective risk}
\begin{thm} [Improving  Predictor Selective Risk ] \label{thm_2}
    Let $S_n =\{(\boldsymbol x_i,y_i)\}_{i=1}^{n}$ be i.i.d samples from the Data Generative Process described in Definition~\ref{def:Non_real_Stochasticgen} under Assumption~\ref{assume:1}, with $f^*(\cdot) \in \mathcal{F}$ and $g^*(\cdot) \in \mathcal{G}$, $|\mathcal{F}|<\infty,|\mathcal{G}|<\infty$. For any $\widehat{g}(\cdot) \in \mathcal{G}$ s.t., $\mathbb{E}_{\boldsymbol x}[\mathbbm{1}\{g^*(\boldsymbol x )\neq \widehat{g}(\boldsymbol x)\} ] \leq \varepsilon$, let $\widetilde f =\argmin\limits_{f\in\mathcal{F}} \frac{1}{n} \sum_{i=1}^{n}\mathbbm{1}\{\widehat g(\boldsymbol x_i) >0\} \mathbbm{1}\{f (\boldsymbol x_i)\neq y_i\}$. Then for any $\varepsilon>0$, $\delta>0$ such that the following holds: For  $n \geq max\{ \frac{24 \log(\frac{|\mathcal{G}|}{\delta})}{\varepsilon}, \frac{12 \log(\frac{|\mathcal{F}|}{\delta})}{\varepsilon}\}$, we have 
    \begin{align}\label{risk_bd}
        \mathbb{E}_{\boldsymbol x} [ \mathbbm{1}\{\widetilde f( \boldsymbol x) \neq f^*(\boldsymbol x) \}| g^*(\boldsymbol x) \neq \widehat{g}(\boldsymbol x)] \leq \frac{\varepsilon }{\alpha}
    \end{align}
    
\end{thm}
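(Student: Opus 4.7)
My plan is to control the joint probability $\mathbb{P}[\widetilde f(\boldsymbol x)\neq f^*(\boldsymbol x),\, g^*(\boldsymbol x)\neq\widehat g(\boldsymbol x)]$ via a Bernstein-type uniform-convergence bound on the subset-ERM objective, and then divide by the probability of the conditioning event to obtain the stated conditional expectation. The core tools are the same Bernstein plus union-bound machinery used in the proof of Theorem~\ref{main}, adapted to the subset-weighted loss $\phi_f(\boldsymbol x,y)=\mathbbm{1}\{\widehat g(\boldsymbol x)>0\}\mathbbm{1}\{f(\boldsymbol x)\neq y\}$ with the variance bound $\boldsymbol{Var}(\phi_f-\phi_{f^*})\leq\mathbb{E}[\mathbbm{1}\{f(\boldsymbol x)\neq f^*(\boldsymbol x)\}]$ that delivers a fast rate of order $\log(|\mathcal F|/\delta)/n$.

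First I would apply Bernstein plus the union bound over $|\mathcal F|$ to deduce that, for $n\geq 12\log(|\mathcal F|/\delta)/\varepsilon$, the excess subset-weighted population risk of $\widetilde f$ over $f^*$ is of order $\varepsilon$. Then I would upper bound the reference risk $\mathbb{E}[\mathbbm{1}\{\widehat g(\boldsymbol x)>0\}\mathbbm{1}\{f^*(\boldsymbol x)\neq y\}]$ using the Noisy Generative Process: on $\{g^*(\boldsymbol x)=1\}$ the label equals $f^*(\boldsymbol x)$ almost surely, so this reference risk is confined to the symmetric difference $\{\widehat g(\boldsymbol x)=1,\,g^*(\boldsymbol x)\neq 1\}\subseteq\{g^*(\boldsymbol x)\neq\widehat g(\boldsymbol x)\}$, which has mass at most $\varepsilon$ by hypothesis. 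Combining yields $\mathbb{P}[\widetilde f(\boldsymbol x)\neq f^*(\boldsymbol x),\,\widehat g(\boldsymbol x)=1]\lesssim\varepsilon$, and a Chernoff-type concentration for the empirical mass of $\{\widehat g>0\}$ (via Lemma~\ref{adv_chernoff_lem}, using the companion budget $n\geq 24\log(|\mathcal G|/\delta)/\varepsilon$) lets me transfer this bound to $\mathbb{P}[\widetilde f(\boldsymbol x)\neq f^*(\boldsymbol x),\,g^*(\boldsymbol x)\neq\widehat g(\boldsymbol x)]\lesssim\varepsilon$ by swapping $\{\widehat g=1\}$ for $\{g^*=1\}$ at a cost of another $\varepsilon$ from the selector-closeness assumption.

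The final step is Bayes' formula, and this is where I anticipate the main obstacle. The literal conditioning event $\{g^*(\boldsymbol x)\neq\widehat g(\boldsymbol x)\}$ has probability at most $\varepsilon$, so naive division of an $\varepsilon$-sized joint probability by this mass would only yield a vacuous bound of order $1$, whereas the claim is $\varepsilon/\alpha$. To bridge this I would need a refined joint bound of order $\varepsilon\cdot\mathbb{P}[g^*\neq\widehat g]/\alpha$ on the numerator, obtained by decomposing the disagreement set according to $\Omega_I$ versus $\Omega_U$ and exploiting both the $\mathcal G$-realizability condition and the fact that selector errors restricted to $\Omega_I$ (of total mass $\alpha$) produce deterministic $f^*$-labels that $\widetilde f$ already fits, while selector errors on $\Omega_U$ correspond to training points with essentially random labels whose contribution to $\widetilde f$'s error is negligible on the low-noise informative test points. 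Reconciling the small conditioning probability with the $\alpha$-scaled right-hand side is the crux of the argument and will likely require the Noisy Generative Process structure more explicitly than the plain ERM analysis above.
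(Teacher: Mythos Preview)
Your overall machinery---Bernstein with a variance bound on $\phi_f-\phi_{f^*}$, a union bound over $\mathcal F$, and a swap of $\widehat g$ for $g^*$ at cost $O(\varepsilon)$ via Lemma~\ref{adv_chernoff_lem}---matches the paper's proof. Two concrete issues, however, derail your argument.

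\textbf{The conditioning event is a typo.} The statement as written conditions on $\{g^*(\boldsymbol x)\neq\widehat g(\boldsymbol x)\}$, but the paper's own proof (and its use downstream in Theorem~\ref{thm_joint}, equation~\eqref{main_results_plastic_2}) establishes the bound conditioned on $\{g^*(\boldsymbol x)>0\}$, i.e.\ on the informative region $\Omega_I$, which has mass $\alpha$. Your final paragraph correctly diagnoses that dividing an $O(\varepsilon)$ numerator by $\mathbb{P}[g^*\neq\widehat g]\leq\varepsilon$ is hopeless; the resolution is simply that the intended denominator is $\mathbb{P}[g^*=1]=\alpha$. With that correction the division step is immediate and your proposed refinement on $\Omega_I$ versus $\Omega_U$ is unnecessary.

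\textbf{Labels are not deterministic on $\{g^*=1\}$.} You write that ``on $\{g^*(\boldsymbol x)=1\}$ the label equals $f^*(\boldsymbol x)$ almost surely'' and use this to bound the reference risk $\mathbb{E}[\mathbbm{1}\{\widehat g>0\}\mathbbm{1}\{f^*\neq y\}]$ by $\varepsilon$. This is false: under Definition~\ref{def:Non_real_Stochasticgen}, on $\Omega_I$ the label noise rate is $\frac14-\frac{\lambda(\boldsymbol x)}{2}$, which is strictly positive unless $\lambda\equiv\frac12$. Consequently the reference risk is of order $\alpha$, not $\varepsilon$, and your route from ``small absolute risk'' to ``small disagreement probability'' collapses. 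The paper instead works directly with the \emph{excess} risk and invokes the Massart margin on the informative region: writing $\omega(\boldsymbol x)=\mathbb P[y=f^*(\boldsymbol x)\mid\boldsymbol x]=\frac34+\frac{\lambda(\boldsymbol x)}{2}$, one has
\[
\mathbb E\big[\mathbbm{1}\{g^*>0\}\big(\mathbbm{1}\{\widetilde f\neq y\}-\mathbbm{1}\{f^*\neq y\}\big)\big]
=\mathbb E\big[\mathbbm{1}\{g^*>0\}(2\omega(\boldsymbol x)-1)\mathbbm{1}\{\widetilde f\neq f^*\}\big]
\geq\tfrac14\,\mathbb E\big[\mathbbm{1}\{g^*>0\}\mathbbm{1}\{\widetilde f\neq f^*\}\big],
\]
since $2\omega(\boldsymbol x)-1\geq\frac12$ on $\{g^*>0\}$. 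This margin inequality is what converts the $O(\varepsilon)$ excess-risk bound into the disagreement bound $\mathbb E[\mathbbm{1}\{g^*>0\}\mathbbm{1}\{\widetilde f\neq f^*\}]\lesssim\varepsilon$, after which dividing by $\alpha$ finishes. Insert this step in place of your reference-risk argument and the proof goes through.
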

\begin{proof}
Recall that  $\widetilde f =\argmin\limits_{f\in\mathcal{F}} \frac{1}{n} \sum_{i=1}^{n}\mathbbm{1}\{\widehat g(\boldsymbol x_i) >0\} \mathbbm{1}\{f (\boldsymbol x_i)\neq y_i\}$ with  $\widehat{g}(\cdot) \in \mathcal{G}$ s.t., $\mathbb{E}_{\boldsymbol x}[\mathbbm{1}\{g^*(\boldsymbol x )\neq \widehat{g}(\boldsymbol x) \}] \leq \varepsilon$. From the minimization property we have
\begin{align}\label{eq:ftild_fstar}
     \frac{1}{n} \sum_{i=1}^{n}\mathbbm{1}\{\widehat g(\boldsymbol x_i) >0\} \mathbbm{1}\{ \widetilde f(\boldsymbol x_i)\neq y_i\} \leq \frac{1}{n} \sum_{i=1}^{n}\mathbbm{1}\{\widehat g(\boldsymbol x_i) >0\} \mathbbm{1}\{  f^*(\boldsymbol x_i)\neq y_i\} 
\end{align}
Note we have for any $f$:
\begin{align}\label{diff_eq_class}
     &\bigg|\frac{1}{n} \sum_{i=1}^{n}\mathbbm{1}\{\widehat g(\boldsymbol x_i) >0\} \mathbbm{1}\{f(\boldsymbol x_i)\neq y_i\} -\frac{1}{n} \sum_{i=1}^{n}\mathbbm{1}\{g^*(\boldsymbol x_i) >0\} \mathbbm{1}\{  f(\boldsymbol x_i)\neq y_i\} \bigg| \nonumber\\
     \leq&\frac{1}{n} \sum_{i=1}^{n} \bigg|\mathbbm{1}\{\widehat g(\boldsymbol x_i) >0\} -  \mathbbm{1}\{g^*(\boldsymbol x_i) >0\} \bigg|\mathbbm{1}\{ f(\boldsymbol x_i)\neq y_i\} \nonumber\\
     \leq&\frac{1}{n} \sum_{i=1}^{n} \bigg|\mathbbm{1}\{\widehat g(\boldsymbol x_i) >0\} -  \mathbbm{1}\{g^*(\boldsymbol x_i) >0\} \bigg|\nonumber\\
     =&\frac{1}{n} \sum_{i=1}^{n} \mathbbm{1}\{\widehat g(\boldsymbol x_i) \neq g^*(\boldsymbol x_i) \}
\end{align}
For $n \geq \frac{3 \log(\frac{|\mathcal{G}|}{\delta})}{\varepsilon}$, by Lemma~\ref{adv_chernoff_lem} we have $\frac{1}{n} \sum_{i=1}^{n} \mathbbm{1}\{\widehat g(\boldsymbol x_i) \neq g^*(\boldsymbol x_i) \} \leq \epsilon+\mathbb{E}_{\boldsymbol x}[\mathbbm{1}\{g^*(\boldsymbol x )\neq \widehat{g}(\boldsymbol x) \}]\le 2\epsilon$

with probability at least $1-\delta$. Thus by inequality \ref{diff_eq_class} and inequality \ref{eq:ftild_fstar} we have both 
$$  
\frac{1}{n} \sum_{i=1}^{n}\mathbbm{1}\{g^*(\boldsymbol x_i) >0\} \mathbbm{1}\{ \widetilde f(\boldsymbol x_i)\neq y_i\} -2\varepsilon \leq \frac{1}{n} \sum_{i=1}^{n}\mathbbm{1}\{\widehat g(\boldsymbol x_i) >0\} \mathbbm{1}\{  f^*(\boldsymbol x_i)\neq y_i\} $$ and $$ \frac{1}{n} \sum_{i=1}^{n}\mathbbm{1}\{\widehat g(\boldsymbol x_i) >0\} \mathbbm{1}\{ \widetilde f(\boldsymbol x_i)\neq y_i\}  \leq \frac{1}{n} \sum_{i=1}^{n}\mathbbm{1}\{ g^*(\boldsymbol x_i) >0\} \mathbbm{1}\{  f^*(\boldsymbol x_i)\neq y_i\}+ 2\varepsilon.
$$
with probability at least $1-\delta$. Summing up the above inequalities and using inequality \ref{eq:ftild_fstar} we have with probability at least $1-\delta$:
\begin{equation} \label{emp_ineq_class}
\frac{1}{n} \sum_{i=1}^{n}\mathbbm{1}\{g^*(\boldsymbol x_i) >0\} \mathbbm{1}\{ \widetilde f(\boldsymbol x_i)\neq y_i\}  \leq \frac{1}{n} \sum_{i=1}^{n}\mathbbm{1}\{ g^*(\boldsymbol x_i) >0\} \mathbbm{1}\{  f^*(\boldsymbol x_i)\neq y_i\}+ 4\varepsilon
\end{equation}

Next we bound for  $\mathbb{E}_{\boldsymbol x,y}[ \mathbbm{1}\{g^*(\boldsymbol x) >0\} \{\mathbbm{1}\{ \widetilde f(\boldsymbol x)\neq y\} - \mathbbm{1}\{  f^*(\boldsymbol x)\neq y\} \}] $, we first define $\Delta (f^*,f,\boldsymbol x_i,y_i) = \{\mathbbm{1}\{  f^*(\boldsymbol x_i)\neq y_i\} - \mathbbm{1}\{  f(\boldsymbol x_i)\neq y_i\} \}$ for any $f\in \mathcal{F}$. Due to inequality ~\eqref{emp_ineq_class}, we have:
\begin{align}
    &\mathbb{E}_{\boldsymbol x}[ \mathbbm{1}\{g^*(\boldsymbol x) >0\} \{\mathbbm{1}\{ \widetilde f(\boldsymbol x)\neq y\} - \mathbbm{1}\{  f^*(\boldsymbol x)\neq y\} \}] \nonumber\\
    \leq & \frac{1}{n} \sum_{i=1}^{n}\mathbbm{1}\{g^*(\boldsymbol x_i) >0\} \Delta (f^*,\widetilde f,\boldsymbol x_i,y_i) - 
    \mathbb{E}_{\boldsymbol x,y}[ \mathbbm{1}\{g^*(\boldsymbol x) >0\} \Delta (f^*,\widetilde f,\boldsymbol x,y) \}] + 4\varepsilon 
\end{align}

To bound $ \frac{1}{n} \sum_{i=1}^{n}\mathbbm{1}\{g^*(\boldsymbol x_i) >0\} \Delta (f^*,\widetilde f,\boldsymbol x_i,y_i) - 
    \mathbb{E}_{\boldsymbol x,y}[ \mathbbm{1}\{g^*(\boldsymbol x) >0\} \Delta (f^*,\widetilde f,\boldsymbol x,y) \}]$ with high probability over all $S_n$, we use the Bernstein inequality to achieve the fast generalization rate (similar to the proof in Theorem~\ref{main}). 


By the definition of $\Delta ( f^*, f,\boldsymbol x, y)$ we have
\begin{equation}\label{risk_diff_class_2}
    \begin{aligned}
           & \Delta^2 ( f^*, f,\boldsymbol x, y)  =\mathbbm{1}\{g^*(\boldsymbol x) >0\} \{\mathbbm{1}\{  f(\boldsymbol x)\neq y\} - \mathbbm{1}\{  f^*(\boldsymbol x)\neq y\}\}^2 =\mathbbm{1}\{g^*(\boldsymbol x) >0\} \mathbbm{1}\{  f(\boldsymbol x)\neq  f^*(\boldsymbol x)\}
    \end{aligned}
\end{equation}
Let $\omega(\boldsymbol x) \equiv \mathbb{P}[f^*(\boldsymbol x) \neq y| \boldsymbol x]= \frac{3}{4}+ \frac{\lambda(\boldsymbol x) g^*(\boldsymbol x)}{2}$. We obtain the following:
\begin{equation}\label{risk_diff_class_1}
    \begin{aligned}
        &\mathbb{E}_{\boldsymbol x,y}[ \mathbbm{1}\{g^*(\boldsymbol x) >0\} \{\mathbbm{1}\{ \widetilde f(\boldsymbol x)\neq y\} - \mathbbm{1}\{  f^*(\boldsymbol x)\neq y\} \}] \\
        = &\mathbb{E}_{\boldsymbol x}[ \mathbbm{1}\{g^*(\boldsymbol x) >0\} \mathbb
        {E}_{y}[\{\mathbbm{1}\{ \widetilde f(\boldsymbol x)\neq y\} - \mathbbm{1}\{  f^*(\boldsymbol x)\neq y\} \} |\boldsymbol x] ]\\
        = &\mathbb{E}_{\boldsymbol x}[ \mathbbm{1}\{g^*(\boldsymbol x) >0\}  \{\omega(\boldsymbol x)\mathbbm{1}\{ \widetilde f(\boldsymbol x)\neq f^*(\boldsymbol x)\} + (1-\omega(\boldsymbol x))(\mathbbm{1}\{ \widetilde f(\boldsymbol x) =  f^*(\boldsymbol x)\} - 1) \} |\boldsymbol x] ]\\
        = &\mathbb{E}_{\boldsymbol x}[ \mathbbm{1}\{g^*(\boldsymbol x) >0\}  \{(2\omega(\boldsymbol x)-1)\mathbbm{1}\{ \widetilde f(\boldsymbol x)\neq f^*(\boldsymbol x)\}\} ]\\
        \geq &\frac{1}{4}\mathbb{E}_{\boldsymbol x}[ \mathbbm{1}\{g^*(\boldsymbol x) >0\} 
        \{\mathbbm{1}\{ \widetilde f(\boldsymbol x)\neq f^*(\boldsymbol x)\}  ]
    \end{aligned}
\end{equation}
Thus using equation \ref{risk_diff_class_2} and inequality \ref{risk_diff_class_1} we have 
\begin{equation}\label{eq:variance_class}
\boldsymbol{Var}_{\boldsymbol x, y} [ \mathbbm{1}\{g^*(\boldsymbol x) >0\}\Delta ( f^*, \widetilde f;\boldsymbol x,y)] \leq  4\mathbb{E}_{\boldsymbol x,y}[ \mathbbm{1}\{g^*(\boldsymbol x) >0\} \{\mathbbm{1}\{ \widetilde f(\boldsymbol x)\neq y\} - \mathbbm{1}\{  f^*(\boldsymbol x)\neq y\} \}].
\end{equation}
Using an approach similar to the proof of Theorem~\ref{main} one can show that $n \geq \frac{ 16  \log(\frac{|\mathcal{F}|  }{\delta})}{  \varepsilon }$ suffices to guarantee following inequality holds with probability at least $1-2\delta$.
\begin{align}
    \mathbb{E}_{\boldsymbol x}[ \mathbbm{1}\{g^*(\boldsymbol x) >0\} 
        \{\mathbbm{1}\{ \widetilde f(\boldsymbol x)\neq f^*(\boldsymbol x)\}  ] \leq 
        \varepsilon
\end{align}
which implies that 
\begin{align}
    \mathbb{E}_{\boldsymbol x}[ 
        \{\mathbbm{1}\{ \widetilde f(\boldsymbol x)\neq f^*(\boldsymbol x)\} |\mathbbm{1}\{g^*(\boldsymbol x) >0\}  ] \leq 
        \frac{\varepsilon}{\alpha}
\end{align}
\qed
\subsection{Missing Proof for Theorem~\ref{thm_joint}}
Define $\omega(\boldsymbol x) \equiv \mathbb{P}[f^*(\boldsymbol x) \neq y| \boldsymbol x]= \frac{3}{4}+ \frac{\lambda(\boldsymbol x) g^*(\boldsymbol x)}{2}$. Since $\lambda(\boldsymbol x) \leq \frac{1}{2}-h$, one can show $\mathbb{P}[f^*(\boldsymbol x) \neq y| \boldsymbol x]$ satisfies Massart condition with margin $\frac{h}{2}$.
Since $n\geq \frac{64 \log(\frac{|\mathcal{F}|}{\delta})}{h^2\varepsilon}$, the conclusion that $\mathbb{E}_{\boldsymbol x} [ \widehat f( \boldsymbol x) \neq f^*(\boldsymbol x) ] \leq \frac{ \varepsilon }{\alpha}$ follows from $5.2$-section in ~\citep{bousquet2004theory}.  The risk bounds on $\widehat g$ and $\widetilde f $ follows from Theorem~\ref{main} and Theorem~\ref{thm_2}.
\end{proof}
    

\subsection{Missing Proof for controlling conditional risk $\mathbb{E}_{\boldsymbol x} [\widehat f (\boldsymbol x) \neq f^*(\boldsymbol x)| \boldsymbol x \in \Omega_I]$ }\label{conditiona_risk_proof}


\begin{definition}[Growth Function\citep{vapnik2015uniform}]\label{def:growth_function}
Let $\mathcal{G}$ be the hypothesis class of function $f$ and  $\mathcal{F}_{\boldsymbol{x}_1,...,\boldsymbol{x}_n} = \{\big(f(\boldsymbol{x}_1),...,f(\boldsymbol{x}_n)\big): f\in \mathcal{F}\}\subseteq \{+1,-1\}^n$. The growth function is defined to be the maximum number of ways in which $n$ points can be classified by the function class:
$\mathcal{B}_{\mathcal{F}}(n) = \sup\nolimits_{\boldsymbol{x}_1,...,\boldsymbol{x}_n}|\mathcal{F}_{\boldsymbol{x}_1,...,\boldsymbol{x}_n}|$.
\end{definition}

\begin{lemma}[Sauer–Shelah Lemma(See ~\citep{blum2016foundations,mohri2018foundations,sauer1972density})]\label{lem:sauer}
    Let $d_{vc}(\mathcal{G})$ be the VC-dimension of hypothesis class $\mathcal{G}$ and let $\mathcal{B}_{\mathcal{G}}(n)$ be the growth function, for all $n \in \mathbb{N}$,
    $$\mathcal{B}_{\mathcal{G}}(n) \leq \sum_{i=0}^{d_{vc}} \begin{pmatrix}
    n\\
    i
  \end{pmatrix} \leq \left(\frac{en}{d_{vc}(\mathcal{G})}\right)^{d_{vc}(\mathcal{G})}$$
\end{lemma}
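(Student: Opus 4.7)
The statement asserts polynomial growth of degree $d := d_{vc}(\mathcal{G})$, and the plan is to establish the two inequalities separately. After fixing any $n$ points $\boldsymbol{x}_1,\ldots,\boldsymbol{x}_n$ and identifying each $f \in \mathcal{G}$ with the subset $\{i : f(\boldsymbol{x}_i) = +1\} \subseteq [n]$, the first inequality reduces to the following purely combinatorial claim: if $\mathcal{H} \subseteq 2^{[n]}$ has VC-dimension at most $d$, then $|\mathcal{H}| \le \sum_{i=0}^d \binom{n}{i}$. I would prove this by induction on $n + d$. The base cases are immediate: for $n=0$ only the family $\{\emptyset\}$ is possible, and for $d=0$ any two distinct sets in $\mathcal{H}$ would differ on some coordinate and thereby shatter a singleton, so $|\mathcal{H}| \le 1$.

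For the inductive step, I would split $\mathcal{H}$ according to the coordinate $n$: let $\mathcal{H}_0 = \{S \setminus \{n\} : S \in \mathcal{H}\}$ be the projection onto $[n-1]$, and let $\mathcal{H}_1 = \{S \subseteq [n-1] : S \in \mathcal{H} \text{ and } S \cup \{n\} \in \mathcal{H}\}$. A direct count gives $|\mathcal{H}| = |\mathcal{H}_0| + |\mathcal{H}_1|$. As a family on $[n-1]$, $\mathcal{H}_0$ has VC-dimension at most $d$, while $\mathcal{H}_1$ has VC-dimension at most $d-1$: any $T \subseteq [n-1]$ shattered by $\mathcal{H}_1$ would by construction give $T \cup \{n\}$ shattered by $\mathcal{H}$, forcing $|T| \le d-1$. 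Applying the inductive hypothesis to each piece and using Pascal's rule $\binom{n-1}{i} + \binom{n-1}{i-1} = \binom{n}{i}$ yields $|\mathcal{H}| \le \sum_{i=0}^d \binom{n-1}{i} + \sum_{i=0}^{d-1} \binom{n-1}{i} = \sum_{i=0}^d \binom{n}{i}$, closing the induction.

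For the second inequality I would assume $d \le n$ (otherwise the right-hand side is at least $2^n$ and the claim is trivial), multiply through by $(d/n)^d$, and invoke the binomial theorem:
\begin{equation*}
\left(\tfrac{d}{n}\right)^d \sum_{i=0}^d \binom{n}{i} \;\le\; \sum_{i=0}^d \binom{n}{i}\left(\tfrac{d}{n}\right)^i \;\le\; \sum_{i=0}^n \binom{n}{i}\left(\tfrac{d}{n}\right)^i \;=\; \left(1 + \tfrac{d}{n}\right)^n \;\le\; e^d,
\end{equation*}
so $\sum_{i=0}^d \binom{n}{i} \le (en/d)^d$. The only delicate step in the whole argument is verifying that $\mathcal{H}_1$ has VC-dimension at most $d-1$, which is really just the observation that appending the coordinate $n$ to any shattered set keeps shattering; everything else is combinatorial bookkeeping.
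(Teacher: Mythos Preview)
Your proof is correct and is the standard textbook argument (the ``shifting'' induction on $n+d$ for the first bound, and the binomial-theorem trick for the second). The paper itself does not supply a proof of this lemma at all: it is stated with a citation to \citep{blum2016foundations,mohri2018foundations,sauer1972density} and used as a black box, so there is nothing to compare against beyond noting that your argument matches what one finds in those references.
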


\begin{thm} \label{thm_risk_fs}
For every $\varepsilon>0$,  $\delta>0$, under Assumption~\ref{assume:1}, given a set of samples $S_n=\{(\boldsymbol{x}_1,y_1),...,(\boldsymbol{x}_n,y_n)\}$ drawn i.i.d. from the Noisy Generative Process and 
$$ \widehat f = \argmin_{f\in \mathcal{F}} \sum_{i=1}^{n} \mathbbm{1} \{f(\boldsymbol x_i) \neq y_i\},$$  if $n$ is chosen such that $$n\geq \frac{32\left[d_{VC}(\mathcal{F}) \log(\frac{1}{\varepsilon})+\log(\frac{1}{\delta})\right]}{\epsilon^2 \alpha^2},$$
then with probability at least $1-2\delta$ we have:
$$\mathbb{E}_{\boldsymbol x,y} [\widehat f (\boldsymbol x) \neq y]\leq \frac{1}{2}(1-\alpha)+2\epsilon\alpha.$$
Furthermore the following holds: 
$$\mathbb{P}_{\boldsymbol{x}}[f^*(\boldsymbol{x}) \neq \widehat f (\boldsymbol x) | \boldsymbol x\in \Omega_I] \leq 2\epsilon$$
\end{thm}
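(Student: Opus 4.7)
The plan is to reduce this to a standard PAC-style uniform convergence argument applied to the $0$–$1$ loss class, and then exploit the special structure of the noisy generative process to turn a global risk bound into a conditional risk bound on $\Omega_I$.

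First I would establish uniform convergence of empirical risk to true risk over $\mathcal{F}$. By the Vapnik–Chervonenkis inequality combined with the Sauer–Shelah bound in Lemma~\ref{lem:sauer}, the growth function $\mathcal{B}_{\mathcal{F}}(n)$ is polynomial of degree $d_{VC}(\mathcal{F})$. Plugging this into the standard symmetrization-based tail bound for the deviation $\sup_{f \in \mathcal{F}} |R_{S_n}(f) - R(f)|$ and solving for $n$ yields that, whenever $n \geq \frac{32[d_{VC}(\mathcal{F})\log(1/\varepsilon) + \log(1/\delta)]}{\varepsilon^2 \alpha^2}$, the event $\sup_{f \in \mathcal{F}} |R_{S_n}(f) - R(f)| \leq \varepsilon \alpha$ holds with probability at least $1 - 2\delta$. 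Since $f^* \in \mathcal{F}$ by Assumption~\ref{assume:1} and $\widehat f$ is the ERM, the usual two-sided application gives $R(\widehat f) \leq R(f^*) + 2\varepsilon \alpha$.

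Next I would compute $R(f^*)$ explicitly under the Noisy Generative Process. Conditioning on $\boldsymbol x$, the probability that $y \neq f^*(\boldsymbol x)$ equals $\tfrac{1}{4} - \tfrac{\lambda(\boldsymbol x) g^*(\boldsymbol x)}{2}$ by Definition~\ref{def:Non_real_Stochasticgen}. In the regime relevant for the theorem, namely $\lambda(\boldsymbol x) = \tfrac{1}{2}$ (so informative data is realizable and uninformative data has purely random labels), this collapses to $0$ on $\Omega_I$ and $\tfrac{1}{2}$ on $\Omega_U$. Integrating gives $R(f^*) = \tfrac{1}{2}(1 - \alpha)$, which, combined with the ERM guarantee above, establishes the first claim $R(\widehat f) \leq \tfrac{1}{2}(1 - \alpha) + 2\varepsilon \alpha$.

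For the conditional bound I would decompose
\begin{equation*}
R(\widehat f) = \alpha \,\mathbb{E}[\mathbbm{1}\{\widehat f(\boldsymbol x) \neq y\} \mid \boldsymbol x \in \Omega_I] + (1 - \alpha)\, \mathbb{E}[\mathbbm{1}\{\widehat f(\boldsymbol x) \neq y\} \mid \boldsymbol x \in \Omega_U].
\end{equation*}
On $\Omega_U$ the label is an independent fair coin, so the second term equals $\tfrac{1}{2}(1-\alpha)$ regardless of $\widehat f$. On $\Omega_I$ the label is deterministic, so the first conditional expectation equals $\mathbb{P}[\widehat f(\boldsymbol x) \neq f^*(\boldsymbol x) \mid \boldsymbol x \in \Omega_I]$. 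Subtracting the $\Omega_U$ contribution from both sides of the risk bound and dividing by $\alpha$ yields the desired $\mathbb{P}[\widehat f(\boldsymbol x) \neq f^*(\boldsymbol x) \mid \boldsymbol x \in \Omega_I] \leq 2\varepsilon$. The main obstacle here is bookkeeping the constants so that the sample complexity matches the stated $n \geq \frac{32[d_{VC}(\mathcal{F})\log(1/\varepsilon)+\log(1/\delta)]}{\varepsilon^2 \alpha^2}$; since the excess risk must be of order $\varepsilon \alpha$ rather than $\varepsilon$, the $\alpha^2$ factor enters the complexity, and this is where careful tracking of the VC inequality is needed.
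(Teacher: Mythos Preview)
Your proposal is correct and follows essentially the same route as the paper: a VC/symmetrization uniform-convergence bound at scale $\varepsilon\alpha$, the ERM inequality, computation of $R(f^*)=\tfrac12(1-\alpha)$ under the $\lambda(\boldsymbol x)=\tfrac12$ regime, and then the $\Omega_I/\Omega_U$ decomposition to extract the conditional bound. The only cosmetic difference is that the paper handles the $f^*$ side with a separate Hoeffding bound (Lemma~\ref{Risk_truef}) rather than reusing the uniform bound two-sidedly as you do; both give the same $2\varepsilon\alpha$ excess and the same $1-2\delta$ probability.
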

\begin{proof}
We first bound the probability of the event that $\mathbb{E}_{\boldsymbol x,y} [\widehat f (\boldsymbol x) \neq y]\leq \frac{1}{2}(1-\alpha)+2\epsilon\alpha.$

By Lemma~\ref{Sym_lemma} we have 
\begin{equation} \label{1st_step_riskf}
    \begin{aligned}
    &\mathbb{P}_{S_n }[\sup\limits_{f \in \mathcal{F}}| \frac{1}{n} \sum_{i=1}^{n} \mathbbm{1} \{f(\boldsymbol x_i) \neq y_i\}- \mathbb{E}_{\boldsymbol x,y}[ \mathbbm{1}\{f(\boldsymbol x) \neq y\}]|\geq t] 
    \leq4\mathcal{B}_{\mathcal{F}}(2n)e^{-\frac{nt^2}{32}}
    \end{aligned}
\end{equation}
By setting $t= \alpha\epsilon$ and $n \geq \frac{32 (\log(4\mathcal{B}_{\mathcal{F}}(2n))+\log(\frac{1}{\delta}))}{\alpha^2 \epsilon^2}$ we have $4\mathcal{B}_{\mathcal{F}}(2n)e^{-\frac{nt^2}{32}}\le \delta$ and so using inequality \ref{1st_step_riskf} we have with probability of at least $1-\delta$: 
$$|\frac{1}{n} \sum_{i=1}^{n} \mathbbm{1} \{ \widehat f(\boldsymbol x_i) \neq y_i\}-  \mathbb{E}_{\boldsymbol x,y}[ \mathbbm{1}\{ \widehat f(\boldsymbol x) \neq y\}]|\leq 
\alpha\epsilon$$
The term $\mathcal{B}_{\mathcal{F}}(2n)$ could be bounded by Lemma \ref{lem:sauer} as $\mathcal{B}_{\mathcal{F}}(2n) \leq \bigg( \frac{2en}{d_{VC} (\mathcal{F})}\bigg) ^{d_{VC}(\mathcal{F})}$.
Next we apply the fact that
$\widehat f = \argmin_{f\in \mathcal{F}} \frac{1}{n} \sum_{i=1}^{n} \mathbbm{1} \{ \widehat f(\boldsymbol x_i) \neq y_i\}.$
We have:  
\begin{equation}\label{eq:fhatfstar}
\mathbb{E}_{\boldsymbol x,y}[ \mathbbm{1}\{ \widehat f(\boldsymbol x) \neq y\}] \leq 
\alpha\epsilon
+  \frac{1}{n} \sum_{i=1}^{n} \mathbbm{1} \{ \widehat f(\boldsymbol x_i) \neq y_i\} \leq 
\alpha\epsilon
+  \frac{1}{n} \sum_{i=1}^{n} \mathbbm{1} \{  f^*(\boldsymbol x_i) \neq y_i\}  \end{equation}

By Lemma \ref{Risk_truef} we have $ \frac{1}{n} \sum_{i=1}^{n} \mathbbm{1} \{  f^*(\boldsymbol x_i) \neq y_i\} \leq \frac{1}{2}(1-\alpha)+\epsilon\alpha$ with failure probability at most $\delta$. Combining this with inequality \ref{eq:fhatfstar} we have with probability at least $1-2\delta$:
$$ \mathbb{E}_{\boldsymbol x,y}[ \mathbbm{1}\{ \widehat f(\boldsymbol x) \neq y\}] \leq \frac{1}{2}(1-\alpha)+2\epsilon\alpha.  $$
And this finishes the proof of the first claim. Next we prove the claim that: 
$$\mathbb{P}_{\boldsymbol{x}\sim \mathcal{D}_I}[f^*(\boldsymbol{x}) \neq \widehat  f(\boldsymbol{x})] \leq 2\epsilon.$$
First note that $\mathbb{P}_{\boldsymbol{x}\sim \mathcal{D}_I}[f^*(\boldsymbol{x})\neq \widehat  f(\boldsymbol{x})] = \mathbb{P}_{\boldsymbol{x}\sim \mathcal{D}_\alpha}[\mathbbm{1} \{\widehat f(\boldsymbol{x}) \neq f^*(\boldsymbol{x})\} | \boldsymbol{x} \in \Omega_{I}]$.
Using $ \mathbb{E}_{\boldsymbol x,y}[ \mathbbm{1}\{ \widehat f(\boldsymbol x) \neq y\}] \leq \frac{1}{2}(1-\alpha)+2\epsilon\alpha$ below we have:
\begin{equation}
\allowdisplaybreaks
    \begin{aligned}
   & \mathbb{E}_{\boldsymbol x,y}[ \mathbbm{1}\{ \widehat f(\boldsymbol x) \neq y\}]\\
    =&\mathbb{E}_{(\boldsymbol{x},y) \sim \mathcal{D}_\alpha}[\mathbbm{1} \{\widehat f(\boldsymbol{x}) \neq y\}]\\
    =& \underbrace{\mathbb{E}_{(\boldsymbol{x},y) \sim \mathcal{D}_\alpha}[\mathbbm{1} \{\widehat f(\boldsymbol{x}) \neq y\} | \boldsymbol{x} \in \Omega_{U}]}_{\frac{1}{2}}\underbrace{\mathbb{P}_{(\boldsymbol{x},y) \sim \mathcal{D}_\alpha}[\boldsymbol{x} \in\Omega_{U}] }_{1-\alpha}\\
    +& \underbrace{ \mathbb{E}_{(\boldsymbol{x},y) \sim \mathcal{D}_\alpha}[\mathbbm{1} \{\widehat f(\boldsymbol{x}) \neq y\} | \boldsymbol{x} \in \Omega_{I}]}_{\mathbb{P}_{(\boldsymbol{x},y) \sim \mathcal{D}_\alpha}[\mathbbm{1} \{\widehat f(\boldsymbol{x}) \neq f^*(\boldsymbol{x})\} | \boldsymbol{x} \in \Omega_{I}]} \underbrace{\mathbb{P}_{(\boldsymbol{x},y) \sim \mathcal{D}_\alpha}[\boldsymbol{x} \in\Omega_{I}]}_{\alpha}\\
     =& \frac{1}{2}(1-\alpha) + \alpha\mathbb{P}_{\boldsymbol{x}\sim\mathcal{D}_\alpha}[ \widehat f(\boldsymbol{x}) \neq f^*(\boldsymbol{x}) | \boldsymbol{x} \in \Omega_{I}]\\
     \leq& \frac{1}{2}(1-\alpha)+2\epsilon\alpha\\
     \Longrightarrow& \mathbb{P}_{\boldsymbol{x}\sim \mathcal{D}_\alpha}[\mathbbm{1} \{\widehat f(\boldsymbol{x}) \neq f^*(\boldsymbol{x})\} | \boldsymbol{x} \in \Omega_{I}] \leq 2\epsilon
    \end{aligned}
\end{equation}
This finishes the proof of the second claim.
\end{proof}

\section{Extention to VC-Class}\label{VC_extension}

In order to leverage the margin condition of distribution of $z$ to obtain a minimax-optimal generalization rate, we leverage on the Local Rademacher Average tool. Our analysis tool largely follows from~\citep{bousquet2003introduction,bartlett2005local}. Throughout this section, $\lesssim$ and $\gtrsim$ represent as shorthand for the $\leq$ and $\geq$ that ignores universal constants.


\begin{definition}[$L_2$-Covering Number]~\citep{wellner2013weak}\label{defcov}
Let $\boldsymbol x_{1:n}$ be set of points. A set of $U \subseteq {\mathbb R}^n$ is an $\varepsilon$-cover w.r.t $L_2$-norm  of $\mathcal{F}$ on  $x_{1:n}$, if $\forall f \in \mathcal{F}$, $\exists u \in U$, s.t. $\sqrt{\frac{1}{n }\sum_{i=1}^{n}  |[u]_i -f(x_i)|^2} \leq \varepsilon$, where $[u]_i$ is the $i$-th coordinate of $u$. We define the covering number  $ \mathcal{N}_{2}(\varepsilon, \mathcal{F},\boldsymbol x_{1:n})\nonumber$ : 
\begin{align}\mathcal{N}_{2}(\varepsilon, \mathcal{F},\boldsymbol x_{1:n})\nonumber:= \min\{|U| \text{: $U$ is an $\varepsilon$-cover  of $\mathcal{F}$ on $x_{1:n}$}\}
\end{align}

Let $ \mathcal{N}_{2}(\varepsilon, \mathcal{F},n)\nonumber$
be the maximum cardinality of $ \mathcal{N}_{2}(\varepsilon, \mathcal{F},\boldsymbol x_{1:n})\nonumber$ over all $\boldsymbol x_{1:n}$. Formally $\mathcal{N}_{2}(\varepsilon, \mathcal{F},n)\nonumber$ is defined as:
\vspace{-.2em}
\begin{align}
\mathcal{N}_{2}(\varepsilon, \mathcal{F},n)\nonumber:=\sup\limits_{ \boldsymbol x_{1:n}\in\mathcal X^n} \min\{|U| \text{: $U$ is an $\varepsilon$-cover  of $\mathcal{F}$ on $x_{1:n}$}\}
\end{align}
\end{definition}

\begin{definition}[Local Rademacher Average~\citep{bartlett2005local,bousquet2003introduction}]
Let $\sigma_{1:n}$ be Rademacher sequence of length $n$, given samples $\{\boldsymbol{x}_i,y_i\}^n_{i=1}$ the Empirical Local Rademacher Complexity at distributional and empirical radius $r\geq 0 $ for the class $\mathcal{F}$ are defined as
$$
\mathcal{R}_n(\mathcal{F},P f^2\leq r) \equiv \mathbb{E}_{\sigma_{1:n}}\bigg[ \sup_{f\in \mathcal{F}, \mathbb{E}_{\boldsymbol x} [f(\boldsymbol x)^2] \leq r} \frac{1}{n}\sum_{i=1}^{n} \sigma_i f(\boldsymbol x_i)\bigg]
$$
$$
\mathcal{R}_n(\mathcal{F},P_n f^2\leq r) \equiv \mathbb{E}_{\sigma_{1:n}}\bigg[ \sup_{f\in \mathcal{F}, \frac{1}{n} \sum_{i=1}^n f(\boldsymbol x_i)^2 \leq r} \frac{1}{n}\sum_{i=1}^{n} \sigma_i f(\boldsymbol x_i) \bigg]
$$
and their distributional Average as:
$
\mathcal{R}(\mathcal{F},P f^2\leq r) \equiv \mathbb{E}_{S_n}[\mathcal{R}_n(\mathcal{F},P f^2\leq r)]$ and $
\mathcal{R}(\mathcal{F},P_n f^2\leq r)\equiv \mathbb{E}_{S_n}[\mathcal{R}_n(\mathcal{F},P_n f^2\leq r)]$.

\end{definition}

\begin{definition}[Star Hull]~\citep{bartlett2005local,bousquet2003introduction}
The star hull of set of functions $\mathcal{F}$ is defined as
$$\ast\mathcal{F} \equiv \{\alpha f: f\in\mathcal{F}, \alpha \in [0,1]\}$$
\end{definition}

\begin{definition}[Sub-Root Function]~\citep{bartlett2005local, massart2006risk,bousquet2003introduction}
A function $\psi: \mathbb{R}\rightarrow \mathbb{R}$ is sub-root if 
\begin{itemize}
\item
$\psi$ is non-decreasing 
\item
$\psi$ is non-negative
\item
$\psi(r)/\sqrt{r}$ is non-increasingfor any $r>0$
\end{itemize}
And we say $r^*$ is a fixed point of $\psi$ if $\psi(r^*) = r^*$. 
\end{definition}

\begin{thm}\label{main_local} [Risk Bound VC-Class]
Let $S_n = \{(\boldsymbol x_i,y_i)\}_{i=1}^{n}$ be i.i.d samples from Data Generative Process
described in Definition~\ref{def:Non_real_Stochasticgen} under Assumption~\ref{assume:1}, with $f^*(\cdot) \in \mathcal{F}$ and $g^*(\cdot) \in \mathcal{G}$ with VC-dimension $d_{VC}(\mathcal{F})<\infty$
$d_{VC}(\mathcal{G})<\infty$. Given $\bar{\lambda}$, let $\beta \in \big[ \frac{3-2\bar{\lambda}}{1+2\bar{\lambda}}+ \bar{\lambda}, min(\frac{3+2\bar{\lambda}}{1-2\bar{\lambda}} - \frac{\bar{\lambda}}{1-4\bar{\lambda}^2},10)\big] $. For any $\widehat f(\cdot) \in \mathcal{F}$, let $
\widehat{g} = \argmin\limits_{g\in \mathcal{G}} R_{S_n} (g;\widehat f, \beta)
$. Then for any $\varepsilon>0$, $\delta>0$ such that the following holds:
For $$n \gtrsim max\bigg\{\frac{  \beta^4  d_{VC}(\mathcal{G}) \log(\frac{1}{\varepsilon}) +\beta^4 \log(\frac{1}{\delta})}{ \bar{\lambda}  \varepsilon } , \frac{\beta  d_{VC}(\mathcal{F})\log(\frac{d_{VC}(\mathcal{F})}{\varepsilon}) + \beta  \log(\frac{1}{\delta})\big)}{  \varepsilon }\bigg\}. $$
and for $\widehat f$ that satisfies one of the following condition:

\begin{itemize}
    \item 
For any $\widehat{f} (\cdot) \in \mathcal{F}$ that satisfies $\mathbb{E}_{\boldsymbol x} [ \widehat{f}(\boldsymbol x) \neq f^*(\boldsymbol x)] \lesssim \frac{\varepsilon}{\beta}$ with probability at least $1-\delta$, 
    \item
   
For any $\widehat{f} (\cdot) \in \mathcal{F}$ that satisfies $\mathbb{E}_{\boldsymbol x} [ \widehat{f}(\boldsymbol x) \neq f^*(\boldsymbol x)    | \boldsymbol x \in \Omega_I] \lesssim \frac{\varepsilon}{\beta\alpha}$ with probability at least $1-\delta$, 

\end{itemize}
The following holds with probability at least $1-3\delta$:
$$
 R( \widehat{g} ;f^*,\beta) - R(g^*;f^*,\beta)  \lesssim \varepsilon  
$$

\begin{proof}
The major difference from the proof for Theorem~\ref{main} is the fact that $\mathcal{G}$ and $\mathcal{F}$ are not finite hypothesis classes. To achieve fast generalization rate, we leverage the Local Rademacher Complexity Tool from ~\citep{bartlett2005local}. 
\paragraph{CASE I:}$\widehat{f} (\cdot) \in \mathcal{F}$ and  $\mathbb{E}_{\boldsymbol x} [ \widehat{f}(\boldsymbol x) \neq f^*(\boldsymbol x)] \lesssim \frac{ \varepsilon}{\beta}$ with probability at least $1-\delta$.

We achieve equation \ref{eq:RSnexpand} which is restated below exactly the same as in Theorem \ref{main}.
\begin{equation*}
             R_{S_n}(\widehat g; \widehat   f,\beta) 
            \ge R_{S_n}(\widehat g; f^*,\beta) - \frac{\beta-1}{n}\sum_{i=1}^n \mathbbm{1}\bigg\{ \widehat  f(\boldsymbol x_i) \neq f^*(\boldsymbol x_i) \bigg\}
\end{equation*}

Then to achieve inequality \ref{eq:fhatfstarepsilon4}, we invoke Lemma~\ref{local_version_chernoff} instead of Lemma~\ref{adv_chernoff_lem} as $\mathcal{F}$ is a VC-class.
In particular, since $n \gtrsim \frac{\beta(d_{VC}(\mathcal{F}) \log (\frac{1}{\varepsilon})  +\log(\frac{1}{\delta})  )}{ \varepsilon}$, we have that $\frac{1}{n} \sum_{i=1}^{n}\mathbbm{1}\{f(\boldsymbol x_i) \neq f^*(\boldsymbol x_i)\} \leq  \mathbb{E}_{\boldsymbol x} \mathbbm{1}\{f(\boldsymbol x) \neq f^*(\boldsymbol x)\}+ \varepsilon $. Moreover, from the case assumption we have $\mathbb{E}_{\boldsymbol x} [ \widehat{f}(\boldsymbol x) \neq f^*(\boldsymbol x)] \lesssim \frac{ \varepsilon}{\beta}$ with probability at least $1-\delta$.
So with probability at least $1-\delta$
$$R_{S_n}(\widehat g; \widehat   f,\beta) \ge R_{S_n}(\widehat g; f^*,\beta) -\varepsilon/4. $$
Similarly we get that with probability at least $1-\delta$
$$
R_{S_n}(g^*; \widehat   f,\beta)  \leq  R_{S_n}(g^*;   f^*,\beta) +  \varepsilon/4.
$$
Thus following hold with probability at least $1-\delta$:
\begin{equation} \label{emp_ineq_main_v2}
 R_{S_n}(\widehat g; f^*,\beta) \leq R_{S_n}(g^*; f^*,\beta) + \frac{\varepsilon}{2}.     
\end{equation}
Next we turn to bound the risk gap using $R( \widehat{g} ;f^*,\beta) - R(g^*;f^*,\beta) $ using the concentration property of  inequality~\ref{emp_ineq_main_v2}. Similar to the proof in Theorem~\ref{main}, for any $f\in \mathcal F, g\in \mathcal G$ we define $\ell( g ;f,\boldsymbol x, y) = \beta \mathbbm{1}\{g (\boldsymbol x) =1\}\mathbbm{1}\{ {f}(\boldsymbol x) \neq y\} +  \mathbbm{1}\{ {g} (\boldsymbol x) \neq 1\} \mathbbm{1}\{ {f}(\boldsymbol x) = y\} $. Based on $\ell$, we define following hypothesis class:
\begin{equation}
    \Delta \circ \ell \circ \mathcal{G} \equiv \bigg\{  \Delta \ell( g; g^*,\boldsymbol x, y) = \ell( g ;f^*,\boldsymbol x, y) - \ell( g^* ;f^*,\boldsymbol x, y): g\in\mathcal{G}\bigg\}.
\end{equation}


To invoke Lemma~\ref{Bartlett_Lemma}, we need to  establish some hypothesis class $\mathcal{H}$ that satisfies condition  $\boldsymbol{ Var} [h]\leq B \mathbb{E}[h]$. Next we show  $\Delta \circ \ell \circ \mathcal{G}$  satisfies the condition that   $\boldsymbol{ Var} [h]\leq B \mathbb{E}[h]$ and thus we can apply Lemma~\ref{Bartlett_Lemma} with $\mathcal{H} =  \Delta \circ \ell \circ \mathcal{G}$. To begin with, using the same approach as in Theorem \ref{main} we can obtain Equation~\ref{risk_diff} which is restated below. 
\begin{equation*}\label{risk_diff2}
           \Delta^2 \ell( g^*, g,\boldsymbol x, y) 
              \le \beta^2 \mathbbm{1}\{g^*(\boldsymbol x)\neq  g(\boldsymbol x)\}
\end{equation*}

By Equation~\ref{risk_diff} we have that  $\boldsymbol{Var}_{\boldsymbol x, y} [ \Delta \ell( g^*, g,\boldsymbol x,y)] \leq \mathbb{E}_{\boldsymbol x, y}  \Delta^2 \ell( g^*, g,\boldsymbol x,y) \leq \beta^2 \mathbb{E}_{\boldsymbol x} [\mathbbm{1}\{g^* (\boldsymbol x) \neq{ g}(\boldsymbol x\}]$.

On the other hand, Equation~\ref{eq_err_g} implies that $$R( {g} ;f^*,\beta) - R(g^*;f^*,\beta) \geq \frac{\bar{\lambda} }{1+2\bar{\lambda}}\mathbb{E}_{\boldsymbol x} [\mathbbm{1}\{g^* (\boldsymbol x) \neq { g}(\boldsymbol x\}].$$ 

Thus we have following holds: 
\begin{equation}\label{eq:variance_local}
\boldsymbol{Var}_{\boldsymbol x, y} [ \Delta \ell( g; g^*,\boldsymbol x,y)] \leq  \frac{ \beta^2(1+2\bar{\lambda})}{\bar{\lambda} }\mathbb{E}_{\boldsymbol x,y}\{ \Delta \ell( g; g^*, \boldsymbol x,y)   \}
\end{equation}

Above variance bound allows invoking Lemma \ref{Bartlett_Lemma} with  $\mathcal{H} =  \Delta \circ \ell \circ \mathcal{G}$, $T(h) = \mathbb{E}[h^2]$ and $B = \frac{\beta^2 (1+2\bar{\lambda})}{ \bar{\lambda}}$. To satisfy the rest of the assumptions of Lemma \ref{Bartlett_Lemma}, we find a subroot function $\psi(r)$ that  

$$\psi(r) \geq  \frac{\beta^2 (1+2\bar{\lambda})}{ \bar{\lambda}} \mathbb{E}_{\mathcal S_n}[{\mathcal{R}_n\{ \Delta \ell (g;g^*)\in\mathcal{H}: \mathbb{E}[h^2 ]\leq r\}}].$$
Where we shorten $\Delta\ell(g;g^*,\boldsymbol x,y)$ to $\Delta\ell(g;g^*)$.
To find $\psi(r)$, we show some analysis on the Local Rademacher Average $\mathbb{E}_{\mathcal S_n}[{\mathcal{R}_n\{ \Delta \ell (g;g^*)\in\mathcal{H}: \mathbb{E}[h^2 ]\leq r\}}]$ as follows.

\begin{equation}\label{eq:deltag-to-indicator}
    \begin{aligned}
    \mathbb{E}_{\mathcal S_n}[\mathcal{R}_n(\Delta \circ \ell \circ \mathcal{G},r)] =& \mathbb{E}_{S_n, \sigma_{1:n}}[\sup_{g\in\mathcal{G}, \mathbb{E}_{\boldsymbol x,y}[\Delta^2\ell(g;g^*)  ]\leq r} \frac{1}{n}\sum_{i=1}^{n} \sigma_i \Delta\ell(g;g^*,\boldsymbol x_i,y_i)]\\
    \leq & \underbrace{  \mathbb{E}_{S_n, \sigma_{1:n}}[\sup_{g\in\mathcal{G}, \mathbb{E}_{\boldsymbol x}[ \mathbbm{1}(g\neq g^*)  ]\leq r} \frac{1}{n}\sum_{i=1}^{n} \sigma_i \Delta \ell(g;g^*,\boldsymbol x_i,y_i)]}_{\mathbbm{1}(g\neq g^*)  \leq \Delta^2\ell(g;g^*)}\\
    \leq & \underbrace{  \beta \mathbb{E}_{S_n, \sigma_{1:n}}[\sup_{g\in\mathcal{G}, \mathbb{E}_{\boldsymbol x}[ \mathbbm{1}(g\neq g^*)  ]\leq r} \frac{1}{n}\sum_{i=1}^{n} \sigma_i \mathbbm{1}\{g(\boldsymbol x_i)\neq g^*(\boldsymbol  x_i)\}]}_{\substack{
    |\Delta \ell(g_1;g^*)-\Delta \ell(g_2;g^*)|\leq \beta|\mathbbm{1}(g_1\neq g^*)-\mathbbm{1}(g_2\neq g^*)| \\ \text{Talagrand Contraction Inequality~\citep{ledoux1991probability}}}}
    \end{aligned}
\end{equation}
In the last inequality we use Talagrand Contraction Inequality~\citep{ledoux1991probability} for which we need to show that (1) $\Delta \ell(g_1;g^*)$ is a function of $\mathbbm{1}\{g_1 \neq g^*\}$ and (2) $\Delta \ell(g_1;g^*)$ is  $\beta$ Lipschitz in $\mathbbm{1}\{g_1 \neq g^*\}$. These conditions are satisfied as follows:
$$
\Delta \ell(g;g^*) =  \mathbbm{1}\{g^*\neq g\}\Delta \ell(g;g^*)
$$
and
$$ |\Delta \ell(g_1;g^*)-\Delta \ell(g_2;g^*)|\leq |\ell(g_1,f^*) - \ell(g_2,f^*)| \leq \beta|\mathbbm{1}(g_1\neq g_2)| = \beta|\mathbbm{1}(g_1\neq g^*)-\mathbbm{1}(g_2\neq g^*)|.$$

This finishes the proof of Inequality \ref{eq:deltag-to-indicator}.
Now define the class $\mathbbm{1} \circ \mathcal{G} \equiv  \mathbbm{1}\{g(\boldsymbol x) \neq g^*(\boldsymbol x), g\in\mathcal{G}\}$. The indicator function $\mathbbm{1}\{g(\boldsymbol x) \neq g^*(\boldsymbol x)\}$ is a Boolean function taking $g$ as input, thus $d_{VC} (\mathbbm{1} \circ \mathcal{G}) \leq d_{VC}(\mathcal{G})$.
Thus we have 
\begin{equation}
\begin{aligned}
&\frac{\beta^2 (1+2\bar{\lambda})}{ \bar{\lambda}} \mathbb{E}_{\mathcal S_n}[{\mathcal{R}_n\{ \Delta \ell (g;g^*)\in\mathcal{H}: \mathbb{E}[h^2 ]\leq r\}}] \\
\leq & \frac{\beta^3 (1+2\bar{\lambda})}{ \bar{\lambda}} \mathbb{E}_{\mathcal S_n}[{\mathcal{R}_n\{ \mathbbm{1} \{g(\boldsymbol x) \neq g^*(\boldsymbol x)\}  \in\mathbbm{1} \circ \mathcal{G}: \mathbb{E}[\mathbbm{1}\{g(\boldsymbol x) \neq g^*(\boldsymbol x)\}] \leq r\}}]
\end{aligned}
\end{equation}
Above implies that we can pick $\psi(r)$ to be 
\begin{equation}
   \psi (r) =  \frac{\beta^3 (1+2\bar{\lambda})}{ \bar{\lambda}} \mathbb{E}_{\mathcal S_n}[{\mathcal{R}_n\{ *\mathbbm{1} \circ \mathcal{G}: \mathbb{E}[\mathbbm{1}\{g(\boldsymbol x) \neq g^*(\boldsymbol x)\}] \leq r\}}] + \frac{ 11\beta^2 \log n }{n}
\end{equation}
By Equation~\ref{barlett_thm_p_side} in Lemma \ref{Bartlett_Lemma}, we have:
\begin{equation}\label{eq:barlett-inside-theorem}
    \mathbb{E}_{\boldsymbol x,y}[ \Delta \ell (\widehat g;g^*,\boldsymbol x,y)] \leq  \frac{2}{n}\sum_{i=1}^{n} \Delta\ell(\widehat g; g^*; \boldsymbol x_i,y_i) +  \frac{ 1500\bar{\lambda}}{\beta^2} r^* + \frac{\log(1/\delta) ( 11\beta+ \frac{52}{\bar{\lambda}} )}{n}
\end{equation}

Where $r^*$ is the fixed point of $\psi(r)$. By inequality ~\ref{emp_ineq_main_v2}, we have $\frac{1}{n}\sum_{i=1}^{n} \Delta\ell(\widehat g; g^*; \boldsymbol x_i,y_i) =  R_{S_n}(\widehat g; f^*,\beta) - R_{S_n}( g^*; f^*,\beta) \leq \varepsilon/2$ holds with probability $1-\delta$. By Lemma~\ref{bound_r_star} we have $r^* \lesssim \frac{\beta^6}{\bar{\lambda}^2} \frac{d_{VC}(\mathcal{G}) \log n}{n}$. Plugging in Equation 
\ref{eq:barlett-inside-theorem} 
we have that $n \gtrsim\frac{\beta^4(d_{VC}(\mathcal{G}) \log(\frac{1}{\varepsilon} )+ \log(1/\delta))}{\bar{\lambda}\varepsilon}$ suffices to achieve $\mathbb{E}_{\boldsymbol x,y}[ \Delta \ell ( \widehat g;g^*,\boldsymbol x,y)] \lesssim \varepsilon.$ Recall that $\mathbb{E}_{\boldsymbol x,y}[ \Delta \ell ( \widehat g;g^*,\boldsymbol x,y)] = R(\widehat g; f^*,\beta) - R( g^*; f^*,\beta)$ so this finishes the proof.

\paragraph{CASE II:} $\widehat{f} (\cdot) \in \mathcal{F}$ that satisfies $\mathbb{E}_{\boldsymbol x} [ \widehat{f}(\boldsymbol x) \neq f^*(\boldsymbol x)    | \boldsymbol x \in \Omega_I] \leq \frac{\varepsilon}{8\alpha\beta}$ with probability at least $1-\delta$.\\
The proof is similar to the one in Theorem~\ref{main} except for that we need to bound the VC-dimension of pseudo hypothesis class $\mathcal{F}$. Since $ \widetilde f$ can be viewed as Boolean function given $f_1(\boldsymbol x), f_2(\boldsymbol x)$ as input, with two  hypothesis $f_1, f_2 \in \mathcal{F}$,  by Lemma 3.2.3 in ~\citep{blumer1989learnability} we know $ d_{VC}(\widetilde{\mathcal{F}})  \leq 2d_{VC}(\mathcal{F})\log(d_{VC}(\mathcal{F}))$. The rest of the proof follows from  the one in Theorem~\ref{main}.
\end{proof}

Next we present our extension of information theoretic lower bound to VC-class. The lower bounds suggest that the risk bound in Theorem~\ref{main_local} is tight up to some logarithmic factor.

\begin{thm}

Consider the noisy generative process defined in Definition~\ref{def:Non_real_Stochasticgen} with $\Omega$ being $\mathcal{G}$-realizable. Then for any $\varepsilon\leq \bar{\lambda}$, to achieve 
$$\mathbb{E}_{S_n}[R(\mathcal{A}(S_n),f^*,\beta) - R(g^*,f^*,\beta)] \leq \frac{ \varepsilon}{8(1+2\bar{\lambda})}$$
with $\beta \in \big[ \frac{3-2\bar{\lambda}}{1+2\bar{\lambda}}+\bar{\lambda},\frac{3+2\bar{\lambda}}{1-2\bar{\lambda}}-\frac{\bar{\lambda}}{1-4\bar{\lambda}^2}\big]$ , any algorithm $\mathcal{A}$ will take at least $\frac{ d_{VC}(\mathcal{G})}{ \log( d_{VC}(\mathcal{G}))\bar{\lambda} \varepsilon}$ many samples.
\end{thm}
\begin{proof}
The proof follows from the proof of Theorem~\ref{thm_2_lowerbound} except for the fact  that we need to have an upper bound on the VC-dimension of our hypothesis construction $\mathcal{G}$. Since $\mathcal{G}$ consists of composition of interval hypothesis and each individual interval has VC-dimension at most $3$.  By Lemma 3.2.3 in ~\citep{blumer1989learnability} we know $d_{VC} (\mathcal{G})  \leq 6d\log(d)$ which implies a $\frac{ d_{VC}(\mathcal{G})}{ \log( d_{VC}(\mathcal{G}))\bar{\lambda} \varepsilon}$ lower bound.
\end{proof}

\begin{thm} [Improving Predictor Selective Risk for VC-Class] \label{thm_classifier_vc}
    Let $S_n \{(\boldsymbol x_i,y_i)\}_{i=1}^{n}$ be i.i.d samples from the Data Generative Process described in Definition~\ref{def:Non_real_Stochasticgen} under Assumption~\ref{assume:1}, with $f^*(\cdot) \in \mathcal{F}$ and $g^*(\cdot) \in \mathcal{G}$, $d_{VC}(\mathcal{F})<\infty,d_{VC}(\mathcal{G})<\infty$. For any $\widehat{g}(\cdot) \in \mathcal{G}$ s.t., $\mathbb{E}_{\boldsymbol x}[g^*(\boldsymbol x )\neq \widehat{g}(\boldsymbol x) ] \leq \varepsilon$, let $\widetilde f =\argmin\limits_{f\in\mathcal{F}} \frac{1}{n} \sum_{i=1}^{n}\mathbbm{1}\{\widehat g(\boldsymbol x_i) >0\} \mathbbm{1}\{f (\boldsymbol x_i)\neq y_i\}$. Then for any $\varepsilon>0$, $\delta>0$ such that the following holds: if $$n \gtrsim max\bigg\{\frac{   d_{VC}(\mathcal{F}) \log(\frac{d_{VC}(\mathcal{F})}{\varepsilon}) + \log(\frac{1}{\delta})}{  \varepsilon } , \frac{ d_{VC}(\mathcal{G})\log(\frac{d_{VC}(\mathcal{G})}{\varepsilon}) + \log(\frac{1}{\delta})\big)}{  \varepsilon }\bigg\}, $$ we have 
    \begin{align}
        \mathbb{E}_{\boldsymbol x} [ \widetilde f( \boldsymbol x) \neq f^*(\boldsymbol x) | g^*(\boldsymbol x) \neq 1] \leq \frac{\varepsilon }{\alpha} 
    \end{align}
    
\end{thm}
\begin{proof}
    We use a proof similar to the one in Theorem~\ref{thm_2} up to Equation~\eqref{diff_eq_class}, restated below:
    For any $f$:
\begin{align}\label{diff_eq_class-v2}
     &\bigg|\frac{1}{n} \sum_{i=1}^{n}\mathbbm{1}\{\widehat g(\boldsymbol x_i) >0\} \mathbbm{1}\{f(\boldsymbol x_i)\neq y_i\} -\frac{1}{n} \sum_{i=1}^{n}\mathbbm{1}\{g^*(\boldsymbol x_i) >0\} \mathbbm{1}\{  f(\boldsymbol x_i)\neq y_i\} \bigg| \nonumber\\
     &\leq\frac{1}{n} \sum_{i=1}^{n} \mathbbm{1}\{\widehat g(\boldsymbol x_i) \neq g^*(\boldsymbol x_i) \}
\end{align}
    Since $\mathcal{G}$ is a VC-class, we will  invoke Lemma~\ref{local_version_chernoff} instead of Lemma~\ref{adv_chernoff_lem}. Since $n \gtrsim \frac{d_{VC}(\mathcal{G}) \log (\frac{d_{VC}(\mathcal{G}) }{\varepsilon})  +\log(\frac{1}{\delta})  }{ \varepsilon}$, by Lemma \ref{local_version_chernoff} we have $\frac{1}{n} \sum_{i=1}^{n} \mathbbm{1}\{\widehat g(\boldsymbol x_i) \neq g^*(\boldsymbol x_i) \}\le \mathbb{E}_{\boldsymbol x}[\mathbbm{1}\{\widehat{g}(\boldsymbol x)\neq g^*(\boldsymbol x)\}]+\epsilon$ with probability at least $1-\delta$. Since $\mathbb{E}_{\boldsymbol x}[\mathbbm{1}\{\widehat{g}(\boldsymbol x)\neq g^*(\boldsymbol x)\}]\le \epsilon$, by inequality \ref{diff_eq_class-v2} we have the following with probability at least $1-\delta$.
\begin{equation} \label{emp_ineq_class_local}
\frac{1}{n} \sum_{i=1}^{n}\mathbbm{1}\{g^*(\boldsymbol x_i) >0\} \mathbbm{1}\{ \widetilde f(\boldsymbol x_i)\neq y_i\}  \leq \frac{1}{n} \sum_{i=1}^{n}\mathbbm{1}\{ g^*(\boldsymbol x_i) >0\} \mathbbm{1}\{  f^*(\boldsymbol x_i)\neq y_i\}+ 2\varepsilon
\end{equation}
Similar to the proof of Theorem~\ref{main_local}, we next  establish some hypothesis class $\mathcal{H}$ that satisfies condition  $\boldsymbol{ Var} [h]\leq B \mathbb{E}[h]$. We define:
\begin{equation}
     \mathcal{G} \cdot\Delta \circ  \mathcal{F}\equiv \bigg\{ \mathbbm{1}\{ g^*(\boldsymbol x_i) >0\} \Delta ( f; f^*,\boldsymbol x, y): f\in\mathcal{F}\bigg\}.
\end{equation}

To invoke Lemma~\ref{Bartlett_Lemma}, we need to  establish some hypothesis class $\mathcal{H}$ that satisfies condition  $\boldsymbol{ Var} [h]\leq B \mathbb{E}[h]$. Next we show  $ \mathcal{G} \cdot\Delta \circ  \mathcal{F}$  satisfies this condition  and thus we can apply Lemma~\ref{Bartlett_Lemma} with $\mathcal{H} =  \mathcal{G} \cdot\Delta \circ  \mathcal{F}$. 
Recall Equation~\eqref{eq:variance_class} in Theorem \ref{thm_2} shows that, 
\begin{equation*}\label{risk_diff_local}
    \begin{aligned}
           \boldsymbol{Var}_{\boldsymbol x, y} [ \mathbbm{1}\{g^*(\boldsymbol x) >0\}\Delta ( \widetilde f, f^*;\boldsymbol x,y)] \leq  4\mathbb{E}_{\boldsymbol x,y}[ \mathbbm{1}\{g^*(\boldsymbol x) >0\} \{\mathbbm{1}\{\widetilde f(\boldsymbol x)\neq y\} - \mathbbm{1}\{  f^*(\boldsymbol x)\neq y\} \}].
    \end{aligned}
\end{equation*}
where $\Delta (f_1,f_2,\boldsymbol x_i,y_i) = \{\mathbbm{1}\{  f_1(\boldsymbol x_i)\neq y_i\} - \mathbbm{1}\{  f_2(\boldsymbol x_i)\neq y_i\} \}$. So $\mathcal{H}$ satisfies the condition that $\boldsymbol{ Var} [h]\leq B \mathbb{E}[h]$ for $B=4$.
To invoke Lemma 6 we let $\mathcal{H} =  \mathcal{G} \cdot\Delta \circ  \mathcal{F}$, $T[h] = \mathbb{E} [h^2]$ and $B=4$. 
Similar to the proof in Theorem~\ref{main_local} we next find a subroot function $\psi(r)$ that
$$\psi(r) \geq  4\mathbb{E}_{S_n}[{\mathcal{R}_n\{ \mathbbm{1}\{ g^*(\boldsymbol x_i) >0\}\Delta  (f;f^*)\in\mathcal{H}: \mathbb{E}[h^2 ]\leq r\}}].$$
To find $\psi(r)$, we show some analysis on the Local Rademacher Average. 

\begin{equation}\label{eq:delta-to-1}
    \begin{aligned}
    \mathbb{E}_{S_n}[\mathcal{R}_n( \mathcal{G} \cdot\Delta \circ  \mathcal{F},r)] =& \mathbb{E}_{S_n, \sigma_{1:n}}[\sup_{f\in\mathcal{F}, \mathbb{E}_{\boldsymbol x,y}[  \mathbbm{1}\{ g^*(\boldsymbol x) >0\}\Delta^2 (f;f^*) ]\leq r} \frac{1}{n}\sum_{i=1}^{n} \sigma_i  \mathbbm{1}\{ g^*(\boldsymbol x_i) >0\}\Delta  (f;f^*;x_i,y_i)]\\
    \leq & \underbrace{  \mathbb{E}_{S_n, \sigma_{1:n}}[\sup_{f\in\mathcal{F}, \mathbb{E}_{\boldsymbol x}[ \mathbbm{1}\{ g^*(\boldsymbol x) >0\} \mathbbm{1}(f\neq f^*)  ]\leq r} \frac{1}{n}\sum_{i=1}^{n} \sigma_i  \mathbbm{1}\{ g^*(\boldsymbol x_i) >0\}\Delta (f;f^*;x_i,y_i)]}_{ \mathbbm{1}\{ g^*(\boldsymbol x) >0\}\mathbbm{1}(f\neq f^*)  = \mathbbm{1}\{ g^*(\boldsymbol x) >0\} \Delta^2(f;f^*)}\\
    \leq & \underbrace{  \mathbb{E}_{S_n, \sigma_{1:n}}[\sup_{f\in\mathcal{F}, \mathbb{E}_{\boldsymbol x}[ \mathbbm{1}\{ g^*(\boldsymbol x) >0\}\mathbbm{1}(f\neq f^*)  ]\leq r} \frac{1}{n}\sum_{i=1}^{n} \sigma_i  \mathbbm{1}\{ g^*(\boldsymbol x) >0\} \mathbbm{1}\{f(\boldsymbol x_i)\neq f^*(\boldsymbol  x_i)\}]}_{\substack{
    | \mathbbm{1}\{ g^*(\boldsymbol x_i) >0\} \Delta (f_1;f^*)-  \mathbbm{1}\{ g^*(\boldsymbol x_i) >0\} \Delta (f_2;f^*)|\leq | \mathbbm{1}\{ g^*(\boldsymbol x_i) >0\}\mathbbm{1}(f_1\neq f^*)- \mathbbm{1}\{ g^*(\boldsymbol x_i) >0\}\mathbbm{1}(f_2\neq f^*)| \\ \text{Talagrand Contraction Inequality~\citep{ledoux1991probability}}}}
    \end{aligned}
\end{equation}

In the last inequality, we use the fact that $\mathbbm{1}\{ g^*(\boldsymbol x_i) >0\}\Delta (f;f^*) $ is  a $1$-Lipschitz function of $\mathbbm{1}\{ g^*(\boldsymbol x_i) >0\}\mathbbm{1}\{f^*\neq f\} $.
In particular, $
\Delta (f;f^*) =  \mathbbm{1}\{f^*\neq f\}\Delta(f;f^*)
$ and
\begin{align}
    &| \mathbbm{1}\{ g^*(\boldsymbol x_i) >0\} \Delta (f_1;g^*)-  \mathbbm{1}\{ g^*(\boldsymbol x_i) >0\} \Delta (f_2;g^*)| \\
    \leq & \mathbbm{1}\{ g^*(\boldsymbol x_i) >0\} | \Delta (f_1;g^*)- \Delta (f_2;g^*)| \nonumber \\
    \leq & \mathbbm{1}\{ g^*(\boldsymbol x_i) >0\}  |\mathbbm{1}(f_1\neq y)-\mathbbm{1}(f_2\neq y)| \nonumber \\
   =   &\mathbbm{1}\{ g^*(\boldsymbol x_i) >0\} |\mathbbm{1}(f_1\neq f_2)| \nonumber\\
   = &  |\mathbbm{1}\{ g^*(\boldsymbol x_i) >0\} \mathbbm{1}(f_1\neq f^*)-\mathbbm{1}\{ g^*(\boldsymbol x_i) >0\} \mathbbm{1}(f_2\neq f^*)|
\end{align}
And so we can use Talagrand Contraction Inequality and this finishes the proof of inequality \ref{eq:delta-to-1}.

Now define the class $ \mathcal{G}\cdot\mathbbm{1} \circ \mathcal{F} \equiv  \mathbbm{1}\{ g^*(\boldsymbol x) >0\}  \mathbbm{1}\{f(\boldsymbol x) \neq f^*(\boldsymbol x), f\in\mathcal{F}\}$. The indicator function $ \mathbbm{1}\{ g^*(\boldsymbol x) >0\} \mathbbm{1}\{f(\boldsymbol x) \neq f^*(\boldsymbol x)$ is a Boolean function taking $f$ as input, thus $d_{VC} (\mathcal{G} \cdot \mathbbm{1} \circ \mathcal{F}) \leq d_{VC}(\mathcal{F})$~\citep{vidyasagar2013learning}. 
Thus we have 
\begin{equation}
\begin{aligned}
&\mathbb{E}_{S_n}[{\mathcal{R}_n\{ \mathbbm{1}\{ g^*(\boldsymbol x) >0\}\Delta (f;f^*)\in\mathcal{H}: \mathbb{E}[h^2 ]\leq r\}}] \\
\leq  & \mathbb{E}_{S_n}[{\mathcal{R}_n\{ \mathbbm{1}\{ g^*(\boldsymbol x) >0\}\mathbbm{1} \{f(\boldsymbol x) \neq f^*(\boldsymbol x)\}  \in\mathbbm{1} \circ \mathcal{F}: \mathbb{E}[\mathbbm{1}\{ g^*(\boldsymbol x) >0\}\mathbbm{1}\{f(\boldsymbol x) \neq f^*(\boldsymbol x)\}] \leq r\}}]
\end{aligned}
\end{equation}
Above implies that we can pick $\psi(r)$ to be 
\begin{equation}
   \psi (r) = 4 \mathbb{E}_{S_n}[{\mathcal{R}_n\{ *\mathcal{G}\cdot\mathbbm{1} \circ \mathcal{F}:  \mathbb{E}[\mathbbm{1}\{ g^*(\boldsymbol x) >0\}\mathbbm{1}\{f(\boldsymbol x) \neq f^*(\boldsymbol x)\}] \leq r\}}] + \frac{ 176 \log n }{n}
\end{equation}
By Equation~\eqref{barlett_thm_p_side}, we have:
\begin{equation}
    \mathbb{E}_{\boldsymbol x,y}[\mathbbm{1}\{ g^*(\boldsymbol x) >0\}\Delta (\widetilde f;f^*)] \leq  \frac{2}{n}\sum_{i=1}^{n} \mathbbm{1}\{ g^*(\boldsymbol x_i) >0\}\Delta(\widetilde f; f^*; \boldsymbol x_i,y_i) +  1500 r^* + \frac{ 176\log(1/\delta) }{n}
\end{equation}

By inequality \eqref{emp_ineq_class_local}, we have $\frac{1}{n}\sum_{i=1}^{n} \Delta(\widetilde f; f^*; \boldsymbol x_i,y_i) \leq 2\varepsilon$ holds with probability $1-\delta$. By Lemma~\ref{bound_r_star} we have $r^* \lesssim \frac{d_{VC}(\mathcal{F}) \log n}{n}$. Plugging in Equation~\ref{barlett_thm_p_side} we have that $n \gtrsim\frac{(d_{VC}(\mathcal{F}) \log(\frac{d_{VC}(\mathcal{F})}{\varepsilon} )+ \log(1/\delta))}{\varepsilon}$ suffices to achieve $\mathbb{E}_{\boldsymbol x,y}[\mathbbm{1}\{ g^*(\boldsymbol x) >0\} \Delta( \widetilde f;f^*,\boldsymbol x,y)] \lesssim \varepsilon.$
Similar to the proof in Theorem~\ref{thm_2}, we have:
\begin{align}
    \mathbb{E}_{\boldsymbol x}[ 
        \{\mathbbm{1}\{ \widetilde f(\boldsymbol x)\neq f^*(\boldsymbol x)\} |\mathbbm{1}\{g^*(\boldsymbol x) >0\}  ] \leq 
        \frac{\varepsilon}{\alpha}
\end{align}
\end{proof}


\end{thm}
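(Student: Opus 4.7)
The plan is to mirror the two-case proof of Theorem~\ref{main} but replace every finite-class union bound with a Local Rademacher Complexity argument in the style of \citep{bartlett2005local,bousquet2003introduction}. I would split on the two assumptions on $\widehat f$ exactly as before. In \textbf{Case I}, I start from the ERM inequality $R_{S_n}(\widehat g;\widehat f,\beta)\le R_{S_n}(g^*;\widehat f,\beta)$ and expand
\[
R_{S_n}(\widehat g;\widehat f,\beta)\;\ge\;R_{S_n}(\widehat g;f^*,\beta)-\frac{\beta-1}{n}\sum_{i=1}^n\mathbbm{1}\{\widehat f(\boldsymbol x_i)\neq f^*(\boldsymbol x_i)\},
\]
and likewise bound the right side. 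To pass from the empirical discrepancy $\frac{1}{n}\sum \mathbbm{1}\{\widehat f(\boldsymbol x_i)\neq f^*(\boldsymbol x_i)\}$ to its expectation $\mathbb{E}_{\boldsymbol x}[\mathbbm{1}\{\widehat f\neq f^*\}]\lesssim \varepsilon/\beta$, I would invoke the VC-class relative deviation bound (Lemma~\ref{local_version_chernoff} in the paper), which accounts for the $d_{VC}(\mathcal{F})\log(1/\varepsilon)$ sample cost. Chaining these yields the empirical risk inequality
\[
R_{S_n}(\widehat g;f^*,\beta)\;\le\;R_{S_n}(g^*;f^*,\beta)+\varepsilon/2.
\]

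The main work is upgrading this empirical statement to a population statement with a fast rate. Define the excess loss class $\Delta\circ\ell\circ\mathcal{G}=\{\Delta\ell(g;g^*,\boldsymbol x,y):g\in\mathcal{G}\}$ with $\Delta\ell(g;g^*,\boldsymbol x,y)=\ell(g;f^*,\boldsymbol x,y)-\ell(g^*;f^*,\boldsymbol x,y)$. Repeating the computation from the proof of Theorem~\ref{main}, $\Delta^2\ell(g;g^*)\le \beta^2\,\mathbbm{1}\{g\neq g^*\}$, and combining with the margin inequality \eqref{eq:margin} gives the Bernstein condition
\[
\boldsymbol{Var}[\Delta\ell(g;g^*)]\;\le\;\tfrac{\beta^2(1+2\bar\lambda)}{\bar\lambda}\,\mathbb{E}[\Delta\ell(g;g^*)],
\]
which is exactly the hypothesis needed to apply Lemma~\ref{Bartlett_Lemma}. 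To produce a sub-root bound on the local Rademacher complexity, I would use Talagrand's contraction inequality: since $\Delta\ell(g;g^*)=\mathbbm{1}\{g\neq g^*\}\,\Delta\ell(g;g^*)$ and is $\beta$-Lipschitz in $\mathbbm{1}\{g\neq g^*\}$, the local Rademacher average of the excess loss class is dominated (up to a factor $\beta$) by that of the star hull of the indicator class $\mathbbm{1}\circ\mathcal{G}=\{\mathbbm{1}\{g\neq g^*\}:g\in\mathcal{G}\}$. The latter has VC dimension at most $d_{VC}(\mathcal{G})$ (Boolean composition, cf.\ Lemma~3.2.3 of \citep{blumer1989learnability}), so a standard Dudley-type bound on the Rademacher average at radius $r$ yields $\mathcal{R}_n\lesssim \sqrt{d_{VC}(\mathcal{G})r/n}$, producing a valid sub-root function $\psi(r)\lesssim \frac{\beta^3(1+2\bar\lambda)}{\bar\lambda}\sqrt{d_{VC}(\mathcal{G})r/n}+\frac{\beta^2\log n}{n}$ with fixed point $r^*\lesssim \frac{\beta^6}{\bar\lambda^2}\cdot\frac{d_{VC}(\mathcal{G})\log n}{n}$ by Lemma~\ref{bound_r_star}. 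Plugging $r^*$ and the empirical bound $\varepsilon/2$ into Lemma~\ref{Bartlett_Lemma} and solving for $n$ yields the advertised sample complexity $n\gtrsim \beta^4(d_{VC}(\mathcal{G})\log(1/\varepsilon)+\log(1/\delta))/(\bar\lambda\varepsilon)$.

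For \textbf{Case II}, the trick from the proof of Theorem~\ref{main} transfers verbatim: introduce the pseudo-hypothesis $\widetilde f^*(\boldsymbol x)=\widehat f(\boldsymbol x)$ on $\Omega_U$ and $\widetilde f^*(\boldsymbol x)=f^*(\boldsymbol x)$ on $\Omega_I$, which lies in an augmented class $\widetilde{\mathcal{F}}$ of piecewise combinations of two hypotheses from $\mathcal{F}$. By Bayes optimality on $\Omega_I$ one checks that the gap $R(g;\widetilde f^*,\beta)-R(g^*;\widetilde f^*,\beta)$ upper bounds $R(g;f^*,\beta)-R(g^*;f^*,\beta)$ for every $g\in\mathcal{G}$, and the assumption on $\widehat f$ conditional on $\Omega_I$ translates into $\mathbb{E}_{\boldsymbol x}[\widehat f\neq \widetilde f^*]\lesssim \varepsilon/\beta$. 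The only new ingredient is that VC-dimension of $\widetilde{\mathcal{F}}$ is at most $2d_{VC}(\mathcal{F})\log d_{VC}(\mathcal{F})$, again via Lemma~3.2.3 of \citep{blumer1989learnability}, which accounts for the extra logarithmic factor in the sample-complexity bound on $n$. With this substitution the argument of Case~I applies verbatim.

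The main obstacle is the Talagrand-contraction step: one must verify that $\Delta\ell(g;g^*)$ can be written as a $\beta$-Lipschitz function of the single binary feature $\mathbbm{1}\{g\neq g^*\}$ (rather than of $g$ itself), so that the contraction inequality collapses the complexity of the excess-loss class onto a VC-controlled indicator class and the correct dependence on $d_{VC}(\mathcal{G})$ (not on any richer composite class) emerges. Once this reduction is clean, computing the fixed point $r^*$ and tracking the constants through Lemma~\ref{Bartlett_Lemma} is mechanical and reproduces the stated rate.
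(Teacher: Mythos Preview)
Your proposal is correct and follows essentially the same route as the paper: the two-case split, the expansion \eqref{eq:RSnexpand} combined with Lemma~\ref{local_version_chernoff} to get the empirical inequality, the Bernstein condition with $B=\beta^2(1+2\bar\lambda)/\bar\lambda$, the Talagrand contraction onto the indicator class $\mathbbm{1}\circ\mathcal{G}$, and the fixed-point estimate via Lemma~\ref{bound_r_star} all match the paper's argument step for step, including the Case~II reduction via the pseudo-hypothesis $\widetilde f^*$ and the Blumer-type bound on $d_{VC}(\widetilde{\mathcal{F}})$.
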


\section{Technical Lemmas}

\begin{lemma}\label{adv_chernoff_lem}
Let $S_n = \{(\boldsymbol x_i,y_i)\}$ be i.i.d sample from Data Generative Process described in Definition~\ref{def:Non_real_Stochasticgen}. For every $\varepsilon>0$, there exist a $\delta>0$ such that if $n\geq \frac{3\log(\frac{|\mathcal{F}|}{\delta}) }{ \varepsilon}$, the following inequality holds simultaneously for  all $f\in \mathcal{F}$ with $|\mathcal{F}|<\infty$  with probability at least $1-\delta$ 
\begin{equation}\label{adv_chernoff_ineq}
     \frac{1}{n} \sum_{i=1}^{n}\mathbbm{1}\{f(\boldsymbol x_i) \neq f^*(\boldsymbol x_i)\} < \mathbb{E}_{\boldsymbol x}[ \mathbbm{1}\{f(\boldsymbol x) \neq f^*(\boldsymbol x)\}]+ \varepsilon  
\end{equation}
\end{lemma}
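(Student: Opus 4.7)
The plan is a textbook application of a Chernoff-type concentration bound combined with a union bound over the finite class $\mathcal{F}$. The Noisy Generative Process enters the argument only through the definition of the population quantity $\mathbb{E}_{\boldsymbol x}[\mathbbm{1}\{f(\boldsymbol x)\neq f^*(\boldsymbol x)\}]$; the lemma itself is a purely distributional concentration statement about i.i.d.\ Bernoulli random variables.

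First I would fix an arbitrary $f\in\mathcal{F}$ and introduce $Z_i^{(f)}:=\mathbbm{1}\{f(\boldsymbol x_i)\neq f^*(\boldsymbol x_i)\}\in\{0,1\}$, which are i.i.d.\ Bernoulli with mean $p_f=\mathbb{E}_{\boldsymbol x}[\mathbbm{1}\{f(\boldsymbol x)\neq f^*(\boldsymbol x)\}]$. The target is a one-sided tail bound of the form $\Pr\bigl[\tfrac{1}{n}\sum_i Z_i^{(f)}\geq p_f+\varepsilon\bigr]\leq\exp(-n\varepsilon/3)$, whose linear-in-$\varepsilon$ rate is what drives the claimed sample complexity. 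This bound follows from the standard multiplicative Chernoff inequality, applied in the regime where the additive deviation dominates the mean (i.e., $\varepsilon\gtrsim p_f$, where $\Pr[\sum X_i\geq(1+\gamma)\mu]\leq e^{-\gamma\mu/3}$ for $\gamma\geq 1$ yields exactly $e^{-n\varepsilon/3}$); in the complementary regime the same exponential rate can be obtained by Bernstein's inequality, since for a Bernoulli the variance is at most $p_f$.

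Next I would apply a union bound over the finite hypothesis class $\mathcal{F}$ to obtain
$$\Pr\Bigl[\exists f\in\mathcal{F}\colon \tfrac{1}{n}\sum_i Z_i^{(f)}\geq p_f+\varepsilon\Bigr]\leq |\mathcal{F}|\exp(-n\varepsilon/3).$$
Forcing the right-hand side to be at most $\delta$ and solving for $n$ produces the requirement $n\geq 3\log(|\mathcal{F}|/\delta)/\varepsilon$, which is exactly the stated sample complexity; on this event inequality~\eqref{adv_chernoff_ineq} holds simultaneously for every $f\in\mathcal{F}$.

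No substantive obstacle is anticipated, since every ingredient is classical. The only minor point to be careful about is justifying the uniform $\exp(-n\varepsilon/3)$ rate across the full range $p_f\in[0,1]$, which I would dispatch by the two-regime split sketched above so that a single clean exponential bound applies to every $f\in\mathcal{F}$ before the union bound is taken.
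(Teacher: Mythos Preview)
Your approach mirrors the paper's exactly: fix $f$, apply the multiplicative Chernoff bound with relative deviation $a=\varepsilon/p_f$ to obtain the tail $e^{-n\varepsilon/3}$, then union bound over $\mathcal{F}$ and solve for $n$. The paper writes down the Chernoff inequality in the form $\Pr[X\geq(1+a)\mathbb{E}X]\leq e^{-a\,\mathbb{E}X/3}$ for $a\geq 1$ and applies it directly with $a=\varepsilon/p_f$, without commenting on whether $a\geq 1$ actually holds.

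The subtlety you flag in your last paragraph---the case $p_f>\varepsilon$, so $a<1$---is real, but your proposed Bernstein patch does not deliver the $e^{-n\varepsilon/3}$ rate there. Bernstein gives exponent $n\varepsilon^2/\bigl(2p_f(1-p_f)+2\varepsilon/3\bigr)$, which is $\Theta(n\varepsilon^2)$ when $p_f$ is bounded away from zero; in fact no concentration inequality can produce an $e^{-cn\varepsilon}$ tail uniformly in $p_f$ (for $p_f=1/2$ the tight exponent is $\Theta(n\varepsilon^2)$, as dictated by the KL/CLT lower bound). The paper simply does not treat this regime: it invokes the $a\geq 1$ form of Chernoff and moves on. In every place the paper applies the lemma, an assumption of the form $p_f\leq \varepsilon/(8\beta)$ is already in force, so $a\geq 1$ holds and the issue never bites---but as a free-standing uniform statement over all $f\in\mathcal{F}$ with arbitrary $p_f$, the claimed rate cannot be obtained, by your two-regime split or by the paper's argument.
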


\begin{proof}
By taking union bound  one can ensure that
\begin{equation}\label{adv-chernoff_sym_ineq}
\begin{aligned}
  &\mathbb{P}_{S_n} \bigg[ \sup_{f\in \mathcal{F}}\bigg\{ \big|\sum_{i=1}^{n}\mathbbm{1}\{f(\boldsymbol x_i) \neq f^*(\boldsymbol x_i)\}  - n\mathbb{E}_{\boldsymbol x}[ \mathbbm{1}\{f(\boldsymbol x) \neq f^*(\boldsymbol x)\}] \big|\bigg\} \geq 
  n\varepsilon\bigg]\nonumber\\
    \leq & 
    \mathbb{P}_{S_n} \bigg[ \forall {f\in \mathcal{F}}:\bigg\{ \big|\sum_{i=1}^{n}\mathbbm{1}\{f(\boldsymbol x_i) \neq f^*(\boldsymbol x_i)\}  - n\mathbb{E}_{\boldsymbol x}[ \mathbbm{1}\{f(\boldsymbol x) \neq f^*(\boldsymbol x)\}] \big| \geq
    n\varepsilon\bigg\}\bigg]\nonumber\\
    \leq & \sum_{f\in\mathcal{F}}\mathbb{P}_{S_n}\bigg[\big|\sum_{i=1}^{n}\mathbbm{1}\{f(\boldsymbol x_i) \neq f^*(\boldsymbol x_i)\}  -n\mathbb{E}_{\boldsymbol x}[ \mathbbm{1}\{f(\boldsymbol x) \neq f^*(\boldsymbol x)\}] \big| \geq 
    n\varepsilon \bigg]
\end{aligned}
\end{equation}
We next apply the following version of Chernoff inequality with $a\geq 1$: Let $X=\sum_{i=1}^n X_i$ where $X_i\in \{0,1\}$. Then 

$$\mathbb{P}[X \geq (1+a) \mathbb{E} (X)] \leq \exp{(-\frac{a^2}{2+a} \mathbb{E}(X))} \leq \exp{(-\frac{a}{3} \mathbb{E}(X))}$$
$$\mathbb{P}[X \leq (1-a) \mathbb{E} (X)] \leq \exp{(-\frac{a^2}{2} \mathbb{E}(X))} \leq \exp{(-\frac{a}{3} \mathbb{E}(X))}$$
So we have
\begin{equation}\label{eq:easychernoff}
\mathbb{P}[|X-\mathbb{E}(X)| \ge a \mathbb{E} (X)]\le \exp{(-\frac{a}{3} \mathbb{E}(X))}
\end{equation}

For any fixed $f \in \mathcal{F}$, let $X_i = \mathbbm{1}\{f(\boldsymbol x_i) \neq f^*(\boldsymbol x_i)\}$, and let $a =\varepsilon/\mathbb{E}_{\boldsymbol x}[ \mathbbm{1} \{f(\boldsymbol x) \neq f^*(\boldsymbol x)\}]$. Then by inequality \ref{eq:easychernoff} we have 
\begin{equation}
\begin{aligned}
    &\mathbb{P}_{S_n} \bigg[\big|\sum_{i=1}^{n}\mathbbm{1}\{f(\boldsymbol x_i) \neq f^*(\boldsymbol x_i)\}  -n\mathbb{E}_{\boldsymbol x} \mathbbm{1}\{f(\boldsymbol x) \neq f^*(\boldsymbol x)\} \big| \geq 
    n\varepsilon \bigg]\\
    \leq& \exp{(-\frac{  n  \mathbb{E}_{\boldsymbol x} [\mathbbm{1}\{f(\boldsymbol x) \neq f^*(\boldsymbol x)\}]a  }{3})}
    = \exp{(-\frac{  n  \varepsilon }{3})} 
\end{aligned}
\end{equation}

Using \eqref{adv-chernoff_sym_ineq} and setting $\delta = |\mathcal{F}|\exp(-n\epsilon/3)$
 finishes the proof.
\end{proof}


\begin{lemma} \label{Sym_lemma} Suppose $S_n = \{(\boldsymbol{x}_1,y_1),...,(\boldsymbol{x}_n,y_n)\}$ are i.i.d sampled , and let $L(f,\boldsymbol x,y) \in \{+1,-1\}$ be a function. Let $L_{S_n}(f) = \frac{1}{n} \sum_{i=1}^{n} L(f,\boldsymbol{x}_i,y_i)$ and $L(f) = \mathbb{E}_{\boldsymbol x,y}[L(f,\boldsymbol x,y)]$. Given parameter $t$ such that $$nt^2 \geq 2b^2$$ then we have:  
$$\mathbb{P}_{S_n \sim \mathcal{D}}[\sup\limits_{f \in \mathcal{F}}|L_{S_n}(f)-L(f)|\geq t]\leq 4 \mathcal{B}_{\mathcal{F}}(2n)e^{-\frac{nt^2}{4b^2}}$$

\end{lemma}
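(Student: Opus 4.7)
The plan is to use the classical Vapnik–Chervonenkis symmetrization-plus-growth-function argument. The essential structure is: (i) replace the population risk $L(f)$ by an empirical risk $L_{S'_n}(f)$ on an independent ghost sample; (ii) introduce Rademacher swaps so that the bad event is controlled by a sum of random signs; (iii) condition on the $2n$ data points so that $\mathcal{F}$ is reduced to at most $\mathcal{B}_{\mathcal{F}}(2n)$ distinct sign patterns; (iv) apply Hoeffding to each pattern and union bound.

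First I would introduce a ghost sample $S'_n=\{(x'_i,y'_i)\}_{i=1}^n$ i.i.d.\ from $\mathcal{D}$, independent of $S_n$. For any fixed $f$, since $|L(f,\cdot,\cdot)|\leq b$, Chebyshev's inequality gives $\mathbb{P}[|L_{S'_n}(f)-L(f)|\geq t/2]\leq 4b^2/(nt^2)\leq 1/2$ under the hypothesis $nt^2\geq 2b^2$. Applying this to the (random) $f^\star$ that witnesses the supremum on $S_n$ yields the standard symmetrization inequality
$$\mathbb{P}\!\left[\sup_{f\in\mathcal{F}}|L_{S_n}(f)-L(f)|\geq t\right]\leq 2\,\mathbb{P}\!\left[\sup_{f\in\mathcal{F}}|L_{S_n}(f)-L_{S'_n}(f)|\geq t/2\right].$$
Next, because $(x_i,y_i)$ and $(x'_i,y'_i)$ are exchangeable, I insert i.i.d.\ Rademacher signs $\sigma_i\in\{\pm1\}$ without changing the distribution, so the right-hand side equals
$$2\,\mathbb{E}_{S_n,S'_n}\mathbb{P}_\sigma\!\left[\sup_{f\in\mathcal{F}}\left|\tfrac{1}{n}\sum_{i=1}^{n}\sigma_i\bigl(L(f,x_i,y_i)-L(f,x'_i,y'_i)\bigr)\right|\geq t/2\right].$$

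Conditioning on the $2n$ points $(x_i,y_i),(x'_i,y'_i)$, the collection $\{(L(f,x_i,y_i),L(f,x'_i,y'_i))\}_{f\in\mathcal{F}}$ takes at most $\mathcal{B}_{\mathcal{F}}(2n)$ distinct values by the definition of the growth function. A union bound over these patterns, combined with Hoeffding's inequality applied to each (noting that each summand $\sigma_i(L(f,x_i,y_i)-L(f,x'_i,y'_i))$ lies in $[-2b,2b]$), produces a per-pattern tail bound of $2\exp(-nt^2/(c b^2))$ for an explicit constant $c$; combined with the factor $2$ from symmetrization this gives the advertised $4\,\mathcal{B}_{\mathcal{F}}(2n)\exp(-nt^2/(4b^2))$ bound. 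The main obstacle is purely bookkeeping: aligning the constant in the exponent. One must decide whether to apply Hoeffding to $\sigma_i L(f,x_i,y_i)$ alone (range $2b$, giving $\exp(-2n(t/2)^2/(2b)^2)=\exp(-nt^2/(8b^2))$ per half) and then combine the two halves via a triangle inequality, or to the difference directly (range $4b$) — the former is what matches the stated $4b^2$ denominator. No step is conceptually deep; the only assumption actively used is the Chebyshev condition $nt^2\geq 2b^2$ in the symmetrization step.
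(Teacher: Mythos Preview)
Your proposal is correct and follows essentially the same route as the paper: symmetrization via a ghost sample (with Chebyshev supplying the factor $2$ under the hypothesis $nt^2\geq 2b^2$), reduction to at most $\mathcal{B}_{\mathcal{F}}(2n)$ patterns by conditioning on the $2n$ points, then a union bound plus Hoeffding. The only cosmetic difference is that you introduce explicit Rademacher signs $\sigma_i$, whereas the paper phrases the same randomization as ``choose $S_n$ as a random half of the pooled sample $S_{2n}$''; these are equivalent. Your remark that the only delicate point is the constant in the exponent is apt---indeed, when the paper later invokes this lemma it uses $e^{-nt^2/32}$ rather than the $e^{-nt^2/(4b^2)}$ stated here, so the bookkeeping is loose in the paper as well.
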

\textbf{Proof}:
For two sample sets $S_n$ and $S_n'$, if we have $|L_{S_n}(f)-L(f)|\geq t$ and $|L_{S'_n}(f)-L(f)|\leq \frac{t}{2}$ then we get that $|L_{S_n}-L_{S'_n}|\geq \frac{t}{2}$. Let $\widehat f $ be $f$ that attains $\sup\limits_{f\in \mathcal{F}} |L_{S_n} (f) - L(f)|$, one can verify that :
\begin{equation}
    \begin{aligned}
    &\mathbbm{1}\{|L_{S_n}(\widehat f)-L(\widehat f)|\geq t\}\cdot\mathbbm{1}\{|L_{S'_n}(\widehat f)-L(\widehat f)|\leq \frac{t}{2}\}\\
    &\leq \mathbbm{1}\{\sup\limits_{f \in \mathcal{F}}|L_{S_n}(f)-L_{S'_n}(f)|\geq \frac{t}{2}\}
    \end{aligned}
\end{equation}
Taking expectation w.r.t $S_n \sim \mathcal{D}$ and $S'_n \sim \mathcal{D}$
we have
\begin{equation}\label{eq:helper1}
    \begin{aligned}
    &\mathbb{P}_{S_n \sim \mathcal{D}}\big[\sup\limits_{f \in \mathcal{F}}|L_{S_n}(f)-L(f)|\geq t\big]\cdot\mathbb{P}_{S'_n \sim \mathcal{D}}\big[|L_{S'_n}(\widehat f)-L(\widehat f)|\leq \frac{t}{2}| S_n, \widehat f \triangleq \sup\limits_{f\in \mathcal{F}} |L_{S_n}(f) - L(f)|\big]\\
    &\leq \mathbb{P}_{S_n,S'_n \sim \mathcal{D}}\big[\sup\limits_{f \in \mathcal{F}}|L_{S_n}(f)-L_{S'_n}(f)|\geq \frac{t}{2}\big]
    \end{aligned}
\end{equation}
Next we lower bound $\mathbb{P}\big[|L_{S_n}(f)-L(f)|\leq \frac{t}{2}\big]$ for any fixed $f$. Note that the choice of $\widehat f$ is free of $S_n'$ since $S_n'$ and $S_n$ are iid samples.
Since $L(f,x,y) \in [-1,1]$ and so $Var(L(f,x,y)) \leq b^2/4$, using $nt^2\geq 2b^2$ we have that:
$$\mathbb{P}_{S_n \sim \mathcal{D}} \big[|L_{S_n}(f)-L(f)|\geq \frac{t}{2}\big] \leq \frac{4Var(L_{S_n})}{nt^2} \leq \frac{1}{2}$$

So we have $\mathbb{P}_{S'_n\sim \mathcal{D}}\big[|L_{S'_n}(\widehat f)-L(\widehat f)|\leq \frac{t}{2} | S_n,  \widehat f \triangleq \sup\limits_{f\in \mathcal{F}} |L_{S_n}(f) - L(f)|\big] \geq \frac{1}{2}$. Let $\mathcal{F}_{S_{2n}} \subseteq \{+1,-1\}^{2n}$ be projection of $\mathcal{F}$ on $S_n \bigcup S_n'$, combining this inequality with \eqref{eq:helper1} we have
\begin{equation}
\begin{aligned}
  &\mathbb{P}_{S_n \sim \mathcal{D}}[\sup\limits_{f \in \mathcal{F}}|L_{S_n}(f)-L(f)|\geq t]\\
  \leq &  2\mathbb{P}_{S_n,S_n' \sim \mathcal{D}}[\sup\limits_{f \in \mathcal{F}} |L_{S_n}(f)-L_{S_n'}(f)|\geq \frac{t}{2}]\\
  =  & 2\mathbb{P}_{S_n,S_n' \sim \mathcal{D}}[\sup\limits_{f(\boldsymbol x) \in \mathcal{F}_{S_{2n}}} |L_{S_n}(f)-L_{S_n'}(f)|\geq \frac{t}{2}]\\
    \leq & 2\mathbb{P}_{S_{2n}} \big[ \mathbb{P}_{S_n=S_{2n}-S'_n} [\sup\limits_{f(\boldsymbol x) \in  \mathcal{F}_{S_{2n}}} |L_{S_n}(f)-L_{S_n'}(f)|\geq \frac{t}{2} | S_{2n}]\big]\\
    \leq & 2\mathbb{P}_{S_{2n}} \big[ \bigcup_{f(\boldsymbol x) \in \mathcal{F}_{S_{2n}}} \mathbb{P}_{S_n=S_{2n}-S'_n} [ |L_{S_n}(f)-L_{S_n'}(f)|\geq \frac{t}{2} | S_{2n}]\big]\\
    \leq & 2\mathbb{P}_{S_{2n}} \big[ 2|\mathcal{F}_{S_{2n}}| e^{-\frac{nt^2}{4b^2}} | S_{2n}]\big]\\
     \leq & 2\mathbb{P}_{S_{2n}} \big[ \sup_{S_{2n}}|\mathcal{F}_{S_{2n}}| e^{-\frac{nt^2}{4b^2}} | S_{2n}]\big]\\
      \leq & 2\sup_{S_{2n}}|\mathcal{F}_{S_{2n}}|\mathbb{P}_{S_{2n}} \big[e^{-\frac{nt^2}{4b^2}} ]\big]\\
      \leq & 2 \mathcal{B}_{\mathcal{F}}(2n)e^{-\frac{nt^2}{4b^2}}
  \end{aligned}
\end{equation}

\begin{lemma}[Hoeffding's Inequality] \label{hoeffding}
Let $Z_1,...,Z_n$ be independent bounded random variables with $Z_i \in [a,b]$ for all $i$, where $-\infty <a< b <\infty$. Then for all $t>0$:

\begin{equation}\label{eq:hoeffding}
    \mathbb{P}(\frac{1}{n} |\sum_{i=1}^{n}Z_i-\mathbb{E}[Z_i]| \geq t ) \leq 2e^{-\frac{2n t^2}{(b-a)^2}}
\end{equation}
\end{lemma}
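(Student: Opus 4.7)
The plan is to prove Hoeffding's Inequality via the standard Chernoff-bound (exponential moment) method, bounding each tail separately and then applying a union bound to account for the absolute value and the resulting factor of $2$.

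First, I will reduce to mean-zero variables by defining $W_i = Z_i - \mathbb{E}[Z_i]$, so that each $W_i$ has mean zero and lies in an interval of length $b - a$. For any $s > 0$, Markov's inequality applied to $e^{s \sum_i W_i}$ gives
\begin{equation*}
\mathbb{P}\Bigl(\sum_{i=1}^n W_i \geq n t\Bigr) \leq e^{-snt}\, \mathbb{E}\bigl[e^{s \sum_i W_i}\bigr] = e^{-snt} \prod_{i=1}^n \mathbb{E}\bigl[e^{s W_i}\bigr],
\end{equation*}
where the factorization uses independence of the $W_i$.

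The main technical step — and the main obstacle — is Hoeffding's lemma: for a mean-zero random variable $W$ supported in an interval of length $b - a$, one has $\mathbb{E}[e^{sW}] \leq \exp\bigl(s^2 (b-a)^2 / 8\bigr)$. I would prove this by writing $W$ as a convex combination of the endpoints of its supporting interval, applying convexity of $x \mapsto e^{sx}$, and then analyzing the resulting log-moment-generating function $\psi(s) = \log \mathbb{E}[e^{sW}]$. Computing $\psi(0) = \psi'(0) = 0$ and bounding $\psi''(s) \leq (b-a)^2/4$ (via the fact that the variance of any random variable supported in an interval of length $L$ is at most $L^2/4$), a second-order Taylor expansion yields the claimed subgaussian bound.

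Plugging the lemma back into the Chernoff bound gives
\begin{equation*}
\mathbb{P}\Bigl(\tfrac{1}{n}\sum_{i=1}^n W_i \geq t\Bigr) \leq \exp\!\Bigl(-snt + \tfrac{n s^2 (b-a)^2}{8}\Bigr),
\end{equation*}
and I will optimize the right-hand side over $s > 0$ by choosing $s = 4t/(b-a)^2$, producing the one-sided bound $\exp\bigl(-2nt^2/(b-a)^2\bigr)$. Applying the identical argument to $-W_i$ controls the lower tail with the same bound, and a union bound over the two tails introduces the factor of $2$, yielding the claimed inequality~\eqref{eq:hoeffding}. The routine parts are the Chernoff/Markov step and the optimization in $s$; the only nontrivial ingredient is Hoeffding's lemma, which is the step where I would be most careful.
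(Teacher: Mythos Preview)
Your proposal is correct and follows the standard Chernoff-bound route to Hoeffding's inequality. Note, however, that the paper does not actually prove this lemma: it is stated as a classical result and invoked directly (e.g., in the proof of Lemma~\ref{Risk_truef}), so there is no paper proof to compare against---your argument simply supplies the omitted standard derivation.
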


\begin{lemma} \label{Risk_truef}
Consider a set of samples $S=\{(\boldsymbol{x}_1,y_1),...,(\boldsymbol{x}_n,y_n)\}$ drawn i.i.d. from the Noisy Generative Process and $f^*$ in the hypothesis class $\mathcal{F}$ satisfying $f(\boldsymbol{x}) \in \{-1,+1\}$. If: $$n\geq \frac{3\log(\frac{1}{\delta})}{\epsilon^2 \alpha^2}$$
Then we have with probability at least $1-\delta$ :
\begin{equation}\label{eq:empirical_f}
    \frac{1}{n} \sum_{i=1}^{n} \mathbbm{1} \{  f^*(\boldsymbol x_i) \neq y_i\} \leq \frac{1}{2}(1-\alpha)+\alpha\varepsilon
\end{equation}

\end{lemma}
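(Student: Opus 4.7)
The plan is to identify the expected $0/1$ loss of $f^{*}$ on a single fresh draw $(\boldsymbol{x}, y)$ from the Noisy Generative Process, and then pass from expectation to empirical average via Hoeffding's inequality (Lemma \ref{hoeffding}).

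For the expectation step, I would condition on the latent informativeness indicator $z$ introduced in Definition \ref{def:Non_real_Stochasticgen}. Whenever $z = 1$, the generative process sets $y = f^{*}(\boldsymbol{x})$ deterministically, so $\mathbbm{1}\{f^{*}(\boldsymbol{x}) \neq y\} = 0$; whenever $z = -1$, the label is an independent $\mathrm{Bernoulli}(1/2)$, so the conditional error equals exactly $1/2$. Marginalizing gives $\mathbb{E}[\mathbbm{1}\{f^{*}(\boldsymbol{x}) \neq y\}] = \tfrac{1}{2}\,\mathbb{P}[z = -1]$. In the blind regime $\lambda \equiv 1/2$ that matches the standing setup of Theorem \ref{thm_risk_fs} (informative samples are realizable, uninformative samples carry purely random labels), $\mathbb{P}[z = -1 \mid \boldsymbol{x} \in \Omega_{U}] = 1$ and $\mathbb{P}[z = -1 \mid \boldsymbol{x} \in \Omega_{I}] = 0$, hence $\mathbb{P}[z = -1] = 1 - \alpha$ and the per-sample expected error is exactly $(1-\alpha)/2$.

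For the concentration step, I would apply Lemma \ref{hoeffding} to the i.i.d.\ bounded variables $Z_{i} := \mathbbm{1}\{f^{*}(\boldsymbol{x}_{i}) \neq y_{i}\} \in [0,1]$ with tolerance $t = \alpha \varepsilon$. The one-sided Hoeffding tail yields a failure probability at most $\exp(-2 n \alpha^{2} \varepsilon^{2})$, and the hypothesized sample size $n \geq 3 \log(1/\delta) / (\alpha^{2} \varepsilon^{2})$ is comfortably sufficient to push this below $\delta$. Combining this with the expectation computation from the previous step delivers the claimed inequality $\tfrac{1}{n}\sum_{i} \mathbbm{1}\{f^{*}(\boldsymbol{x}_{i}) \neq y_{i}\} \leq \tfrac{1}{2}(1-\alpha) + \alpha \varepsilon$ with probability at least $1-\delta$.

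There is no serious obstacle here; the argument is a two-line computation plus a textbook concentration inequality. The only point worth flagging is that the clean form $\tfrac{1}{2}(1-\alpha)$ of the mean relies on the $\lambda \equiv 1/2$ blind-noise regime — in the general Definition \ref{def:Non_real_Stochasticgen} the mean would instead be $\tfrac{1-\alpha}{2} + \tfrac{(1-\alpha)\mathbb{E}_{U}[\lambda]}{2} - \tfrac{\alpha \mathbb{E}_{I}[\lambda]}{2} + \tfrac{\alpha}{4}$ — but this is exactly the setting in which Theorem \ref{thm_risk_fs} invokes this lemma, so no generalization is required.
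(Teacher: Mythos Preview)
Your proposal is correct and essentially matches the paper's proof: both compute the per-sample expectation $\mathbb{E}[\mathbbm{1}\{f^*(\boldsymbol{x})\neq y\}]=\tfrac{1}{2}(1-\alpha)$ (you condition on the latent $z$, the paper conditions on $\Omega_U$ versus $\Omega_I$, which coincide in the $\lambda\equiv\tfrac{1}{2}$ regime) and then apply Hoeffding's inequality with tolerance $t=\alpha\varepsilon$. Your explicit flag that the clean mean requires $\lambda\equiv\tfrac{1}{2}$ is a useful clarification the paper leaves implicit.
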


\textbf{Proof}:

The terms $\mathbbm{1}\{f(\boldsymbol{x}_i)\neq y_i\}\in \{0,1\}$ for $i \in [n]$ are a set of $n$ independent random variables since $(\boldsymbol x_i,y_i)$ are independent. So we can use Hoeffding's inequality (Lemma \ref{hoeffding}).
 By setting $b-a = 1$, $t = \alpha\epsilon$  in Equation~\ref{eq:hoeffding}, the choice of $n$ ensures that $\frac{-2nt^2}{(b-a)^2} \leq 6\log(\delta)$. 
Thus 
$$\mathbb{P}_{S_n \sim \mathcal{D}_{\alpha}}[| \frac{1}{n} \sum_{i=1}^{n} \mathbbm{1} \{  f^*(\boldsymbol x_i) \neq y_i\}  -\mathbb{E}_{\boldsymbol x,y} [ \mathbbm{1}\{f^*(\boldsymbol x) \neq y\}]|\geq \epsilon \alpha] \leq \delta .$$
where we have 
\begin{equation}
\begin{aligned}
& \mathbb{E}_{(\boldsymbol{x},y) \sim \mathcal{D}_\alpha} [\mathbbm{1}\{f^*(\boldsymbol{x}) \neq y\}] \\
= & \underbrace{ \mathbb{E}_{(\boldsymbol{x},y) \sim \mathcal{D}_\alpha} [\mathbbm{1}\{f^*(\boldsymbol{x}) \neq y\} | \boldsymbol{x} \in \Omega_{U}]\mathbb{P}[\boldsymbol{x} \in \Omega_{U}]}_{\frac{1}{2}\mathbb{P}[\boldsymbol{x} \in \Omega_{U}] \text{: Since $y$ is labeled by coin flipping in $\Omega_{U}$}} + \underbrace{\mathbb{E}_{(\boldsymbol{x},y) \sim \mathcal{D}_\alpha} [\mathbbm{1}\{f^*(\boldsymbol{x}) \neq y\}| \boldsymbol{x} \in \Omega_{I}]\mathbb{P}[\boldsymbol{x} \in \Omega_{I}]}_{0 \text{: Since $y$ is labeled  by $f^*$ with $0$ Bayes Risk in  $\Omega_{I}$}}\\
 = & \frac{1}{2}(1-\alpha)
\end{aligned}    
\end{equation}
This way we have: $$\mathbb{P}_{S_n \sim \mathcal{D}_{\alpha}}[|\frac{1}{n} \sum_{i=1}^{n} \mathbbm{1} \{  f^*(\boldsymbol x_i) \neq y_i\} -\frac{1}{2}(1-\alpha)|\geq \epsilon \alpha] \leq \delta .$$
which implies that Equation~\ref{eq:empirical_f} holds with probability at least $1-\delta$.  \qed

\begin{lemma}[Theorem 3.3 in ~\citep{bartlett2005local}]\label{Bartlett_Lemma}
Let $\mathcal{F}$ be a class of functions with range in $[a,b]$ and assume that there are some functional $T:\mathcal{H} \rightarrow \mathbb{R}^+$ and some constant $B$ such that for every $h\in \mathcal{H}$, $\boldsymbol{Var}(h) \leq T(h) \leq B \mathbb{E}[h]$. Let $\psi$ be a subroot function and $r^*$ be the fixed point of $\psi$. Assume the $\psi$ satisfies, for any $r\geq r^*$, 
    $$\psi(r) \geq B \mathbb{E}_{S_n}{\mathcal{R}_n\{ h\in\mathcal{H}: T(h) \leq r\}}$$
    Then with $c_1=704$ and $c_2=26$, for any $K> 1$ and every $t>1$ with probability at least $1-e^{-t}$,
    \begin{equation}\label{barlett_thm_p_side}
        \forall h \in \mathcal{H}, P[h] \leq \frac{K}{K-1}P_n h + \frac{c_1K}{B} r^* + \frac{t(11 (b-a) + c_2BK)}{n}
    \end{equation}
    Also with probability at least $1-e^{-t}$,
    \begin{equation}\label{barlett_thm_pn_side}
        \forall h \in \mathcal{H}, P_n[h] \leq \frac{K+1}{K}P h + \frac{c_1K}{B} r^* + \frac{t(11 (b-a) + c_2BK)}{n}
    \end{equation}
where $Pf =\mathbb{E}_{\boldsymbol x}[h(\boldsymbol x)]$  and $P_n = \frac{1}{n} \sum_{i=1}^n h(\boldsymbol x_i).$
\end{lemma}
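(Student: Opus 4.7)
The stated lemma is a verbatim restatement of Theorem 3.3 of Bartlett, Bousquet and Mendelson (2005), so the plan is to reproduce their proof, whose three ingredients are (i) Talagrand's concentration inequality for suprema of empirical processes, (ii) symmetrization to pass from the supremum to a Rademacher average, and (iii) a peeling argument that uses the sub-root property of $\psi$ to exchange the local radius $r$ for the global quantity $T(h)$.

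The first step is to fix a radius $r \geq r^\ast$ and control the supremum over the slice $\mathcal{H}_r := \{h \in \mathcal{H} : T(h) \leq r\}$. Since $\mathrm{Var}(h) \leq T(h) \leq r$ and $h$ takes values in $[a,b]$, Talagrand's inequality gives, with probability at least $1-e^{-t}$,
$$\sup_{h \in \mathcal{H}_r}\bigl(Ph - P_n h\bigr) \;\leq\; 2\,\mathbb{E}\sup_{h\in\mathcal{H}_r}(Ph - P_nh) + \sqrt{\tfrac{2rt}{n}} + \tfrac{C(b-a)t}{n}.$$
Standard symmetrization bounds the expected supremum by $2\,\mathbb{E}[\mathcal{R}_n\{h : T(h)\leq r\}]$, and the hypothesis on $\psi$ together with the sub-root property (so that $\psi(r)/\sqrt{r}$ is non-increasing, giving $\psi(r)\leq \sqrt{r/r^\ast}\,\psi(r^\ast)=\sqrt{r\,r^\ast}$ for $r\geq r^\ast$) converts this into a bound of order $\sqrt{r\,r^\ast}/B$. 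Applying AM--GM to both $\sqrt{r\,r^\ast}$ and $\sqrt{rt/n}$ yields, for any $\varepsilon>0$,
$$\sup_{h \in \mathcal{H}_r}\bigl(Ph - P_n h\bigr) \;\leq\; \varepsilon\, r + C_\varepsilon\bigl(\tfrac{r^\ast}{B} + \tfrac{t}{n}\bigr),$$
a \emph{slice-wise} bound in which the radius $r$ still appears.

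To remove this residual $r$ we use peeling. Fix a geometric factor $\lambda>1$ and partition $\mathcal{H}$ by the dyadic slices $\mathcal{H}^{(k)}:=\{h:\lambda^{k-1}r^\ast < T(h)\leq \lambda^k r^\ast\}$ for $k\geq 1$, together with the base slice $\{h:T(h)\leq r^\ast\}$. Apply the slice-wise estimate at radius $\lambda^k r^\ast$ on each $\mathcal{H}^{(k)}$, and union-bound over $k$ after adjusting $t\mapsto t + k\log\lambda$; the resulting penalty is absorbed into the $t/n$ term because the $k$-th slice controls only functions with $T(h)\geq \lambda^{k-1}r^\ast$, so $\varepsilon\lambda^k r^\ast \leq \lambda\varepsilon T(h)$. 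This gives, uniformly in $h\in\mathcal{H}$ on a single event of probability at least $1-e^{-t}$,
$$Ph - P_nh \;\leq\; \lambda\varepsilon\, T(h) + C'\bigl(\tfrac{r^\ast}{B} + \tfrac{t}{n}\bigr).$$
Now invoke the variance--mean comparison $T(h)\leq B\, Ph$ to move $T(h)$ to the left side: choosing $\lambda\varepsilon B = 1/K$ rearranges the inequality into $Ph\leq \tfrac{K}{K-1}P_nh + \tfrac{c_1K}{B}r^\ast + \tfrac{t(11(b-a)+c_2BK)}{n}$. The companion bound \eqref{barlett_thm_pn_side} is obtained by running the same argument with the roles of $P$ and $P_n$ interchanged (Talagrand's inequality controls both $\sup(Ph-P_nh)$ and $\sup(P_nh-Ph)$).

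The main obstacle is bookkeeping: obtaining the stated explicit constants $c_1=704$ and $c_2=26$ requires a careful joint optimization of the peeling ratio $\lambda$, the AM--GM parameter $\varepsilon$, and the Talagrand constants. The conceptual steps are routine given the machinery, but tracking the numerical constants through the peeling union bound is the delicate part and is the reason the original proof in Bartlett--Bousquet--Mendelson is presented in full.
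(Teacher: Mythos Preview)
The paper does not supply a proof of this lemma at all: it is stated purely as a citation of Theorem~3.3 in Bartlett, Bousquet and Mendelson (2005), and is invoked as a black box in the proofs of Theorems~\ref{main_local} and~\ref{thm_classifier_vc} and Lemma~\ref{local_version_chernoff}. Your sketch is therefore not comparable to anything in the paper itself, but it is a faithful outline of the argument in the original reference. One minor difference worth noting: the BBM proof does not use dyadic peeling over geometric shells as you describe, but instead works with the rescaled class $\{(r/\max(r,T(h)))\,h : h\in\mathcal{H}\}$, applies Talagrand once to this normalized class, and then transfers the bound back; your peeling variant is a standard and equally valid alternative that leads to the same conclusion with comparable constants.
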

                                    
\begin{lemma}\label{bound_r_star}
Given hypothesis class $\mathcal{F}: \mathcal{X}\rightarrow [-b,b]$ with some universal constant $b$ and its VC-dimension $d_{VC}(\mathcal{F}) < \infty$. Define following sub-root function with $B\geq 1$:  
$$
\psi(r) = 100 B \mathbb{E}_{S_n}\mathcal{R}_n \{ * \mathcal{F}, r \} + \frac{11 b^2 \log n}{n}.
$$
Let $r^*$ be fixed point of $\psi(r)$ so that $r^* = \psi(r^*)$,  suppose $n \geq d_{VC}(\mathcal{F})$, we have 
$$
r^* \lesssim \frac{ B^2d_{VC}(\mathcal{F}) \log (\frac{n}{d_{VC}(\mathcal{F})}) }{n}
$$
\end{lemma}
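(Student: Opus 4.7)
The plan is to control the local Rademacher complexity $\mathbb{E}_{S_n}\mathcal{R}_n\{\ast\mathcal{F},r\}$ in terms of $r$, $n$ and $d_{VC}(\mathcal{F})$, and then solve the fixed-point equation $\psi(r^\ast)=r^\ast$ by inverting the resulting implicit bound. The key observation is that once one shows
$$\mathbb{E}_{S_n}\mathcal{R}_n\{\ast\mathcal{F},r\} \lesssim \sqrt{\frac{r\, d_{VC}(\mathcal{F})\,\log(n/d_{VC}(\mathcal{F}))}{n}},$$
the defining equation $r^\ast = 100B\cdot\mathbb{E}\mathcal{R}_n\{\ast\mathcal{F},r^\ast\} + 11b^2\log n /n$ becomes a quadratic-type inequality in $\sqrt{r^\ast}$ whose solution is of the claimed order.

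First I would reduce the local Rademacher complexity to a covering-number quantity via Dudley's entropy integral: for any class $\mathcal{H}$ of functions bounded by $b$ with $\mathbb{E}[h^2]\le r$,
$$\mathbb{E}_{S_n}\mathcal{R}_n\{\mathcal{H},r\} \lesssim \inf_{\alpha\ge 0}\Bigl\{\alpha + \frac{1}{\sqrt{n}}\int_{\alpha}^{\sqrt{r}}\sqrt{\log \mathcal{N}_2(\varepsilon,\mathcal{H},n)}\,d\varepsilon\Bigr\}.$$
Next I would bound the covering number of the star hull. By Haussler's theorem the $L_2$-covering number of the VC class satisfies $\mathcal{N}_2(\varepsilon,\mathcal{F},n)\le (Cb/\varepsilon)^{c\, d_{VC}(\mathcal{F})}$ for universal constants $C,c$, and since $\ast\mathcal{F}=\{\alpha f:\alpha\in[0,1],f\in\mathcal{F}\}$, a covering of $\mathcal{F}$ at scale $\varepsilon/2$ combined with an $\varepsilon/(2b)$ grid over $\alpha\in[0,1]$ yields $\mathcal{N}_2(\varepsilon,\ast\mathcal{F},n)\le (C'b/\varepsilon)^{c\,d_{VC}(\mathcal{F})+1}$, so the log-covering number is $O(d_{VC}(\mathcal{F})\log(b/\varepsilon))$.

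Plugging this into the Dudley integral with the natural choice $\alpha\asymp \sqrt{d_{VC}(\mathcal{F})/n}$ gives the displayed local Rademacher bound (the $\log(n/d_{VC}(\mathcal{F}))$ factor appears from evaluating the entropy integral up to $\sqrt{r}\le b$ and using $n\ge d_{VC}(\mathcal{F})$ to collapse logarithms). Substituting into the definition of $\psi$ and writing $x=\sqrt{r^\ast}$ yields an inequality of the form $x^2 \le 100B\,Cx\sqrt{d_{VC}(\mathcal{F})\log(n/d_{VC}(\mathcal{F}))/n}+11b^2\log n /n$, which by the standard quadratic-inequality argument implies $x \lesssim B\sqrt{d_{VC}(\mathcal{F})\log(n/d_{VC}(\mathcal{F}))/n}$ whenever the second term is dominated (which it is once $n\ge d_{VC}(\mathcal{F})$ and $B\ge 1$), giving the claim.

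The main obstacle I expect is handling the star hull correctly; a naive union of coverings over $\alpha$ can add an unwanted $\log(1/\varepsilon)$ factor, but it only contributes an additional constant to the VC-dimension inside the logarithm of the covering number, and hence does not affect the order of $r^\ast$. Routine bookkeeping on the Dudley integral and on the quadratic inequality then yields the stated rate.
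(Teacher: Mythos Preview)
Your approach is essentially the paper's own: bound the local Rademacher complexity of $\ast\mathcal{F}$ via Dudley's entropy integral, control the star-hull covering number by Haussler's bound plus a grid over the scaling parameter $\alpha$, and then solve the resulting quadratic in $\sqrt{r^\ast}$.

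The one technical step you elide, and which the paper makes explicit, is that Dudley's chaining integrates up to the \emph{empirical} $L_2$-diameter of the localized class, whereas the localization in $\psi$ is at the \emph{population} radius $Pf^2\le r$. The paper deals with this by first invoking Corollary~2.2 of Bartlett, Bousquet and Mendelson (2005) to obtain $\{f\in\ast\mathcal{F}:Pf^2\le r\}\subseteq\{f\in\ast\mathcal{F}:P_nf^2\le 2r\}$ with probability $1-1/n$ (the complementary event contributes an additive $b/n$), and only then applying the chaining bound with upper limit $\sqrt{2r^\ast}$. Once you insert this population-to-empirical conversion, the two arguments coincide line for line; without it, your stated Dudley inequality with upper limit $\sqrt{r}$ under the constraint $\mathbb{E}[h^2]\le r$ is not quite justified as written.
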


\begin{proof}
The proof largely follows from the proof in Corollary 3.7 in  ~\citep{bartlett2005local}. We include it here for completeness. Since $f$ is uniformly bounded by $b$, for any $r \geq \psi(r)$, Corollary 2.2 in ~\citep{bartlett2005local} implies that with probability at least $1-\frac{1}{n}$, $\{f\in * \mathcal{F}: P f^2 \leq r\} \subseteq \{f\in * \mathcal{F}: P_n f^2\leq 2r\}$. Let $\mathcal{E}$ be event that $\{f\in * \mathcal{F}: P f^2 \leq r\} \subseteq \{f\in * \mathcal{F}: P_n f^2\leq 2r\}$ holds, above implies 
\begin{equation}\label{eq_}
    \begin{aligned}
    &\mathbb{E}_{S_n} \mathcal{R}_n \{*\mathcal{F}, Pf^2\leq r\}\\
    \leq & \mathbb{P}[\mathcal{E}] \mathbb{E}_{S_n}[ \mathcal{R}_n \{*\mathcal{F}, Pf^2\leq r\}| \mathcal{E} ]+  \mathbb{P}[\mathcal{E}^{c}] \mathbb{E}_{S_n}[ \mathcal{R}_n \{*\mathcal{F}, P f^2\leq r\}| \mathcal{E}^{c} ] \\
    \leq & \mathbb{E}_{S_n}[ \mathcal{R}_n \{*\mathcal{F}, P_n f^2\leq 2r\} ] + \frac{b}{n}
    \end{aligned}
\end{equation}
Since $r^*=\psi(r^*)$, $r^*$ satisfies 
\begin{equation}\label{eq_r_star_upper}
r^* \leq 100 B \mathbb{E}_{S_n}\mathcal{R}_n\{ *\mathcal{F}, P_n f^2 \leq 2 r^*\} + \frac{b+ 11 b^2 \log n }{n}.    
\end{equation}

Next we leverage Dudley's chaining bound ~\citep{dudley2014uniform} to upper bound $ \mathbb{E}\mathcal{R}_n\{ *\mathcal{F}, P_n f^2 \leq 2 r^*\} $ using integral of covering number. We first bound the covering number of a star hull of $\mathcal{F}$. It follows from ~\citep{bartlett2005local} Corollary 3.7 that 
$$
\log \mathcal{N}_{2}(\varepsilon, \mathcal{F},\boldsymbol x_{1:n})\nonumber  \leq \log \bigg\{ \mathcal{N}_{2} \bigg(\frac{\varepsilon}{2}, \mathcal{F}, \boldsymbol x_{1:n} \bigg) \bigg( \ceil{\frac{2}{\varepsilon}} + 1\bigg) \bigg\}
$$
And covering number $\log \mathcal{N}_{2}(\varepsilon, \mathcal{F},n)$ can be bounded using VC-dimension of $\mathcal{F}$ using Haussler's bound on the covering number~\citep{haussler1995sphere,wellner2013weak}:
$$
 \log \mathcal{N}_{2} \bigg(\frac{\varepsilon}{2}, \mathcal{F}, n \bigg) \leq c_1 d_{VC} \log \bigg(\frac{1}{\varepsilon}\bigg)
$$
where $c_1$ is some universal constant.
Now we are ready to apply  the chaining bound, it follows from Theorem B.7 ~\citep{bartlett2005local} that
 \begin{equation}
     \begin{aligned}
     &\mathbb{E}_{S_n}[\mathcal{R}_n (*\mathcal{F}, P_n f^2\leq 2 r^*)]\\
     \leq & \frac{c_2}{\sqrt{n}} \mathbb{E}_{S_n} \int_{0}^{\sqrt{2 r^*}} \sqrt{ \log \mathcal{N}_2 (\varepsilon, *\mathcal{F}, \boldsymbol x_{1:n})} d\varepsilon \\
     \leq & \frac{c_2}{\sqrt{n}} \mathbb{E}_{S_n} \int_{0}^{\sqrt{2 r^*}} \sqrt{ \log \mathcal{N}_{2} \bigg(\frac{\varepsilon}{2}, \mathcal{F}, \boldsymbol x _{1:n} \bigg) \bigg( \ceil{\frac{2}{\varepsilon}} + 1\bigg)} d\varepsilon\\
     \leq& c_3 \sqrt{\frac{d_{VC}(\mathcal{F}) r^* \log (1/r^*)}{n}}\\
     \leq & c_3  \sqrt{ \frac{d^2_{VC}(\mathcal{F})}{n^2} + \frac{d_{VC}(\mathcal{F}) r^* \log (n/ed_{VC}(\mathcal{F}))}{n}}
     \end{aligned}
 \end{equation}
 Where $c_2$  and $c_3$ are some universal constants. Together with Equation~\ref{eq_r_star_upper} one can solve for $r^* \lesssim \frac{B^2d_{VC}(\mathcal{F}) \log (\frac{n}{d_{VC}(\mathcal{F})})}{n}$ 
\end{proof}

\begin{lemma}\label{local_version_chernoff}
Let $S_n = \{(\boldsymbol x_i,y_i)\}$ be i.i.d sample from Data Generative Process described in Definition~\ref{def:Non_real_Stochasticgen}. For every $\varepsilon>0$, there exist a $\delta>0$ such that if $n\gtrsim \frac{ d_{VC}(\mathcal{F}) \log(\frac{1}{\varepsilon})+ \log(\frac{1}{\delta}) }{ \varepsilon}$, following inequality holds simultaneously for  all $f\in \mathcal{F}$ with $d_{VC}(\mathcal{F})<\infty$,  with probability at least $1-\delta$ 
\begin{equation}\label{adv_chernoff_ineq_local}
     \frac{1}{n} \sum_{i=1}^{n}\mathbbm{1}\{f(\boldsymbol x_i) \neq f^*(\boldsymbol x_i)\} \lesssim  \mathbb{E}_{\boldsymbol x} \mathbbm{1}\{f(\boldsymbol x) \neq f^*(\boldsymbol x)\}+ \varepsilon  
\end{equation}
\end{lemma}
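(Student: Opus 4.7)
The plan is to adapt the finite-hypothesis-class argument of Lemma~\ref{adv_chernoff_lem} by replacing its union bound with a symmetrization + Sauer bookkeeping argument, while keeping the fast relative-deviation rate that comes from the Bernoulli variance bound $\boldsymbol{Var}(h)\le \mathbb{E}[h]$. First I would introduce the loss class $\mathcal{H}=\{h_f(\boldsymbol x)=\mathbbm{1}\{f(\boldsymbol x)\neq f^*(\boldsymbol x)\}: f\in\mathcal{F}\}$, and note that since $h_f$ is just $f$ XOR-ed with a fixed function, any set shattered by $\mathcal{H}$ is shattered by $\mathcal{F}$, so $d_{VC}(\mathcal{H})\le d_{VC}(\mathcal{F})$. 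Sauer's lemma (Lemma~\ref{lem:sauer}) then bounds the growth function by $\mathcal{B}_\mathcal{H}(2n)\le (2en/d_{VC}(\mathcal{F}))^{d_{VC}(\mathcal{F})}$.

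Second, I would symmetrize as in Lemma~\ref{Sym_lemma} to reduce the supremum over $\mathcal{H}$ to the supremum over the projection $\mathcal{H}|_{S_{2n}}$, whose cardinality is controlled by $\mathcal{B}_\mathcal{H}(2n)$. For each fixed $h\in\mathcal{H}|_{S_{2n}}$, rather than using Hoeffding I would apply Bernstein's inequality, which, using $\boldsymbol{Var}(h)\le \mathbb{E}[h]$, gives a one-sided tail of the form
\begin{equation*}
\mathbb{P}\!\left[\tfrac{1}{n}\sum_{i=1}^n h(\boldsymbol x_i) > \mathbb{E}[h] + t\right] \le \exp\!\left(-\tfrac{n t^2}{2\mathbb{E}[h]+2t/3}\right).
\end{equation*}
Taking a union bound over $\mathcal{H}|_{S_{2n}}$ then produces a relative-deviation estimate of Vapnik type:
\begin{equation*}
\sup_{f\in\mathcal{F}}\left\{\tfrac{1}{n}\sum_i h_f(\boldsymbol x_i) - \mathbb{E}[h_f]\right\}\lesssim \sqrt{\tfrac{\mathbb{E}[h_f]\,L}{n}} + \tfrac{L}{n}, \quad L = d_{VC}(\mathcal{F})\log\!\tfrac{n}{d_{VC}(\mathcal{F})} + \log\!\tfrac{1}{\delta},
\end{equation*}
which holds with probability at least $1-\delta$.

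Third, to convert this relative bound into the additive statement of the lemma, I would apply the elementary inequality $\sqrt{ab}\le \tfrac{1}{2}(a+b)$ with $a=\mathbb{E}[h_f]$ and $b=L/n$, giving
\begin{equation*}
\tfrac{1}{n}\sum_i h_f(\boldsymbol x_i) \le \tfrac{3}{2}\mathbb{E}[h_f] + C\tfrac{L}{n},
\end{equation*}
and then choosing $n\gtrsim (d_{VC}(\mathcal{F})\log(1/\varepsilon)+\log(1/\delta))/\varepsilon$ ensures $CL/n\le \varepsilon$, yielding the claimed bound (the $\log(n/d_{VC})$ inside $L$ collapses to $\log(1/\varepsilon)$ once $n$ is chosen as above, after a standard self-bounding inversion).

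The main obstacle is the bookkeeping in the peeling step: because the Bernstein tail depends on the unknown quantity $\mathbb{E}[h_f]$, one cannot directly union-bound a single threshold $t$ across all $f$. The cleanest way around this is either a stratified peeling argument (slicing $\mathcal{F}$ by dyadic level sets of $\mathbb{E}[h_f]$ and applying Bernstein on each slice) or an appeal to a ready-made Vapnik-style relative deviation inequality; either route is standard but requires care to track constants so that the final sample complexity matches the $(d_{VC}\log(1/\varepsilon)+\log(1/\delta))/\varepsilon$ rate rather than the slower $1/\sqrt{n}$ rate. An alternative cleaner path is to apply the local Rademacher machinery of Lemma~\ref{Bartlett_Lemma} directly to $\mathcal{H}$ with $T(h)=\mathbb{E}[h^2]=\mathbb{E}[h]$ and $B=1$, and to bound the fixed point $r^*$ using Lemma~\ref{bound_r_star}; this side-steps peeling altogether at the cost of invoking heavier machinery.
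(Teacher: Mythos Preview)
Your proposal is correct, and in fact the ``alternative cleaner path'' you mention at the end is precisely the route the paper takes: it applies Lemma~\ref{Bartlett_Lemma} (the $P_n$-side bound, Equation~\eqref{barlett_thm_pn_side}) directly to the class $\mathcal{H}=\mathbbm{1}\circ\mathcal{F}$ with $T(h)=\mathbb{E}[h]=\mathbb{E}[h^2]$ and $B=1$, then bounds the fixed point $r^*$ via Lemma~\ref{bound_r_star} to obtain $r^*\lesssim d_{VC}(\mathcal{F})\log(n/d_{VC}(\mathcal{F}))/n$, and finishes by plugging in.

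Your primary approach---symmetrization plus Bernstein plus a peeling/stratification over level sets of $\mathbb{E}[h_f]$---is the classical Vapnik relative-deviation route and would also work, yielding the same fast rate. The trade-off is exactly as you describe: the hands-on argument is more elementary and self-contained but requires careful dyadic bookkeeping to handle the $\mathbb{E}[h_f]$-dependent Bernstein tail uniformly over $f$, whereas the local Rademacher machinery packages all of that into the sub-root fixed-point computation. The paper opts for the latter since Lemmas~\ref{Bartlett_Lemma} and~\ref{bound_r_star} are already stated and used elsewhere in the VC-class extension, so invoking them here costs nothing extra.
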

\begin{proof}
The proof invokes Lemma~\ref{Bartlett_Lemma}, in particular, the Equation~\ref{barlett_thm_pn_side}. 
Let $ \mathbbm{1} \circ \mathcal{F} : \mathbbm{1}\{f(\boldsymbol x) \neq f^*(\boldsymbol x), f\in\mathcal{F}\}$ be the hypothesis class $\mathcal{H}$ in Lemma~\ref{Bartlett_Lemma}. Since $f^*$ is a deterministic boolean function, it does not increase the number of points that can be shattered by $\mathcal{F}$. We have $d_{VC}( \mathbbm{1} \circ \mathcal{F}) \leq d_{VC}( \mathcal{F}).$
In particular, we choose the functional $T(\cdot) = \mathbb{E}[\cdot]$ and it is easy to verify that $$\boldsymbol{Var} (\mathbbm{1}\{f(\boldsymbol x) \neq f^*(\boldsymbol x)\}) \leq \mathbb{E}_{\boldsymbol x} [\mathbbm{1}\{f(\boldsymbol x) \neq f^*(\boldsymbol x)\}] = \mathbb{E}_{\boldsymbol x} [\mathbbm{1}^2\{f(\boldsymbol x) \neq f^*(\boldsymbol x)\}].$$
Let $\psi(r) =  100 \mathbb{E}\mathcal{R}_n \{ * \mathcal{F}, \mathbb{E}{f} \leq r \} + \frac{11  \log n}{n}.$ 
We have 
$$
 \mathbb{E}\mathcal{R}_n \{  \mathcal{F}, \mathbb{E}{f^2} \leq r \} \leq  \mathbb{E}\mathcal{R}_n \{  *\mathcal{F}, \mathbb{E}{f^2} \leq r \} \leq  100 \mathbb{E}\mathcal{R}_n \{ * \mathcal{F}, \mathbb{E}{f^2} \leq  r \} + \frac{11  \log n}{n} = \psi(r)
$$
Since local Rademacher averages of the star-hull is sub-root function, we know for all $r \geq r^*$, $\psi(r) \geq \psi(r^*) =r^*$. By Equation~\ref{barlett_thm_pn_side} in Lemma~\ref{Bartlett_Lemma} we have
\begin{equation}
    \frac{1}{n} \sum_{i=1}^{n}\mathbbm{1}\{f(\boldsymbol x_i) \neq f^*(\boldsymbol x_i)\} \leq 2 \mathbb{E}_{\boldsymbol x} \mathbbm{1}\{f(\boldsymbol x) \neq f^*(\boldsymbol x)\}+ 15 r^* + \frac{\log (1/\delta) +5200}{n} \varepsilon 
\end{equation}

Next we bound $r^*$. A direct application of Lemma~\ref{bound_r_star} show that $$r^* \lesssim \frac{d_{VC}(\mathbbm{1} \circ\mathcal{  F}) \log (\frac{n}{d_{VC}(\mathbbm{1} \circ \mathcal{  F})})}{n} \lesssim \frac{d_{VC}(\mathcal{  F}) \log (\frac{n}{d_{VC}(\mathcal{   F})})}{n}  .$$ The rest of the proof follows from plugging $r^*$ in Equation~\ref{barlett_thm_pn_side} and removing absolute constants.
\end{proof}

\section{More Experiment Results and Details}

\subsection{Experiment Setting and Implementation Details}
\label{append:exp_stting}

\textbf{Extension to multi-class.} Our method extends to the multi-class setting naturally. In the case of $K$-class classification, our selector loss remains the same while the predictor becomes $f(\boldsymbol{x}) = f(\boldsymbol{x})_{1:K}: \mathcal{X} \rightarrow \Delta ^K$  where $\Delta ^K$ is the $K$-simplex. Meanwhile, we use multi-class cross entropy loss to train the classifier. The pseudo-informative label becomes $\widehat{z}_i = \mathbbm{1}\{\argmax_{k \in [K]} f(\boldsymbol{x}_i)_k = y_i\}$.

\textbf{Hyper-parameters and Neural Network Architectures} We list the hyper-parameters and neural network architectures in Table~\ref{tab:real-hyper-params},\ref{tab:tinycnn} and \ref{tab:syn-hyper-params}.

\begin{table*}[!hbt]
\centering
\caption{Hyper-parameters used in real-world experiments. Notation follows original paper.}
    \begin{tabular}{l|l}
    \hline
    Baseline & Hyper-params\\
    \hline
    SLNet & $\alpha=0.5$, $\lambda=32$ \\
    DeepGambler & $o=2$ in Volatility and $o=1.5$ in LC and BUS, Pretrain Epoch=$10$ \\
    Adaptive & $\alpha=0.9$, Pretrain Epoch=$10$ \\
    Oneside & $\mu=0.5$, Pretrain Epoch=$10$\\
    ISA & $\beta=10$, $\Delta T=1$, Pretrain Epoch=$10$\\
    \hline
    \end{tabular} 
\label{tab:real-hyper-params}
\end{table*}

\begin{table*}[!hbt]
    \centering
    \caption{CNN Architecture.}
    \begin{tabular}{c|c|c}
        \hline
        Layer Name & Filter Size & Output Size \\
        \hline
        2d Convolution & 3$\times$3 & 32$\times$28$\times$28 \\
        ReLU & - & 1$\times$28$\times$28\\
        2d MaxPool & 2$\times$2 & 32$\times$14$\times$14\\
        2d Convolution & 3$\times$3 & 64$\times$14$\times$14\\
        ReLU & - & 64$\times$14$\times$14\\
        2d MaxPool & 2$\times$2 & 64$\times$7$\times$7\\
        Linear & - & 600\\
        Drop-out & - & - \\
        Linear & - & 120 \\
        Linear & - & 10\\
        \hline
    \end{tabular}
\label{tab:tinycnn}
\end{table*}
\begin{table*}[!hbt]
\centering
\caption{Hyper-parameters used in semi-synthetic experiments. Notation follows original paper.}
    \begin{tabular}{c|ll}
    \hline
    Dataset & Baseline & Hyper-params\\
    \hline
    \multirow{6}{*}{MNIST+Fashion} & SLNet & $\alpha_{slnet}=0.5$, $\lambda_{slnet}=32$ \\
    & DeepGambler & $o=1.5$, Pretrain Epoch=$5$ \\
    & Adaptive & $\alpha_{adaptive}=0.9$, Pretrain Epoch=$5$ \\
    & Oneside & $\mu=0.5$, Pretrain Epoch=$5$\\
    & ISA & $\beta=3$, $\Delta T=10$, Pretrain Epoch=$5$\\
    \hline
    \multirow{6}{*}{SVHN} & SLNet & $\alpha_{slnet}=0.5$, $\lambda_{slnet}=32$ \\
    & DeepGambler & $o=2.6$, Pretrain Epoch=$10$ \\
    & Adaptive & $\alpha_{adaptive}=0.9$, Pretrain Epoch=$10$ \\
    & Oneside & $\mu=0.5$, Pretrain Epoch=$10$\\
    & ISA & $\beta=10$, $\Delta T=1$, Pretrain Epoch=$10$\\    
     \hline
    \end{tabular}
\label{tab:syn-hyper-params}
\end{table*}

\subsection{Real-World Dataset Description}
\label{append:real_data_desc}

Oxford realized volatility (Volatility) \citep{volatilitydata} data set contains 5-min realized volatility of 31 stock indices from 2000 to 2022 and 155107 records in total. We use past volatility and returns as features, and the task here is to predict whether the next day volatility will be higher than current one, making it a binary classification task. We choose data from 2000 to 2020 as our training set and the rest for the testing (2020 Jan. to 2021 Oct.). This data set is used as an example to show our algorithm's possible application in selectively forecasting financial time series.

Breast ultrasound images (BUS) \citep{al2020BUS}. BUS contains 780 gray-scale breast ultrasound images among women in ages between 25 and 75 years old. These images have average size $500\times500$ pixels and can be categorized into 3 classes (487 benign, 210 malign and 133 healthy). We randomly choose 80\% of data as training dataset and the rest 20\% for testing. We are going to use this dataset as an example to show a possible application of our algorithm in automatic diagnosis. The machine can generate diagnosis result only on selected cases and deliver unsure cases to human expert for further investigation. 

Lending club\citep{LC2007} is a peer-to-peer lending company that matches borrowers with investors through an online platform. The lending club dataset (LC) contains loan data of its customers from 2007 to 2018. We compare different version of existing dataset of LC and remove all inconsistent and incomplete records. There are different status of loans record in this dataset, we keep 3 types of these record that consist the major part of the dataset (261442 charged off cases, 1035418 fully paid cases and 25757 late cases). We use 20\% of the dataset as the testing set. This example shows our algorithm can be use to grant loan given on different risk preference. 

Table~\ref{tab:orig} presents the original accuracy given by neural network on each of these 3 real-world data set. For all dataset, the neural network without using selection mechanism cannot give reliable inference. In mortgage granting, high risk like this can cause significant loss. In medical diagnosis which is healthy issue critical, a diagnosis with miss-diagnose rate as high as 15\% is not acceptable. However, if we apply our selective algorithm, we can see that the risk on all dataset sharply reduced. In BUS dataset, we can even almost perfectly guarantee the diagnosis result empirically for our most confident cases. These evidence are of practical interest. 

\begin{table*}[!th]
\centering
\caption{Real-world Dataset Description}
\begin{tabular}{lccccc}
    \hline
    Dataset & Category  & Input Feature & Num. Class & Train & Test \\
    \hline
    Volatility & Time Series & 2 & 2 & 143784 & 9525 \\
    BUS & Image & 3$\times$324$\times$324 & 3 & 624 & 156\\
    LC & Tabular & 1805 & 3 & 1058093 & 264524 \\
\hline
\end{tabular}
\label{tab:real_dataset}
\end{table*}
 
\begin{table*}[!th]
\caption{DNN Original Risk on Each Dataset}
\label{tab:orig}
\centering
    \begin{tabular}{cccc}
    \toprule
      & Volatility & BUS & LC  \\
    \midrule
    Risk & 0.340$\pm$0.002 & 0.152$\pm$0.008 & 0.392$\pm$0.001\\
    \bottomrule
    \end{tabular}
\end{table*}   

\subsection{Ablation Study Results}

\begin{figure}[!hbt]
    \centering
    \resizebox{\textwidth}{!}{
    \begin{tabular}{ccc}
    \includegraphics[width=0.22\textwidth]{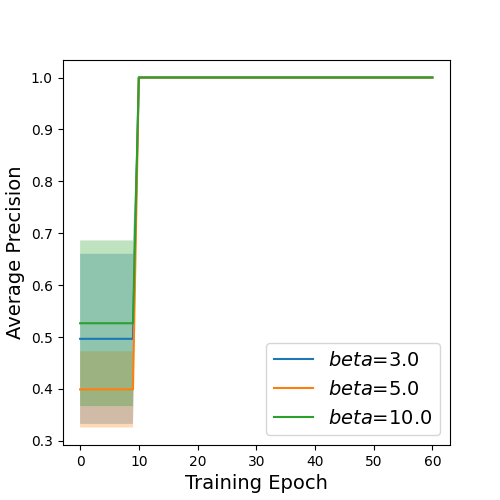} &  
    \includegraphics[width=0.22\textwidth]{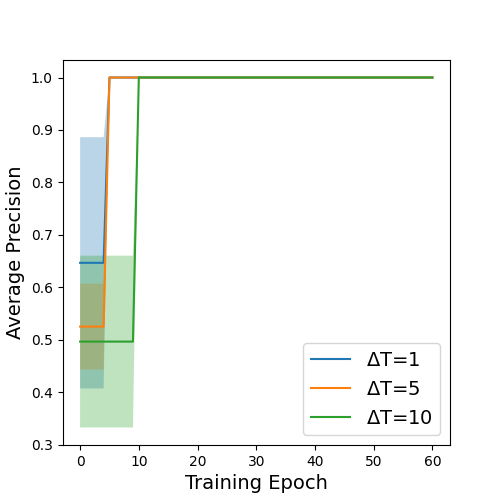} &
    \includegraphics[width=0.22\textwidth]{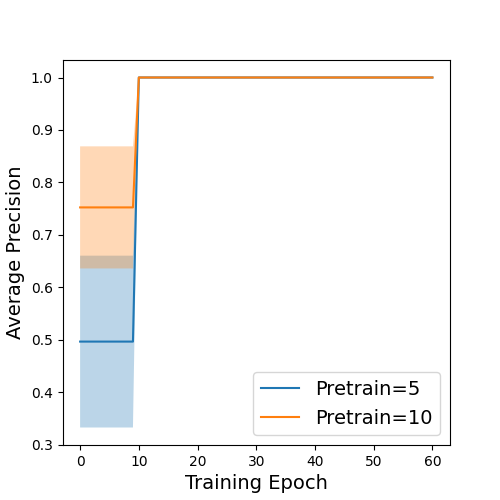}\\
    \includegraphics[width=0.22\textwidth]{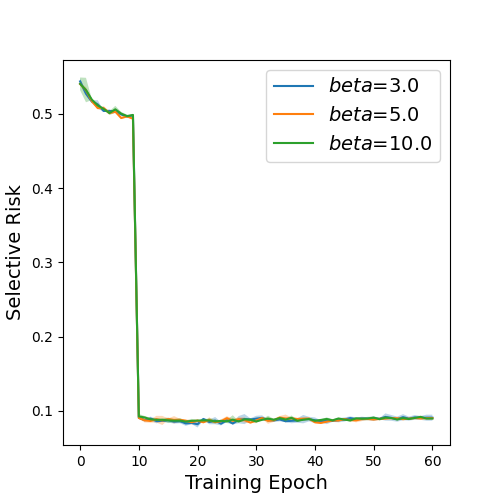} & 
    \includegraphics[width=0.22\textwidth]{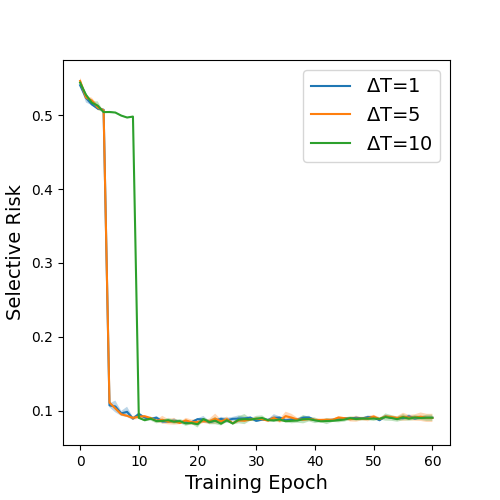} & 
    \includegraphics[width=0.22\textwidth]{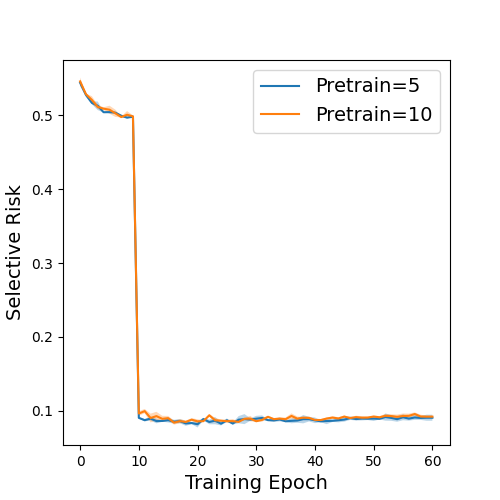}
    \end{tabular}}
    \caption{Ablation Study. First column shows the result of different choices of $\beta$. Second column shows different choices of $\Delta T$. Third column shows different choices of pre-training epochs. Upper panel presents results of average precision and bottom panel presents results of selective risk.}
    \label{fig:ablation}
\end{figure}

\section{Illustrative Example for Algorithm~\ref{alg:ISA}}

\begin{figure}[!htb]
\centering
\begin{tabular}{ccc}
\centering
     \includegraphics[width=0.3\textwidth]{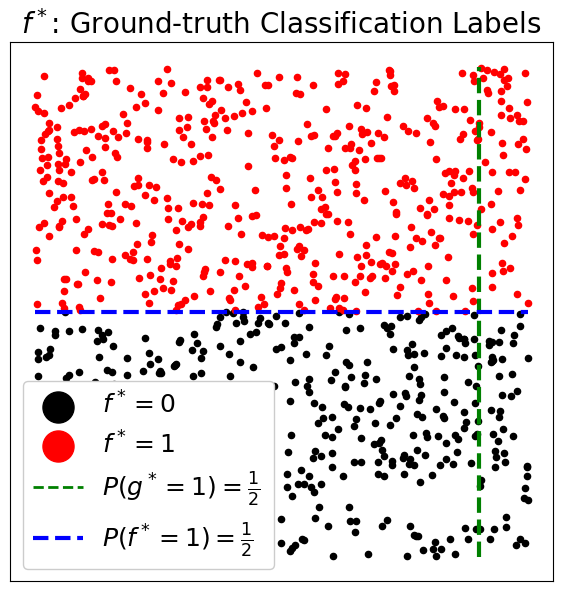}&
     \includegraphics[width=0.3\textwidth]{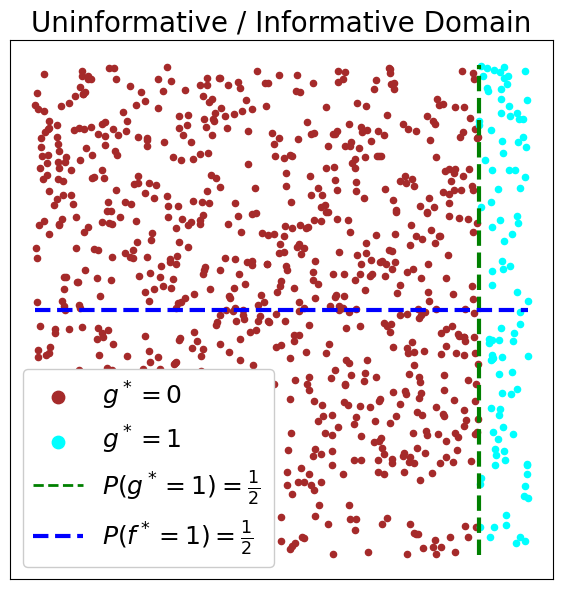}& \includegraphics[width=0.3\textwidth]{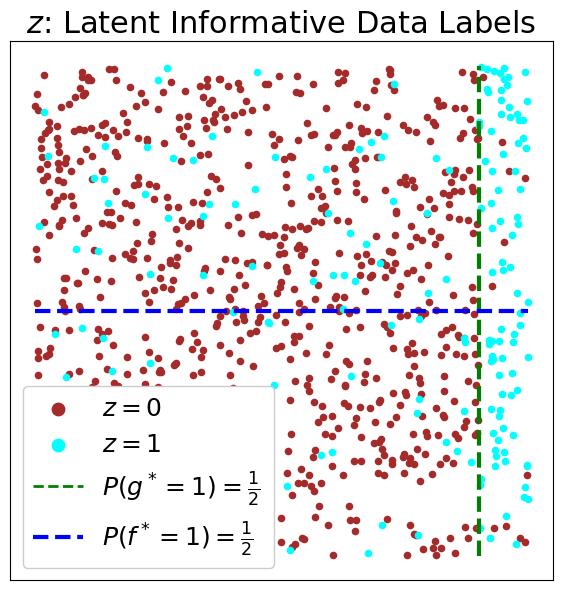}\\
     (a) & (b) & (c)\\  
     \includegraphics[width=0.3\textwidth]{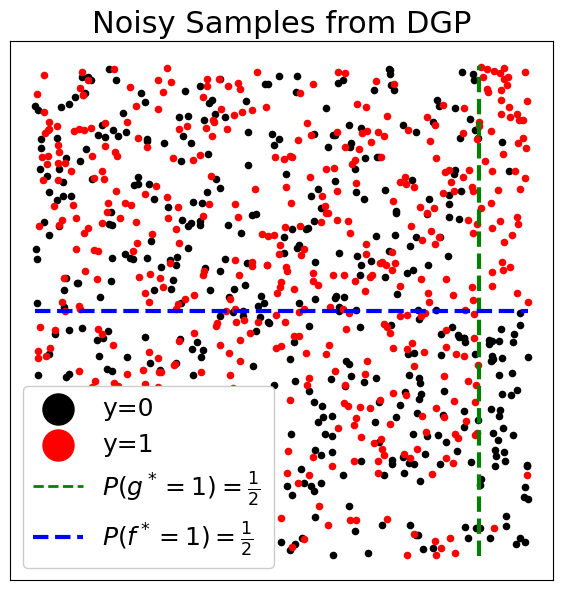}&
     \includegraphics[width=0.3\textwidth]{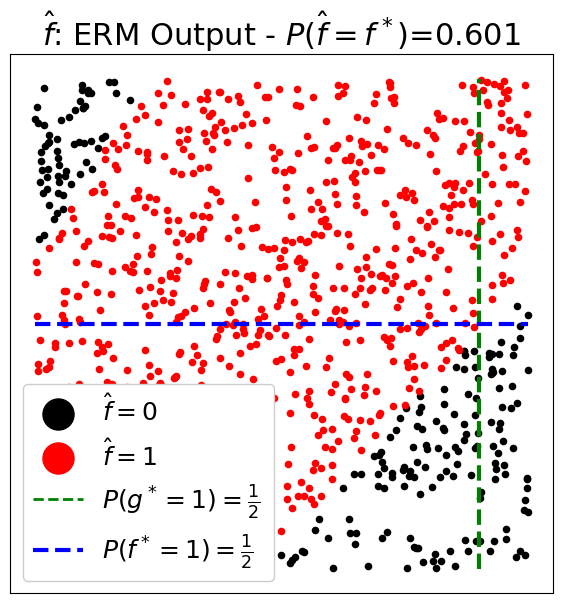}&
      \includegraphics[width=0.3\textwidth]{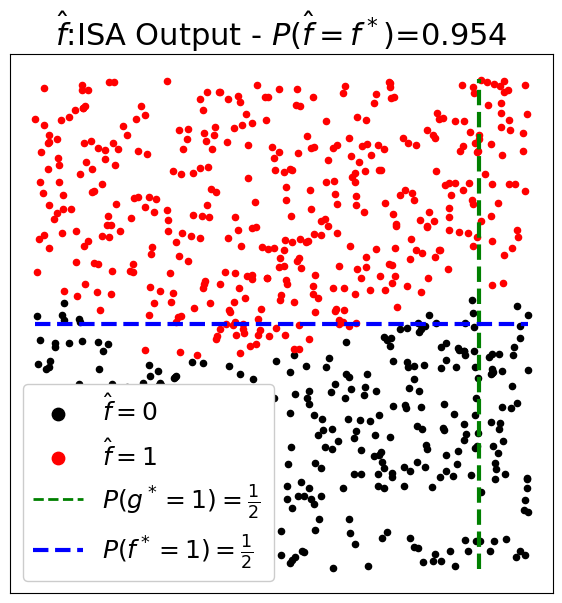} \\
     (d) & (e) & (f)\\       
\end{tabular}

\caption{Illustration of Algorithm~\ref{alg:ISA} when $\bar{\lambda}=0.4$ and $\alpha=0.1$. The middle blue horizontal line is the ground truth classifier $f^*$. The vertical green line is the boundary separates $\Omega_{I}$ and $\Omega_{U}$. The data generation process follows Definition~\ref{def:Non_real_Stochasticgen}. We use SVM with degree-2 polynomial kernel for both $\hat{f}$ and $\hat{g}$.  a) shows the ground truth classification labels. b) shows the region of $D_I$ and $D_U$. c) shows the latent informative label $z$. d) shows the realized samples from the noisy data generation process. e) shows the classification result given by ERM. f) shows the classification result given by ISA.
}\label{fig_isa}
\end{figure}

\begin{figure}[!htb]
\label{append:fig:g_recover}
\centering
\includegraphics[scale=0.6]{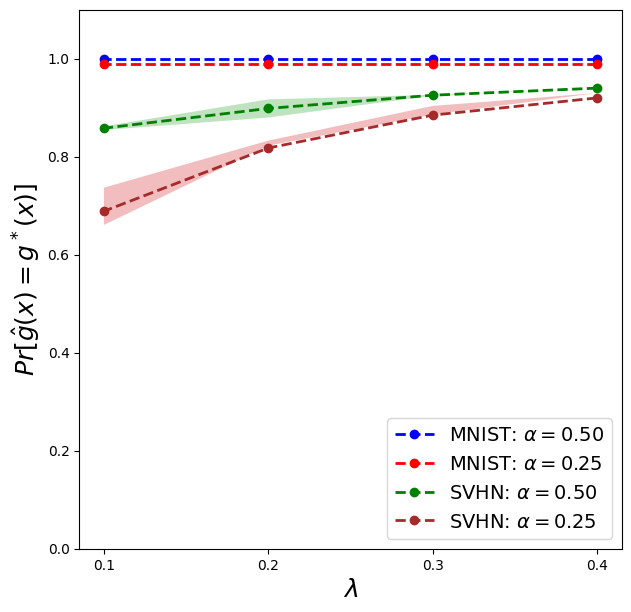}
\caption{Accuracy on recovering $g^*(x)$.}
\end{figure}

\end{document}